\newcommand{\set}[1]{\ensuremath{\mathcal #1}}
\newcommand{\separator}{
  \begin{center}
    \rule{\columnwidth}{0.3mm}
  \end{center}
}
 \def\11{{\textbf{1}}}
\def\Bl{\Bigl}
\def\Br{\Bigr}
\newcommand{\bprob}[1]{\mathbb{P}\Bl[ #1 \Br]}
\newcommand{\prob}[1]{\mathbb{P}[ #1 ]}
\newcommand{\beq}{\begin{eqnarray*}}
\newcommand{\eeq}{\end{eqnarray*}}
\newcommand{\beqn}{\begin{eqnarray}}
\newcommand{\eeqn}{\end{eqnarray}}
\newcommand{\bemn}{\begin{multiline}}
\newcommand{\eemn}{\end{multiline}}
\newtheorem{lemma}{Lemma}[section]
\newtheorem{theorem}{Theorem}
\newtheorem{corollary}{Corollary}
\newcommand\ie{{\em i.e.}}
\newcommand\eg{{\em e.g.}}
\newcommand\defeq{\mathrel{:=}}
\newcommand\etrue{e^t_{\text{\rm true}}}
\newcommand\Ttrue{T^t_{\text{\rm true}}}
\newcommand\Tttrue{T^{t+1}_{\text{\rm true}}}
\newcommand\diam{{\tt diam}}
\newcommand{\qnhatstar}{\hat{Q}^{\star}(n)}
\newcommand{\qnijhatstar}{\hat{Q}_{i,j}^{\star}(n)}
\newcommand{\enhatstar}{\hat{E}^{\star}(n)}
\newcommand{\qstar}{\hat{Q}^{\star}}
\newcommand{\estar}{\hat{E}^{\star}}
\title{Learning Data Dependency with Communication Cost}
\author{Hyeryung Jang$^\dag$, HyungSeok Song$^\dag$, and Yung
  Yi$^\dag$\thanks{$\dag$: Department of Electrical Engineering,
    KAIST, South Korea, e-mails: hrjang@lanada.kaist.ac.kr,
    hssong@lanada.kaist.ac.kr, yiyung@kaist.edu. Address for
    Correspondence: Yung Yi, KAIST 291, Daehak-ro, Yuseong-gu,
    Daejeon, 305-701, South Korea.}}
\date{}
\begin{document}
\maketitle

\begin{abstract}
  In this paper, we consider the problem of recovering a graph that
  represents the statistical data dependency among nodes for a
    set of data samples generated by nodes, which provides the basic
  structure to perform an inference task, such as MAP (maximum a
  posteriori). This problem is referred to as structure learning.
  % , referred to as structure
  % learning, of recovering a graph that represents the statistical data
  % dependency among nodes, which provides the basic structure to
  % perform an inference task, such as MAP (maximum a posteriori).
  When nodes are spatially separated in different locations, running
  an inference algorithm requires a non-negligible amount of message
  passing, incurring some communication cost. We inevitably have the
  trade-off between the accuracy of structure learning and the cost we
  need to pay to perform a given message-passing based inference task
  because the learnt edge structures of data dependency and physical
  connectivity graph are often highly different. In this paper, we
  formalize this trade-off in an optimization problem which outputs
  the data dependency graph that jointly considers learning accuracy
  and message-passing costs. We focus on a distributed MAP as the
  target inference task due to its popularity, and consider two
  different implementations, {\bf ASYNC-MAP} and {\bf SYNC-MAP} that
  have different message-passing mechanisms and thus different cost
  structures. In {\bf ASYNC-MAP}, we propose a polynomial time
  learning algorithm that is optimal, motivated by the problem of
  finding a maximum weight spanning tree. In {\bf SYNC-MAP}, we first
  prove that it is NP-hard and propose a greedy heuristic. For both
  implementations, we then quantify how the probability that the
  resulting data graphs from those learning algorithms differ from the
  ideal data graph decays as the number of data samples grows, using
  the large deviation principle, where the decaying rate is
  characterized by some topological structures of both original data
  dependency and physical connectivity graphs as well as the degree of
  the trade-off,
  % \note{YY:check this is correct or not. the decaying rate is
  % characterized by some topological
  % structures of both original data dependency and physical
  % connectivity graph graphs,}
  which provides some guideline on how many samples are necessary to
  obtain a certain learning accuracy. We validate our theoretical
  findings through extensive simulations, which confirms that it has a
  good match.
\end{abstract}

%\vspace{-0.2cm}
\section{Introduction} \label{sec:intro}

%\subsection{Background}

In many online/offline systems with spatially-separated agents (or
nodes), a variety of applications involve distributed in-network
statistical inference tasks, which have been widely studied,
exploiting given knowledge of statistical dependencies among
agents. As one example, in sensor networks with multiple targets, each
sensor node measures the target-specific information in its coverage
area (\eg, position, direction, distance), which further has a
correlation among sensors. One well-recognized inference problem is a
data association which determines the correct match between
measurements of sensors and target tracks by maximum a posteriori
(MAP) estimation that is executed in a distributed fashion by
exchanging some information messages. Other examples include target
tracking, and detection/estimation in sensor networks
\cite{zhao2015energy, schiff2007robust, paskin2005robust,
  veeravali2012distinf} and de-anonymization, rumor/infection
propagation in social networks \cite{doost2014social, khan08kalman,
  ace11opinion, fu2017deanony}.

To solve these distributed in-network inference problems, it is of
crucial importance to understand how data from nodes are
inter-dependent.  To that end, a notion of the {\em graphical model}
has been one of the powerful frameworks in machine learning for a
succinct modeling of the statistical uncertainty, where each node in
the graphical models corresponds to a random variable and each edge
specifies the statistical dependency between random variables.
% the underlying statistical uncertainty of the network, \eg,
% correlation among the sensor data, should be understood.  Graphical
% model has been one of powerful frameworks for succinct modeling the
% statistical uncertainty by a graph structured model, where each node
% corresponds to a random variable each edge specifies the statistical
% dependency between random variables.
A wide variety of scalable inference algorithms on graphical models
via message-passing have been developed, of which examples include
belief propagation (BP) or max-product with a certain degree of
convergence and accuracy guarantees \cite{pearl14prob, globerson08map,
  wright05map, weiss01max}. This graphical model, which we also call
data dependency graph or simply data graph throughout this paper, is
not given a priori, and it should be learnt only by using a given set
of data samples from nodes.  This problem, referred to as {\em graph
  learning} or {\em structure learning} \cite{ravikumar2010l1,
  friedman2008lasso, abbeel06learning, meila00learning}, has been an
active research topic in statistical machine learning.

\begin{comment}
  For example, marginal probabilities can be efficiently obtained by
  belief propagation, where the marginal of each node can be expressed
  in terms of the its local information along with a set of so-called
  messages from its neighboring nodes on the graph, and it requires
  iterative exchange of messages for the completion.
\end{comment}

%\subsection{Main Contribution}

In this paper, for a collection of $n$ data sample vectors generated
by nodes, we study a problem of graph learning, which also considers
the communication cost incurred by the distributed in-network
inference algorithm being applied to the learnt data graph.  Physical
communication cost often becomes a critical issue, for example,
exerting a significant impact on the lifetime of networked
sensors. Clearly, there exists a trade-off between the amount of
incurred cost and the learning accuracy of the data
graph. Figure~\ref{fig:ex_phy_graph} illustrates the physical
connection among $7$ sensors, which differs from the exact data
dependency graph in Figure~\ref{fig:ex_data_graph}. The sensor nodes
$s1$ and $s6$ have non-negligible data dependency, requiring
message-passing when performing inference, but they are three hops
away from each other, incurring a large amount of communication
cost. In this case, one may want to sacrifice the estimation accuracy
a little bit and reduce communication cost by utilizing the data graph
as shown in Figure~\ref{fig:ex_opt_graph}. As done in many prior works
on graph learning \cite{meila00learning, chow1968tree,
  dasgupta1999poly, anandkumar2012mixture}, we restrict our attention
to tree-structured data graphs due to its simplicity, yet a large
degree of expressive powers and other benefits, \eg, some inference
algorithms such as BP over tree-structured data graphs become optimal.

We now summarize our contributions in what follows:
\begin{comment}
  Since two topologies are different, achieving accurate data
  dependency via estimating the dependency between data of sensor
  nodes $s1$ and $s6$ incurs at least three times more communication
  cost since the communication between $s1$ and $s6$ should pass along
  the nodes $s3, s5$. On the other hand, estimating the data
  dependency graph in Fig.~\ref{fig:ex_opt_graph} captures a trade-off
  between the estimation accuracy and the amount of incurred cost in
  that it estimates the dependency between nodes $(s1,s3)$ and
  $(s2,s4)$ instead of $(s1,s6)$ and $(s2,s7)$, respectively, so that
  it sacrifices some estimation accuracy but reduces some amount of
  incurred cost.
\end{comment}

\begin{figure}[t!]
  \centering
  \hspace{-0.26cm}
  \subfigure[Physical graph]
  {
    \includegraphics[width=0.32\columnwidth]{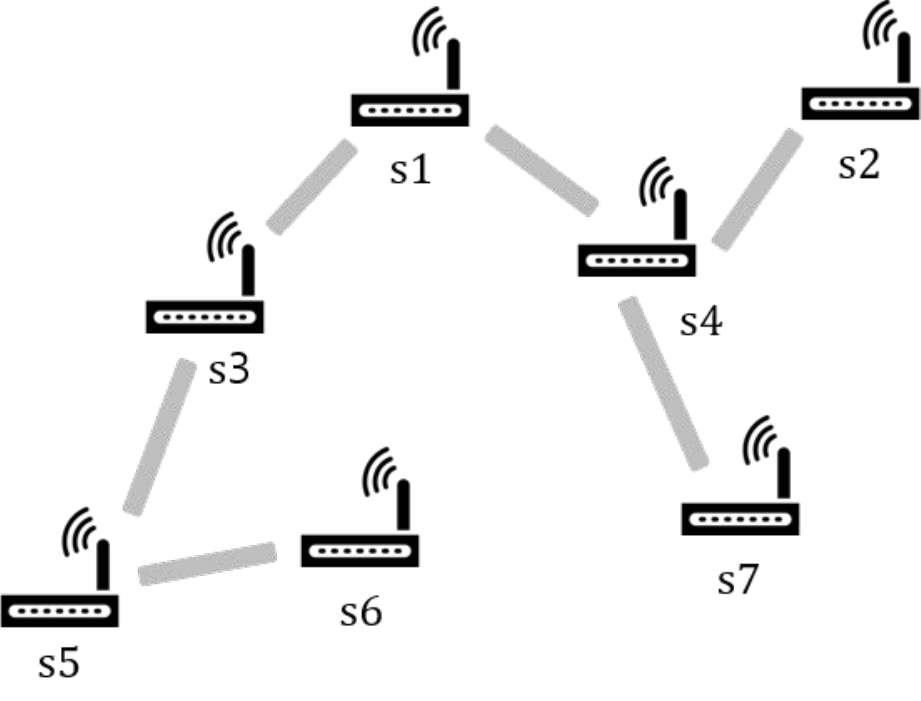}
    \label{fig:ex_phy_graph}
  } \hspace{-0.27cm}
  \subfigure[Data graph]
  {
    \includegraphics[width=0.32\columnwidth]{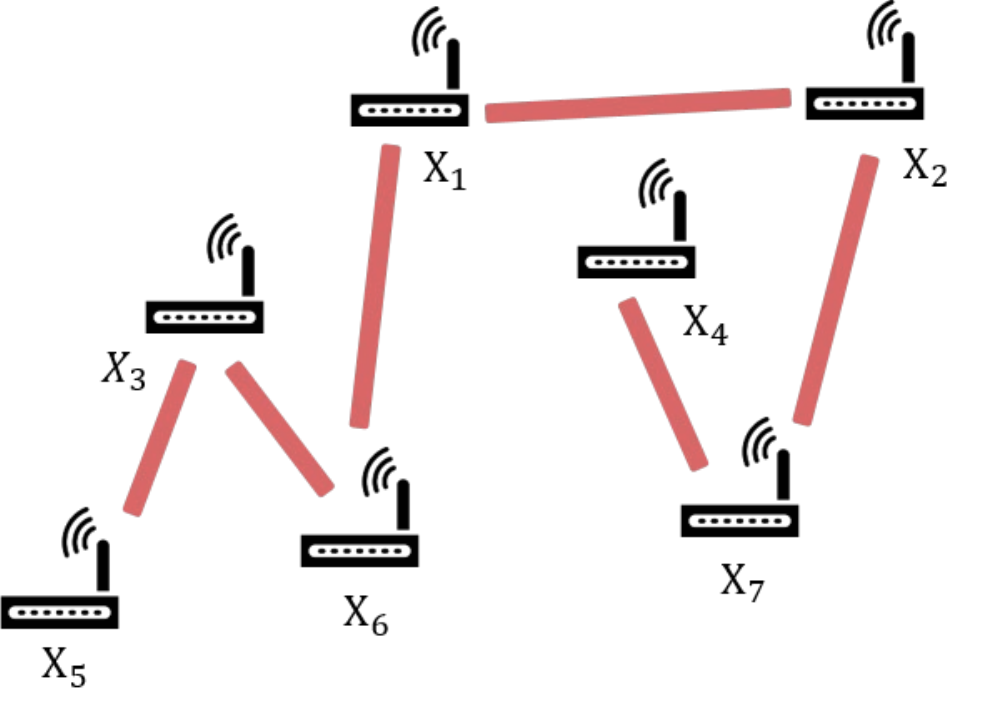}
    \label{fig:ex_data_graph}
  } \hspace{-0.27cm}
  \subfigure[Cost-efficient data graph]
  {
    \includegraphics[width=0.32\columnwidth]{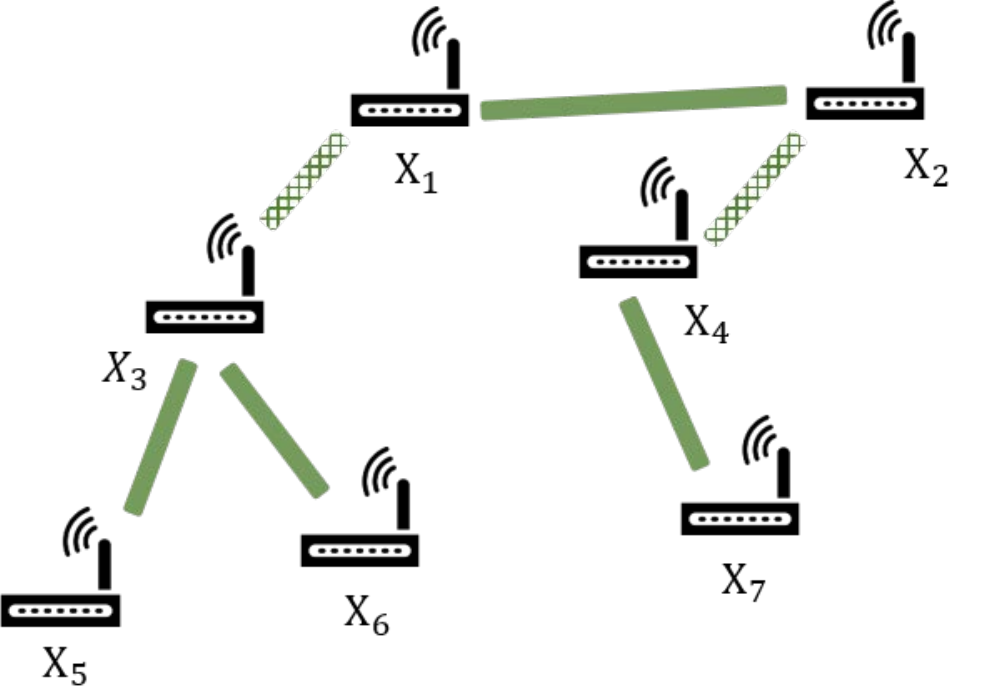}
    \label{fig:ex_opt_graph}
  }
  \hspace{-0.26cm}
%\vspace{-0.2cm}
  \caption{Network graphs with $7$ sensors. (a) physical
connectivity, (b) exact statistical dependency graph called data graph, (c) Data graph considering communication cost between  nodes.  \label{fig:ex_graph}}
%\vspace{-0.8cm}
\end{figure}

\smallskip
\begin{compactenum}[(a)]
\item We first formulate an optimization problem of learning data
  graph, having as an objective function the weighted sum of learning
  accuracy and the amount of cost that will be incurred by a
  distributed inference algorithm. Out of many possible inference
  algorithms, we consider the maximum a posteriori (MAP) estimator
  that is popular for many inference tasks, and two versions for the
  MAP implementation: (i) asynchronous and (ii) synchronous, which we
  call {\bf ASYNC-MAP} and {\bf SYNC-MAP}. These implementations have
  different patterns of passing messages, thus leading to different
  forms of communication costs, being useful to understand how
  distributed algorithms' cost affect the resulting data dependency
  graph.

%\smallskip
\item Next, for {\bf ASYNC-MAP} we develop a polynomial-time algorithm
  to find an optimal (cost-efficient) data graph that corresponds to
  simply finding a maximum weight spanning tree. This simplicity stems
  from the cost structure of {\bf ASYNC-MAP} that is characterized
  only by the sum of all `localized' edge costs. Being in sharp
  contrast to {\bf ASYNC-MAP}, for {\bf SYNC-MAP} we first prove that
  it is computationally intractable (\ie, NP-hard) in terms of the
  number of nodes, by reducing it to the problem of the Exact Cover by
  $3$-sets. The hardness is due to the fact that the cost structure of
  {\bf SYNC-MAP} depends on the diameter of the resulting tree which
  is the `global' information involving the entire topology. As a
  practical solution, we propose a polynomial-time greedy heuristic to
  recover a sub-optimal, but cost-efficient data graph.
  
%\smallskip    
\item Finally, for both {\bf ASYNC-MAP} and {\bf SYNC-MAP}, we
  quantify how the probability that the resulting (cost-efficient)
  data graph for a finite number of $n$ samples differs from the ideal
  data graph decays as $n$ increases, using the large deviation
  principle (LDP), as a form of $\exp(-n \cdot K).$ The error exponent
  $K$ is characterized for each of {\bf ASYNC-MAP} and {\bf SYNC-MAP}
  by some topological information of physical/data graphs, cost
  structure for both inference mechanisms, and the degree of the
  trade-off. We validate our theoretical findings through simulations
  over a 20-node graph for a variety of scenarios and show their good
  match with the simulation results.
  
\end{compactenum}
\smallskip

To validate our theoretical results, we perform numerical simulations
a pair of physical and data graphs with 20 nodes, where we
quantitatively analyze (i) how estimating a data graph considering
communication cost affects the resulting estimation for various values
of trade-off parameters between inference accuracy and cost, (ii) how
the estimation error decays as the same size increases.

\subsection{Related Work}

A variety of applications which involve distributed in-network
statistical inference tasks among spatially inter-connected agents or
sensors have been widely studied in many online/offline systems. In
sensor networks, where the knowledge of statistical dependencies among
sensed data is given, the tasks of target tracking \cite{chen05mtt,
  cetin06fusion, xue08tracking}, detection \cite{chamber07detect},
parameter estimation \cite{ihler05bp, schiff2007robust} are the
examples, see \cite{veeravali2012distinf} for a survey. In social
networks, where the underlying social phenomenon of interest such as
voting models, rumor/opinion propagation \cite{ace11opinion} evolves
over a given social interaction graph, the inference tasks of
distributed consensus-based estimation \cite{khan08kalman},
de-anonymization of community-structured social network
\cite{fu2017deanony} and distributed observability
\cite{doost2014social} are studied.% where the agents rely on a given
% inter-agent communication structure to perform inference.
Message-passing has manifested as an efficient procedure for inference
over graphical models that provide the framework of succinct model of
the statistical uncertainty of multi-agents. Examples include belief
propagation (BP) \cite{pearl14prob}, max-product \cite{weiss01max,
  globerson08map} and references therein.  They are known to be exact
and efficient when the underlying graphical model is a tree
\cite{pearl14prob, wright05map}. Recent research progress has been
made for scalable message-passing for general graphs, \eg, junction
tree \cite{wright03sum} and graphs with loops \cite{ihler05lbp}.
 
\smallskip
In the area of structure learning, several algorithms have been
proposed in the literatures to recover the statistical dependencies
from a set of data samples \cite{ravikumar2010l1, friedman2008lasso,
  abbeel06learning, meila00learning}. It is known that the exact
structure learning for general graphical models is NP-hard. The
research of structure learning for special graphical models includes:
maximum likelihood estimation (MLE) \cite{chow1968tree,
  meila00learning} for tree graphs, $\ell_1$ regularized MLE for
binary undirected graphs \cite{ravikumar2010l1}, convexified MLE for
Gaussian graphical models, known as Lasso \cite{friedman2008lasso}.
Theoretical guarantees for the learning accuracy have been established
as the number of data samples, \eg, on tree graph \cite{tan2011tree},
on binary undirected graphs \cite{ravikumar2010l1}, on a class of
Ising model \cite{bresler2015ising}, or on Bayesian network
\cite{zuk2012bay}. Our work differs from all of the above works in
that we consider physical communication cost incurred by some target
inference algorithms when learning the data dependency graph.
% is also based on the structure learning from a given data samples,
% however, our primary interest is to recover the statistical data
% dependencies which provides less amount of cost when performing an
% inference task via the standard message-passing schemes.

\smallskip

There exists an array of work that addresses the trade-off between
inference quality and cost in running distributed in-network inference
on the known data graph, which are summarized as two directions: {\em
  (i)} developing novel inference algorithms with less communication
of messages or {\em (ii)} constructing a new graphical model upon
which the existing distributed in-network inference algorithms are
performed with less communication resources. In {\em (i)}, the need of
conserving resources requires to propose new message-passing schemes
where the messages are compressed by allowing some approximation error
in message values \cite{cetin06fusion, kreidl06infcomm, ihler05lbp,
  dauwels07varmsg}, and/or some messages are censored (\ie, not to be
transmitted) \cite{chen05mtt}.
% if they have sufficient information for inference tasks
% \cite{chen05mtt}, so that those algorithms generate less amount of
% cost.
In {\em (ii)}, most of the related works focused on constructing a
junction tree that minimizes the inference cost
\cite{paskin2005robust}, building a data dependency structure upon
which message-passing is run energy-efficiently, where the
communications among all agents are assumed to be done in one-hop
\cite{zhao2015energy}, or optimizing the data dependency structure
formulated by a multi-objective problem of inference quality and
energy, assuming that the exact statistical dependencies are given as
a complete graph \cite{zhou2016secon}. While the main interest of this
area has been focused on characterizing the desirable dependency
structure for given complete knowledge of accurate data dependencies,
our work is motivated by the practical situation where one can just be
able to observe a finite number of data sample vectors of nodes, which
do not provide such a complete knowledge. Therefore, our interest lies
in {\em learning} the desirable data dependencies from a finite number
of data samples.

%%% Local Variables: 
%%% mode: latex
%%% TeX-master: "main"
%%% End: 

\section{Model and Preliminary} \label{sec:model-pre}

\subsection{Model} \label{sec:model}

\smallskip
\noindent{\bf \em Physical graph.} We consider a (connected) physical
network $G=(V,E_P)$ with a set of $d$ nodes $V$ and links $E_P$, where
each node corresponds to an agent such as a sensor or an individual,
and each link corresponds to a physical connectivity between two
nodes. For example, in sensor networks, when nodes have wireless
radios, then each link is established when two corresponding nodes
over the link reach each other within each radio's communication
range.

\smallskip
\noindent{\bf \em Data samples.} Each node $i \in V$ generates a
binary data, denoted by $x_i \in \set{X} \defeq \{0,1\}$\footnote{We
  assume a binary data for simplicity, and our results are readily
  extended to any finite set $\set{X}$.}, where we denote by
${\bm x} = [x_i]_{i \in V}$ the data vector of all nodes, or simply a
{\em sample}, \eg, target locations measured by all sensors. The
underlying statistical uncertainty of samples can be represented by a
joint distribution $P(\bm x)$ of a random vector
$X \defeq [X_i]_{i \in V} \in \set{X}^d$, called {\em data
  distribution,} where each random variable $X_i$ is associated to
each node $i \in V$. We often collect $n$ multiple samples
$({\bm x}^{1:n} = \{{\bm x}^1, {\bm x}^2, \ldots, {\bm x}^n\})$ in
order to infer what happens in the network by understanding the
inter-dependence of data generated by nodes. For instance, when
sensors measure the target location, then we infer the underlying
statistical correlation among sensors from the observed samples, to
estimate the true target location.

% As an example, when sensors measure the temperature, then we infer
% the underlying temperature data correlation among sensors from
% observed multiple temperature samples.

\smallskip
\noindent{\bf \em Data graph via graphical model.}
The underlying statistical dependency is often understood by the
framework of {\em graphical model}, which has been a popular tool for
modeling uncertainty by a graph structure, where each node corresponds
to a random variable and each edge captures the probabilistic
interaction between nodes. In particular, we model the data
distribution $P(\bm x)$ as an undirected graph $T = (V,E_D)$, which we
call {\em data graph}, which consists of the same set $V$ of nodes as
that in the physical graph and nodes' statistical dependencies
captured by an edge structure $E_D$ as: any two non-adjacent random
variables are conditionally independent given all other variables,
\ie, for any $(i,j) \notin E_D$,
\begin{align} \label{eq:pair-markov} 
P \left (x_i,x_j \mid x_{V \setminus \{i,j\}}\right) = P\left (x_i \mid x_{V \setminus \{i,j\}}\right ) \cdot P \left (x_j \mid
  x_{V \setminus \{i,j\}} \right ).
\end{align}
% for which we say that $P(\bm x)$ is Markov with respect to a graph $T$
% in the theory of graphical model.

In this paper, we limit our focus on the tree-structured data graph
(thus simply data tree), for which let $\set{T}$ and
$\set{P}(\set{X}^d)$ be set of all spanning trees and set of all tree
data distributions over $V$, respectively,
% \footnote{For a tree distribution
%   $Q \in \set{P}(\set{X}^d)$, we denote the corresponding tree graph
%   by $T_Q = (\set{V},\set{E}_Q)$, in general.
%   % Let $Q \in \set{P}(\set{X}^d)$ be a tree distribution, which is
%   % Markov with respect to a tree graph, denoted by
%   % $T_Q = (\set{V},\set{E}_Q)$ in general. The corresponding tree
%   % graph $T_Q$ is characterized by checking the Markov property in
%   % \eqref{eq:pair-markov} for every pair of nodes.
% }
\ie, we assume $T \in \set{T}$ and $P \in \set{P}(\set{X}^d)$.  Tree
data graph is a class of graphical models that has received
considerable attention in literatures \cite{chow1968tree,
  anandkumar2012mixture}, since it possesses the following
factorization property:
\begin{align} \label{eq:tree-dist}
P({\bm x}) = \prod_{i \in V}P_i(x_i) \prod_{(i,j) \in E_D} \frac{P_{i,j}(x_i, x_j)}{P_i(x_i)P_j(x_j)},
\end{align}
where $P_i$ and $P_{i,j}$ are the marginals on node $i \in V$ and edge
$(i,j) \in E_D$, respectively. Tree-structured data graph is known to
strike a good balance between the expressive power and the
computational tractability. In particular, the distribution $P$ in
\eqref{eq:tree-dist} is completely specified only by the set of edges
$E_D$ and their pairwise marginals. Thus, if $P$ has the factorization
property as in \eqref{eq:tree-dist}, in other words, if
$P \in \set{P}(\set{X}^d),$ there exists a unique tree $T=T(P)$
corresponding to $P.$ To abuse the notation, we henceforth denote by
$T(P)$ the unique data tree of a tree distribution $P.$
Figure~\ref{fig:ex_graph} shows an example of the physical graph and
two data graphs with $7$ nodes.

\subsection{Goal: Cost-efficient Learning of Data Graph}

\smallskip
\noindent {\bf \em Learning data graph: What and why?}
To understand the underlying data dependency \eqref{eq:tree-dist}, it
is enough to learn the structure of data graph $E_D$ from the observed
samples, which is known as the problem of {\em (data graph) structure
  learning.}  Formally, when we are given a set of i.i.d. $n$ samples
${\bm x}^{1:n}$ generated from an unknown (tree) data distribution
$P \in \set{P}(\set{X}^d)$ on a data tree $T$, a {\em structure
  learning} algorithm is a (possibly randomized) map $\phi$ defined
by:
\begin{align*}
  \phi: (\set{X}^d)^n \mapsto \set{T}.
\end{align*}
% taking $n$ samples ${\bm x}^{1:n}$ to a tree-structur estimator
% $\hat{T} := \phi({\bm x}^{1:n})$. 
The quality of this algorithm $\hat{T} = \phi({\bm x}^{1:n})$ is
evaluated by how ``close" $\hat{T}$ is to the original data graph $T.$

\smallskip
\noindent {\bf \em Distributed inference on data graph.}
One of the practical goals of estimating the data tree given a set of
data samples is to perform an inference task based on $T.$ Thus, in
many applications, primary interests are not focused on data itself
but rather on how to exploit the data dependency for reliable decision
making, such as target tracking, detection, estimation in sensor
networks and/or social networks, which involves statistical inference
about the networks described by a data graph. One example of inference
tasks is the MAP (maximum a posteriori) based estimation.
% \cite{veeravali2012distinf, cetin06fusion, chen05mtt, xue08tracking,
% chamber07detect}. A well-studied inference problem in sensor network
% applications is a data association for multi-target tracking
% \cite{chen05mtt}, \ie, determining which measurements from different
% sensors (\eg, position of targets in coverage area) correspond to
% the same target.  This involves finding the consistent
% correspondence between measurements of sensors and target tracks,
% which can be solved with maximum a posteriori (MAP) estimation.
Distributed in-network inference has been widely studied with the help
of various distributed algorithms on graphical models using {\em
  message-passing}. % potentially obtaining exponential decrease in
% computational complexity.  Message-passing based approaches are
% widely used for distributed inference, \eg, BP, junction-tree, Loopy
% BP, and mean-field approximation for inference problem of computing
% marginals.
In particular, for a specific inference problem, a message between two
nodes contains the information on influence that one node exerts on
another, which is obtained based on the value contained in neighboring
messages over an estimated data graph $\hat{T}$. One critical issue of
message-passing based inference algorithm is that {\em messages are
  often passed along the multi-hop path on the physical graph $G$},
which incurs some amount of {\em communication cost}. Then, assuming
that some inference algorithm would be run for the estimated data
graph $\hat{T}$, such a data graph learning must have the trade-off
between the accuracy of the learnt graph (\ie, how close the learnt
graph is to the original data graph) and the communication cost
generated by performing the distributed inference.

\smallskip
\noindent{\bf \em Goal: Cost-efficient data graph learning.}
Given an observed samples ${\bm x}^{1:n}$ from the unknown data
distribution $P$, our objective is to estimate a cost-efficient data
tree, which captures the trade-off between {\em (i) inference
  accuracy} and {\em (ii) communication cost for inference}. For tree
distributions, finding a distribution naturally gives rise to the
corresponding data tree, as mentioned earlier. Thus, it is natural to
find the tree distribution
$ \qnhatstar = \hat{Q}^\star({\bm x}^{1:n}, G, \Pi, \gamma)$ that is
the solution of the following optimization problem:
% To that end, we consider the following optimization problem: 
for a constant parameter $\gamma \geq 0$ and a fixed inference
algorithm $\Pi,$
\begin{align} \label{eq:CDG-learn}
\textbf{CDG(${\bm n}$)}: & \qnhatstar = \underset{Q \in
                                  \set{P}(\set{X}^d)}{\arg \min} ~ D(\hat{P}({\bm
    x}^{1:n}) \parallel Q)+ \gamma C \big(T(Q);G,\Pi \big ),
\end{align}
where
$\hat{P}({\bm x}^{1:n}) \defeq \frac{1}{n} \sum_{k=1}^n
\mathbbm{1}\{{\bm x}^k = {\bm x}\}$ is the empirical distribution of
${\bm x}^{1:n}$, $D(\cdot \parallel \cdot)$ is some distance metric
between two distributions, and $C(T(Q);G,\Pi)$ is the communication
cost paid by running an inference algorithm $\Pi$ with respect to the
data tree $T(Q)$ over the physical graph $G.$ Recall that $T(Q)$ is
the data tree for the tree distribution $Q.$ The value of $\gamma$
parameterizes how much we prioritize the communication cost compared
to the inference accuracy $D(\hat{P}({\bm x}^{1:n}) \parallel Q).$
Note that as $n \rightarrow \infty,$ $\hat{P}({\bm x}^{1:n})$
converges to the original data distribution $P,$ which requires to
solve {\bf CDG($\infty$)}.

\smallskip
Then, this paper aims at answering the following two questions:
\smallskip
\begin{compactenum}[(a)]
\item What are good data-tree learning algorithms that compute
  $T(\qnhatstar)$ by solving {\bf CDG($\bm n$)}? In
  Section~\ref{sec:alg}, we consider the MAP estimator as an applied
  inference algorithm, and their two implementations having different
  cost functions, for which we propose two cost-efficient learning
  algorithms.
\item How fast does $\qnhatstar$ converge to $\qstar(\infty)$ as the
  number of samples $n$ grows? We use the large deviation principle
  (LDP) to characterize the decaying rate of the probability that
  $T(\qnhatstar) \neq T(\qstar(\infty))$ for two different MAP
  implementations in Section~\ref{sec:alg}.
\end{compactenum}

\smallskip

In this paper, we use the popular KL divergence as a distance metric
$D(\cdot || \cdot)$ for inference accuracy, denoted by
$D_{\text{KL}}$, where for two distributions $P$ and $Q$,
$D_{\text{KL}}({P} \parallel Q) := \sum_{{\bm x} \in \set{X}^d}
{P}({\bm x}) \log \frac{{P}({\bm x})}{Q({\bm x})}.$ For notional
simplicity, we simply denote by $Q^\star := \qstar(\infty)$ the
solution of {\bf CDG($\infty$)} throughout this paper.

%%% Local Variables: 
%%% mode: latex
%%% TeX-master: "main"
%%% End:

\section{Cost-efficient Data Graph Learning
  Algorithms} \label{sec:alg}

In this paper, out of many possible inference tasks, we consider the
maximum a posteriori (MAP) estimation, which is popularly applied in
many applications such as data association for a multi-target tracking
problem in sensor networks, community-structured social network
de-anonymization problem in social networks \cite{fu2017deanony}.

\subsection{Distributed MAP and Cost} \label{sec:cost}

\smallskip
\noindent {\bf \em Distributed MAP on tree-structured data graph.}
The MAP estimator of some tree distribution $Q \in \set{P}(\set{X}^d)$
on its associated data tree $T(Q)=(V,E_Q)$ is given by:
\begin{align} \label{eq:map} {\bm x}^{\text{MAP}} \defeq
  \underset{{\bm x} \in \set{X}^d}{\arg\max} ~ \prod_{i \in V} \psi_i(x_i)
  \prod_{(i,j) \in E_Q} \psi_{i,j}(x_i,x_j),
\end{align}
where we use $\psi_i(x_i) = Q_i(x_i)$ and
$\psi_{i,j}(x_i,x_j) = \frac{Q_{i,j}(x_i,x_j)}{Q_i(x_i)Q_j(x_j)}$ for
simplicity. A standard message-passing algorithm for the distributed
MAP is a max-product algorithm, which defines a message
$m_{i \rightarrow j}^{(t)}(\cdot)$ from node $i$ to $j$ at $t$-th
iteration with $(i,j) \in E_Q$. Each node exchanges messages with
their neighbors on the data tree $T(Q)$, and these messages are
updated over time in an iterative fashion by the following rule: at
$t$-th iteration,
\begin{eqnarray} \label{eq:dist-msg} m_{i \rightarrow j}^{(t+1)}(x_j)
  \defeq \kappa \cdot \max_{x_i \in \set{X}} \Big[ \psi_i(x_i) \psi_{i,j}(x_i,x_j)
  \prod_{k \in \set{N}(i) \setminus \{j\}} m_{k \rightarrow i}^{(t)}(x_i)
  \Big],
\end{eqnarray}
with the normalizing constant $\kappa$ to make the sum of all message
values be $1$, and $\set{N}(i)$ denotes the neighboring nodes of $i$.

\smallskip
\noindent {\bf \em Communication cost of distributed MAP.}
The communication cost of MAP is paid, depending on the actual
protocol that specifies how to schedule message-passing
procedures. Two natural message-passing protocols studied in
literatures are: (a) asynchronous depth-first (unicast) update
\cite{elidan12residual} and (b) synchronous (broadcast) parallel
update \cite{pearl14prob}. Both protocols for a tree distribution $Q$
with its data tree $T(Q)$ have been shown to be consistent in that the
message update \eqref{eq:dist-msg} converges to a unique fixed point
$\{m^*_{i \rightarrow j}, m^*_{j \rightarrow i}\}_{(i,j) \in {E}_Q}$,
which defines the exact MAP assignment in \eqref{eq:map} as
$x^{\text{MAP}}_i = \kappa \cdot \psi_i(x_i) \prod_{k \in \set{N}(i)}
m_{k \rightarrow i}^*(x_i)$ for each $i \in V$. We denote the the cost
of a single message-passing over an edge $e = (i,j)$, under a given
physical graph $G$, as $c_e$ or $c_{i,j}$. Recall that the message
passing over $e=(i,j)$ may need to be done over a multi-hop path on
the physical graph $G.$ One simple example of $c_{i,j}$ is the
shortest path distance from node $i$ to $j$ in $G$. Then, both
protocols incur the communication cost as elaborated in what follows:

\smallskip
\begin{compactenum}[(a)]
\item {\em Asynchronous}: In the {\em asynchronous} protocol (simply
  {\bf ASYNC-MAP}), one node is arbitrarily picked as a root, and
  messages are passed from the leaves upwards to the root, then back
  downwards to the leaves. It involves a total $|E_Q|$ number of
  messages upon termination. Thus, the communication cost would be:
  \begin{align} \label{eq:ccost-async} C(T(Q); G,\textbf{ASYNC-MAP}) =
    \sum_{(i,j) \in E_Q} 2 c_{i,j}.
  \end{align}

\item {\em Synchronous:} In the {\em synchronous} protocol (simply
  {\bf SYNC-MAP}), at each iteration, every node sends messages to all
  of its neighbors. Then, since the diameter
  $\diam(T(Q))$\footnote{For a tree $T$ with $d$ nodes,
    $2 \le \diam(T) \le d-1.$} is the minimum amount of time required
  for a message to pass between two most distant nodes in $T(Q)$, this
  protocol involves at most $\diam(T(Q))$ iterations with total
  $2 |E_Q| \cdot \diam(T(Q))$ number of messages. Thus, we have the
  following cost:
  \begin{align} \label{eq:ccost-sync} C(T(Q); G,\textbf{SYNC-MAP}) =
    \sum_{(i,j) \in E_Q} 2 c_{i,j} \cdot \diam(T(Q)).
  \end{align}
\end{compactenum}

\smallskip

In the next subsection, we will use the above two cost functions for
two different learning algorithms for $\textbf{CDG(${\bm n}$)}$ in
\eqref{eq:CDG-learn} to estimate two cost-efficient data trees.

\subsection{Algorithm for Asynchronous MAP} \label{sec:async-alg}

Using the cost function for the asynchronous MAP in
\eqref{eq:ccost-async}, the original optimization problem
$\textbf{CDG(${\bm n}$)}$ is re-cast into:
\begin{align} \label{eq:CDG-learn-async} 
&\textbf{CDG-A(${\bm n}$)}:  \ \qnhatstar=\cr  
& \underset{Q \in \set{P}(\set{X}^d)}{\arg \min} ~
  D_{\text{KL}}(\hat{P}({\bm x}^{1:n}) \parallel Q) + \gamma
  \sum_{e \in E_Q} 2c_e.
\end{align}

We now describe {\bf ASYNC-ALGO} that computes $\qnhatstar$ in
\eqref{eq:CDG-learn-async} and thus estimates the cost-efficient data
tree $T(\qnhatstar)$ in Algorithm~\ref{alg:alg-async}.  As we see, the
algorithm is remarkably simple. Using given $n$ data samples, we
construct a weighted complete graph, where the weight for each edge is
assigned some combination of the mutual information of nodes $i$ and
$j$ with respect to the empirical distribution $\hat{P}$ obtained from
the data samples and the per-message cost, as in
\eqref{eq:weight-async}. Then, we run an algorithm that computes the
maximum weight spanning tree, \eg, Prim's algorithm or Kruskal's
algorithm, and the resulting spanning tree is the output of this
algorithm.

\DecMargin{1em}
\begin{algorithm}[t!]
  \caption{ASYNC-ALGO}
  \label{alg:alg-async}

  \KwIn{${\bm x}^{1:n}$: a set of $n$ samples, $\gamma$: the trade-off
    parameter, a physical graph $G=(V,E_P)$}
  
  \KwOut{Estimated tree $T = (V,E).$ }

  % \bigskip
  \medskip
  \hrule
  \medskip
  \begin{compactitem}
\item[\bf S0.]   $E = \emptyset$ and 
    for each possible edge $e \in V \times V,$ we initialize its
    weight by:
% $E = \emptyset$ and 
%     for each possible edge $e=(i,j) \in V \times V,$ we initialize its
%     weight by:    
  \begin{align} \label{eq:weight-async} w_e= I_e(\hat{P})-
    2\gamma \cdot c_e,
    \end{align}
    where $I_e(\mu)$ is the mutual information between two end-points
    of edge $e$ with respect to a given joint distribution $\mu.$
\smallskip
  \item[\bf S1.] Run a maximum weight spanning tree algorithm for $H,$
    and save its resulting spanning tree at $T = (V,E)$.

\smallskip
  \item[\bf S2.] Return $T.$
  \end{compactitem}
\smallskip
\end{algorithm}
\IncMargin{1em}

\smallskip
\noindent {\bf \em Correctness of ASYNC-ALGO.} We now present the
correctness of the above algorithm in the sense that we can obtain the
data tree corresponding to the optimal distribution formulated in
\eqref{eq:CDG-learn-async}, as explained in what follows: For some
tree distribution $Q$ (thus, satisfying the factorization property in
\eqref{eq:tree-dist}), we have:
\begin{align} \label{eq:tree-prop1} D_{\text{KL}}(\hat{P} \parallel Q)
  &= -H(\hat{P}) - \sum_{{\bm x} \in \set{X}^d} \hat{P}({\bm x}) \log
  Q({\bm x}) \cr & \geq -H(\hat{P}) + \sum_{i \in {V}}
  H(\hat{P}_i) - \sum_{(i,j) \in {E}_Q} I(\hat{P}_{i,j}),
\end{align}
where $H(\cdot)$ is the entropy, and the inequality holds when the
pairwise marginals over the edges of a fixed $E_Q$ are set to that of
$\hat{P}$, \ie, $Q_{i,j}(x_i,x_j) = \hat{P}_{i,j}(x_i,x_j)$ for all
$(i,j) \in E_Q$. Since the entropy terms are constant w.r.t.  $Q$, it
is straightforward that the structure of the estimator $\qnhatstar$ of
{\bf CDG-A}$({\bm n})$ in \eqref{eq:CDG-learn-async} is given by:
\begin{align} 
\enhatstar &:= \underset{E_Q: Q \in \set{P}(\set{X}^d)}{\arg\max}
  \sum_{e \in E_Q} I_e(\hat{P}) - 2\gamma \cdot c_e,  \label{eq:mwst-async} \\
    \qnijhatstar &= \hat{P}_{i,j}, \quad  \forall (i,j) \in \enhatstar.
\end{align}
Then, it is easy to see that \eqref{eq:mwst-async} requires us to find
the maximum weight spanning tree using
$I_e(\hat{P}) - 2\gamma \cdot c_e$ as the edge $e$'s weight, where the
standard maximum weight spanning tree (MWST) computation algorithm
runs in $O(d^2 \log d)$ time, where recall that $|V| =d.$

\subsection{Algorithm for Synchronous
  MAP} \label{sec:sync-alg}

Similarly to {\bf ASYNC-MAP}, using the cost in \eqref{eq:ccost-sync},
the original optimization problem $\textbf{CDG(${\bm n}$)}$ is re-cast
into:
\begin{align} \label{eq:CDG-learn-sync} 
&\textbf{CDG-S(${\bm n}$)}: \qnhatstar = \cr
& \underset{Q \in \set{P}(\set{X}^d)}{\min} ~
  D_{\text{KL}}(\hat{P}({\bm x}^{1:n}) \parallel Q) + \gamma \cdot \diam(T(Q))
  \sum_{e \in E_Q} 2c_{e}.
\end{align}

Following the similar arguments in Section~\ref{sec:async-alg}, the
structure of the above estimator of {\bf CDG-S}$({\bm n})$ in
\eqref{eq:CDG-learn-sync} is given by

\begin{align} 
\enhatstar &:= \underset{E_Q: Q \in \set{P}(\set{X}^d)}{\arg\max}
  \sum_{e \in E_Q} I_e(\hat{P}) - 2\gamma \diam(T(Q))\cdot
                  c_e,  \label{eq:mwst-sync}\\
    \qnijhatstar &= \hat{P}_{i,j}, \quad  \forall (i,j) \in \enhatstar.
\end{align}
We comment that this optimization is non-trivial in that the objective
function contains the diameter of the tree, which can be computed only
when the solution is fully characterized.

\smallskip
\noindent{\bf \em Hardness.} The key difference in the cost function
of {\bf SYNC-MAP} from {\bf ASYNC-MAP} is simply the existence of
$\diam (T(Q))$. However, this simple difference completely changes the
hardness of learning the optimal data tree in {\bf SYNC-MAP}, as
formally stated in the next Theorem.

\begin{theorem}[Hardness of {\bf CDG-S}$({\bm
    n})$] \label{thm:sync-np} For any parameter $\gamma \geq 0$,
  obtaining the optimal distribution $\qnhatstar$ in {\bf
    CDG-S}$({\bm n})$ and thus its associated data tree
  $T(\qnhatstar)$ is NP-hard with respect to the number of nodes.
\end{theorem}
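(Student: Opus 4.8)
The plan is to establish NP-hardness through a polynomial-time reduction from Exact Cover by 3-Sets (X3C): given a ground set $X$ with $|X| = 3q$ and a family $\mathcal{C} = \{C_1, \dots, C_m\}$ of $3$-element subsets of $X$, decide whether some subfamily of $\mathcal{C}$ partitions $X$. First I would cast CDG-S as a decision problem through its structural form \eqref{eq:mwst-sync}, namely: given mutual informations $I_e(\hat P)$, per-edge costs $c_e$, the parameter $\gamma$, and a threshold $\tau$, does some spanning tree $T=(V,E)$ satisfy $\sum_{e\in E} I_e(\hat P) - 2\gamma\,\diam(T)\sum_{e\in E} c_e \ge \tau$? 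Membership in NP is immediate: a spanning tree is a succinct certificate, and both its total weighted information and its diameter are computable in polynomial time. Everything therefore rests on hardness, and the entire difficulty is injected by the factor $\diam(T)$, which---unlike the additively separable \textbf{ASYNC-MAP} cost that reduced to a plain maximum-weight spanning tree---ties the objective to a global property of the whole tree.

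For the reduction I would introduce a node for every element $x \in X$ and every set $C_i \in \mathcal{C}$, together with a small number of auxiliary nodes (a root and, if needed, padding nodes) used to calibrate the diameter. I would design the physical graph $G$ so that the only inexpensive edges are the \emph{legal} ones---each set-node to the elements it contains, and the backbone edges joining set-nodes and the root---while every other pair of nodes is placed far apart, making its cost $c_e$ prohibitively large. Correspondingly I would construct data samples whose empirical distribution $\hat P$ places substantial pairwise mutual information exactly on these legal edges and essentially none elsewhere. With the weights and costs tuned this way, any near-optimal spanning tree is forced to route each element through a set-node that actually contains it, so that the set-nodes carrying elements correspond to a chosen subfamily of $\mathcal{C}$ covering $X$; because $q$ triples can cover $3q$ elements only when they are pairwise disjoint, such a covering subfamily is an exact cover precisely when it uses the minimum possible number of carrying set-nodes. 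The remaining (non-carrying) set-nodes and auxiliary nodes must still be attached somewhere, and I would arrange the backbone costs so that attaching them economically without inflating the tree's diameter is possible exactly when the carrying sets form an exact cover; any deficiency in the cover forces either an extra carrying set or a longer attachment chain, strictly raising the product $\diam(T)\sum_{e\in E} c_e$.

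The correctness argument would then prove the equivalence $\text{``X3C is a YES instance''} \iff \text{``the CDG-S objective reaches } \tau\text{,''}$ by calibrating $\gamma$ and $\tau$ so that the increase in the diameter--cost product incurred by a non-exact cover strictly exceeds any information that could be recovered by deviating from the legal skeleton. The step I expect to be the main obstacle is precisely this global accounting: I must verify that over \emph{all} spanning trees---including those that try to exploit the expensive off-skeleton edges or to rebalance the element load among set-nodes---the product $\diam(T)\sum_{e\in E}c_e$ attains a known minimum value if and only if an exact cover exists, with a provable gap to every non-cover configuration. A secondary but real subtlety is realizability: the weights $I_e(\hat P)$ are not free parameters but must arise from an actual empirical distribution over binary variables, so I would exhibit an explicit sample set (for instance, forcing the two endpoints of each legal edge to be strongly correlated while keeping all other coordinate pairs independent) and confirm that the induced pairwise mutual informations achieve the required separation between legal and illegal edges up to the tolerance demanded by the threshold. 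Once the diameter--cost gap and the weight realizability are both in hand, the reduction is polynomial and the claimed equivalence holds, proving that computing $T(\qnhatstar)$ for \textbf{CDG-S}$({\bm n})$ is NP-hard in the number of nodes.
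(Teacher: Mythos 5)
Your overall strategy coincides with the paper's: both reduce from X3C via a gadget with element nodes, subset nodes, and a few auxiliary nodes, with pairwise mutual informations and per-edge costs calibrated so that the optimal tree encodes an exact cover, and you correctly name the two places where the work lies (the global accounting of $\diam(T)\sum_e c_e$ over all spanning trees, and realizability of the weights from an actual distribution). But naming the obstacle is not clearing it, and your proposal stops exactly where the paper's proof starts doing real work. The missing idea is a concrete mechanism for pinning down the diameter of the optimizer. Because the effective edge weight $I_e(\hat P) - 2\gamma\,\diam(T(Q))\,c_e$ depends on the diameter of the very tree being chosen, the maximum-weight spanning tree computed under ``diameter $=k$'' weights need not have diameter $k$, and the optimizers for different assumed diameters can differ; without breaking this circularity you cannot reduce to a fixed-weight, BDMST-style problem at all. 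The paper's device is to adjoin a two-edge chain $Z_0$--$Z_1$--$Z_2$ whose mutual information $\alpha_1$ dominates every other edge weight (forcing both edges into any optimal tree), and then to choose $\delta_d = (\alpha_1-\alpha_2)/\sqrt{4(3s+q)}$ and $\kappa = \tfrac{9}{8}sI_1$ so that an explicit comparison of the optimal objective values for diameters $3$, $4$, and $5$ (plus a monotonicity argument disposing of $k=2$ and $k\geq 6$) shows the optimum is attained at diameter exactly $4$, and equals the specific value $2\alpha_1-(11s+3q)I_1-8\kappa$ if and only if the subset nodes attached to $Z_0$ form an exact cover. Your ``calibrate $\gamma$ and $\tau$'' language does not supply a substitute for this case analysis, so the claimed equivalence is asserted rather than proved.

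A secondary remark: your realizability concern is legitimate, but the paper handles it by constructing an explicit \emph{tree distribution} $\bar P$ (uniform Bernoulli marginals with pairwise correlations parameterized by $\delta_d$), under which the mutual information of any node pair is determined by its hop distance via $I_k \cong (2\delta_d)^{2k}$; this is what makes the nine-way edge classification and the resulting weight table possible. Producing an exact finite sample set realizing these marginals is not done in the paper either, so on that narrow point your sketch is, if anything, more scrupulous about the problem statement --- but it does not compensate for the missing diameter-control argument, which is the heart of the reduction.
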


\smallskip 
\noindent{\bf \em Proof sketch.} Due to space limitation, we present
the full proof of Theorem~\ref{thm:sync-np} in
Appendix~\ref{sec:proof-np}, and we only provide its sketch here. The
key step in proof is to reduce the {\bf CDG-S}$({\bm n})$ in
\eqref{eq:CDG-learn-sync} to the well-known NP-complete problem: {\em
  Exact Cover by $3$-sets} problem, which we simply call {\bf X3C}. In
\cite{garey02computers}, the bounded diameter minimum weight spanning
tree (BDMST) problem that finds the MWST with a diameter less than $k$
of $4 \leq k \leq |V|-2$ for a fixed edge weights is shown to be an
NP-hard problem, by reducing it to the {\bf X3C} problem. The main
technical challenge in {\bf CDG-S}$({\bm n})$ lies in that the edge
weights are {\em diameter-dependent}, via the form of
$I_e(\hat{P}) - 2\gamma \cdot \diam(T(Q)) \cdot c_e$ in
\eqref{eq:mwst-sync}, where the weights become smaller as the diameter
$\diam(T(Q))$ grows. Therefore, the optimal structure of $\enhatstar$
in \eqref{eq:mwst-sync} would be attained at the tree with small
diameter. If we consider a fixed diameter $\diam(T(Q)) = k$ of
$4 \leq k \leq |V|-2$ so that the edge weights are set by constant
values, then the problem becomes similar to the BDMST problem. To
prove NP-hardness of our problem, we first construct a specific tree
distribution $\bar{P}(\bm x)$ and the cost functions
$\{\bar{c}_{i,j}\}_{(i,j) \in V \times V}$, at which the optimal
solution of {\bf CDG-S}$(\bm n)$ should have a certain diameter, a
diameter of $4$ in our proof, then we show that {\bf CDG-S}$(\bm n)$
for the weights of diameter $4$ has the optimal solution with diameter
$4$ if and only if {\bf X3C} problem has a solution. From
understanding the reduction of BDMST problem, we construct
$\bar{P}(\bm x)$ and $\{\bar{c}_{i,j}\}_{(i,j) \in V \times V}$, under
which (i) the tree with diameter less than $3$ does not attain optimal
solution due to its structural limitations (to force the small
diameter), and (ii) the edge weights for the diameter larger than $5$
become too small to achieve optimal solution of {\bf CDG-S}$(\bm
n)$. The remaining technique to verify the reduction of our problem to
the {\bf X3C} problem follows the arguments in
\cite{garey02computers}. Then, we are done with the reduction.

\DecMargin{1em}
\begin{algorithm}[t!]
  \caption{SYNC-ALGO($\beta$)}
  \label{alg:alg-sync}

  \KwIn{${\bm x}^{1:n}$: a set of $n$ samples from $P$, $\gamma$: the
    trade-off parameter, a physical graph $G=(V,E_P),$ and a tunable
    parameter $\beta.$}
  
  \KwOut{Estimated tree $T = (V_S,E_S)$.}

  \medskip
  \hrule
  \medskip
  \begin{compactenum}
  \item[\bf S0.] $V_S = \emptyset, E_S = \emptyset,$ and
    for each possible edge $e \in V \times V,$ we initialize its
    weight by:
  \begin{align} \label{eq:weight-sync-init} w_e= I_e(\hat{P})-
    2\gamma \cdot c_e,
  \end{align}
  and initialize the edge set $E'$ by the set of all possible edges. 
  \end{compactenum}
\smallskip
  \Repeat{$V_S = V$}{

    \begin{compactenum}[\bf S1.]
\smallskip
\item Select an edge $e=(u,v) \in E'$ with the maximum weight, and
      update $V_S \leftarrow V_S \cup \{u,v\}$ and
      $E_S \leftarrow E_S \cup \{e\}$.

\smallskip
    \item Update $E'$ as the set of all edges $e=(i,j),$ such that
      $i \in V_S$ and $j \in V \setminus V_S,$ and set the weight of
      each edge $e=(i,j) \in E'$ as:
      \begin{align} \label{eq:weight-sync} w_e &= I_e(\hat{P}) -
        2\gamma \diam(T \cup \{e \}) \cdot c_e \cr & - \beta
        \frac{d}{|E_S|} \cdot K_e(T) - 2\gamma \cdot D(T) \cdot c_e,
      \end{align}
      where
      \begin{align} \label{eq:inc-old} K_e(T) &= \Big( \diam(T\cup \{e
        \}) - \diam(T) \Big) \cdot \sum_{e' \in E_S} 2\gamma \cdot
        c_{e'},
      \end{align}
      and
      \begin{align} \label{eq:diam-exp} D(T) = \sqrt{d} \cdot \left(
          1-\frac{\sqrt{|E_S|}}{\sqrt{d}} \right).
      \end{align}

    \end{compactenum}
  }
  \smallskip
  Return $T=(V_S,E_S)$.
\smallskip
\end{algorithm}
\IncMargin{1em}
%\vspace{-0.3cm}

\smallskip
\noindent{\bf \em Greedy algorithm.} Due to the above-mentioned
hardness, we propose a greedy heuristic algorithm that outputs the
tree structure denoted by $\hat{E}^\text{S}(n)$, called {\bf
  SYNC-ALGO($\beta$)}, as we describe in Algorithm~\ref{alg:alg-sync},
where $\beta$ is the algorithm parameter.  The overall algorithm
operates as follows:

\smallskip
\begin{compactenum}
\item[\bf S0.] Initialize the weight of each possible edge with some
  initial value.

\item[\bf S1.] Sequentially select the edge that has the maximum
  weight and add it to the temporary resulting tree.

\item[\bf S2.] Update the weight of each edge whose one end-point is
  in the current resulting tree $V_S$ and another end-point is not,
  and go to {\bf S1} until we handle all nodes.

\end{compactenum}

\smallskip

One of the central steps here is: first, we {\em dynamically} update
the weight of the candidate edges (\ie, the set $E'$) that we will add
and, second, which value is chosen as the weight is different from the
``one-shot'' weight assignment as done in {\bf ASYNC-ALGO}.  To
explain this intuition, we first note that from \eqref{eq:mwst-sync}
it is easy to see that the degree of contribution in terms of weight
by adding an edge $e \in E'$ to the existing resulting tree would be
re-expressed as:
\begin{align} \label{eq:origin}
  I_e(\hat{P}) - 2\gamma \diam(T \cup \{e\})) \cdot c_e - K_e(T),
\end{align}
where $K_e(T)$ is defined in \eqref{eq:inc-old}. Here, $K_e(T)$
corresponds to the change of the communication cost over the existing
edges in $T$, under the grown tree $T \cup \{ e\}.$ For example,
$K_e(T) = 0$ if the diameter of the grown tree does not change by
adding the edge $e,$ or
$K_e(T) = \sum_{e' \in E_S} 2\gamma \cdot c_{e'}$, if the diameter of
the grown tree increases by $1$.

In dynamically assigning the weight of the candidate edges in $E',$ we
do not use the value of \eqref{eq:origin}. Instead, as seen in
\eqref{eq:weight-sync}, (i) we use the expected diameter growth of the
tree, denoted by $D(T)$ in \eqref{eq:diam-exp}, and (ii) we use a
tunable parameter $\beta > 0$ to compensate for the impact of the
change in communication cost over the existing edges $K_e(T)$ in
\eqref{eq:inc-old}. In more detail, we use $D(T)$ in
\eqref{eq:diam-exp}, which captures the expected diameter growth of
the tree $T$ via the term
$\sqrt{d} (1 - \frac{\sqrt{|E_S|}}{\sqrt{d}})$, since the diameter of
a uniformly random spanning tree is known to be of the order
$\sqrt{d}$ in \cite{renyi67height}. We note that this term decreases
to $0$ as the tree becomes to a spanning tree from the term
$1- \frac{\sqrt{|E_S|}}{\sqrt{d}}$. Second, we consider the impact of
old weights over the existing edges in $T$, captured by $K_e(T)$ in
\eqref{eq:origin}, by controlling a scale of $\beta \frac{d}{|E_S|}$.

\smallskip
To summarize, these two modified choices of the weight are for
handling a probable sacrifice of the performance when using a vanilla
greedy method as in \eqref{eq:inc-old}, since the edge weight should
be modified suitably for the changed diameter on the way of tree
construction. We expect that these two engineerings play an important
role when the cost-efficient data graph is attained with a large
diameter, where the edges chosen in the begging phase of the procedure
(\ie, with a small diameter value) could exert much impact of
communication cost at the end of the procedure. Our greedy algorithm
runs in $O(d^4)$ times.

%%% Local Variables: 
%%% mode: latex
%%% TeX-master: "main"
%%% End:

\section{Estimation Error for Increasing Sample
  Size} \label{sec:analysis}

In this section, we provide the analysis of how the estimation error
probability decays with the growing number of samples $n,$ using the
large deviation principle (LDP).

\subsection{Estimation Error of ASYNC-ALGO} \label{sec:async-limit}

Clearly, when we use more and more data samples, $\enhatstar$
approaches to $\estar(\infty)$ that is the optimal edge structure
solving {\bf CDG-A}$({\bm \infty}).$ We are interested in
characterizing the following error probability of the event
$\set{A}_n$:
\begin{align} \label{eq:eevent-async} 
  \bprob{\set{A}_n(\bm x^{1:n}) := \big\{ \enhatstar \neq \estar(\infty) \big\} }. 
\end{align}
To characterize the probability in \eqref{eq:eevent-async} that is one
of the rare events, we use LDP that {\em rare events occurs in the
  most probable way.} To this end, we aim at studying the following
rate function $K=K(\gamma)$:
\begin{align} \label{eq:eexponent-async1} K(\gamma) \defeq \lim_{n
    \rightarrow \infty} - \frac{1}{n} \log \mathbb{P}(\set{A}_n(\bm
  x^{1:n})),
\end{align}
whenever the limit exists.

We now consider a simple event, called {\em crossover event}, as
defined in what follows:
% To characterize the error event and its exponent, we first consider
% a simpler event, called {\em crossover} event, which may lead to an
% error in estimation, and specify the error exponent related to the
% simple events.
Recall that {\bf ASYNC-ALGO} uses, for each edge $e$, the
weight\footnote{We interchangeably use $w_e(\hat{P})$ to denote the
  assigned weight of an edge $e$ in algorithms, with respect to the
  empirical distribution $\hat{P}$ from the given samples.} of
$w_e(\hat{P}) = I_e(\hat{P}) - 2\gamma c_e$ based on the empirical
distribution $\hat{P}.$ Then, consider two edges $e$ and $e'$ such
that the weight of $e$ exceeds that of $e'$ with respect to the {\em
  true} distribution $P,$ \ie, $w_e(P) > w_{e'}(P)$. We now define the
crossover event for two edges $e$ and $e'$ as:
\begin{align} \label{eq:cevent-async} 
  C_{n}(e,e') \defeq \big\{ w_e(\hat{P}) \leq w_{e'}(\hat{P}) \big\}.
\end{align}
As the number of samples $n \rightarrow \infty$, the empirical
distribution approaches to the true distribution, thus the probability
of the crossover event decays to zero, whose decaying rate which we
call {\em crossover rate} is defined as
$J_{e,e'} :=\lim_{n \rightarrow \infty} -\frac{1}{n}\log
\prob{C_n(e,e')}$. Using this definition of the crossover event, we
present Theorem~\ref{thm:eexponent-async} that states the decaying
rate of the estimation error probability as the number of data samples
$n$ grows.

\begin{theorem}[Decaying rate of {\bf ASYNC-ALGO}]
\label{thm:eexponent-async} 
For any fixed parameter $\gamma \geq 0$,
\begin{align}
  \lim_{n \rightarrow \infty} - \frac{1}{n} \log
  \mathbb{P}(\set{A}_n(\bm x^{1:n})) &= K(\gamma),
\end{align}
where
\begin{align} \label{eq:eexponent-async} K(\gamma) = \min_{e' \notin
    \estar(\infty)} ~ \min_{e \in {\Psi}(e'; \estar(\infty))} ~ J_{e,e'},
    \end{align}
    where
    $\Psi(e'=(i,j); \estar(\infty)) \defeq \{ v_1(=i), v_2, \cdots,
    v_l(=j) \}$ is the unique path between nodes $i$ and $j$, such
    that $(v_k, v_{k+1}) \in \estar(\infty)$ for
    $1 \leq k \leq l-1$,
    % where ${\Psi}(e'; \hat{E}_\infty^\star)$ is the unique path of
    % edges in $\hat{E}_\infty^\star$ \note{not clear: between two endpoints of
    % non-neighbor node pair $e'$,}
    and
    \begin{eqnarray} \label{eq:crate-async}
      J_{e,e'} = \left.
      \begin{cases}
        \underset{Q \in \set{P}(\set{X}^4)}{\inf} & \Big\{
        D_{\text{KL}}(Q \parallel P_{e,e'}) : w_e(Q) = w_{e'}(Q)
        \Big\}, \cr & \text{if} ~ \left\{Q \in \set{P}(\set{X}^4):
        w_e(Q) = w_{e'}(Q) \right\} \neq \emptyset, \cr \quad \infty, &
        \text{otherwise.}
      \end{cases}
       \right.
    \end{eqnarray}
    Moreover, we have the following (finite-sample) upper-bound on the
    error probability: for all $n = 1, 2, \ldots, $
    \begin{align} \label{eq:sbound-async}
      \bprob{\set{A}_n({\bm x}^{1:n})} \leq \frac{(d-1)^2(d-2)}{2}\binom{n-1+|\mathcal{X}|^{4}}{|\set{X}|^4-1}
      \exp(-n \cdot K(\gamma)).
    \end{align}
\end{theorem}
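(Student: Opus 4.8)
The plan is to adapt the large-deviations framework for maximum-weight spanning tree (MWST) recovery — the same machinery used in the analysis of Chow--Liu learning (cf.\ \cite{tan2011tree}) — to the cost-shifted weights $w_e(\hat{P}) = I_e(\hat{P}) - 2\gamma c_e$ that {\bf ASYNC-ALGO} employs. The argument proceeds in three layers: (i) a combinatorial reduction of the global error event $\set{A}_n$ to a union of \emph{local} crossover events $C_n(e,e')$; (ii) a single-event large-deviation rate for each $C_n(e,e')$ via Sanov's theorem; and (iii) a union-bound / dominant-term sandwich that simultaneously yields the finite-sample prefactor in \eqref{eq:sbound-async} and the exact exponent $K(\gamma)$.

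First I would establish the error decomposition. Since {\bf ASYNC-ALGO} returns an MWST of the empirical weights, I invoke the cycle property of spanning trees: for each non-tree edge $e' \notin \estar(\infty)$, the edge $e'$ together with the unique path $\Psi(e')$ in $\estar(\infty)$ forms the fundamental cycle of $e'$. Optimality of $\estar(\infty)$ for {\bf CDG-A}$(\infty)$ forces $w_e(P) > w_{e'}(P)$ for every $e \in \Psi(e')$, so $e'$ is the strict minimizer of its cycle under the true weights and is correctly excluded. The estimated tree therefore differs from $\estar(\infty)$ exactly when some previously excluded $e'$ enters, which by the cycle property happens iff its empirical weight overtakes that of some $e \in \Psi(e')$, i.e.\ iff $C_n(e,e')$ occurs. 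Since any change in the (size-$(d-1)$) edge set must add a former non-tree edge, this gives the identity
\begin{align*}
\set{A}_n = \bigcup_{e' \notin \estar(\infty)} \ \bigcup_{e \in \Psi(e')} C_n(e,e'),
\end{align*}
valid up to the measure-zero event of empirical ties.

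Next I would compute the crossover rate. The difference $w_e(\hat{P}) - w_{e'}(\hat{P})$ depends on $\hat{P}$ only through the empirical pairwise marginals on the (at most four) variables indexing $e$ and $e'$; write $\hat{P}_{e,e'}$ for this empirical distribution on $\set{X}^4$. Then $C_n(e,e')$ is precisely the event that $\hat{P}_{e,e'}$ lands in the closed set $\set{B} \defeq \{Q \in \set{P}(\set{X}^4) : w_e(Q) \le w_{e'}(Q)\}$, and since $\hat{P}_{e,e'}$ is the empirical measure of $n$ i.i.d.\ draws from $P_{e,e'}$, Sanov's theorem gives the rate $\inf_{Q \in \set{B}} D_{\text{KL}}(Q \parallel P_{e,e'})$. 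Because $w_e(P) > w_{e'}(P)$ places $P_{e,e'}$ strictly outside $\set{B}$, and $Q \mapsto I_e(Q)$ is continuous while $D_{\text{KL}}(\cdot \parallel P_{e,e'})$ is lower semicontinuous, the infimum is attained on the boundary $\{Q : w_e(Q) = w_{e'}(Q)\}$, recovering \eqref{eq:crate-async}; when this boundary is empty the event is impossible and $J_{e,e'} = \infty$.

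Finally I would assemble the exponent. For the upper bound I apply the union bound over pairs together with the method-of-types estimate $\prob{C_n(e,e')} \le \binom{n-1+|\set{X}|^4}{|\set{X}|^4-1}\exp(-n J_{e,e'})$, where the binomial factor counts the type classes on $\set{X}^4$; bounding each $J_{e,e'} \ge K(\gamma)$ and noting that there are at most $\tfrac{(d-1)(d-2)}{2}$ non-tree edges each with a path of at most $d-1$ edges (hence at most $\tfrac{(d-1)^2(d-2)}{2}$ relevant pairs) yields \eqref{eq:sbound-async}, and thus $\liminf_n -\tfrac{1}{n}\log\prob{\set{A}_n} \ge K(\gamma)$. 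For the matching lower bound I restrict $\set{A}_n$ to the single dominant crossover achieving the minimum in \eqref{eq:eexponent-async}, whose probability decays with rate exactly $K(\gamma)$, giving $\limsup_n -\tfrac{1}{n}\log\prob{\set{A}_n} \le K(\gamma)$. The main obstacle I anticipate is the exact error-event decomposition: while the cycle property makes the inclusion $\set{A}_n \subseteq \bigcup C_n(e,e')$ routine, establishing that a crossover is both \emph{necessary} and individually \emph{sufficient} for an MWST error — handling simultaneous crossovers and empirical ties — is the delicate point, as is the separate verification that the KL infimum sits on the equality boundary rather than deeper inside $\set{B}$.
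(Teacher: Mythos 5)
Your proposal is correct and follows essentially the same route as the paper, which itself defers the details to the Chow--Liu error-exponent analysis of \cite{tan2011tree}: reduce $\set{A}_n$ to a union of crossover events via the cycle/swap property of the MWST with the cost-shifted weights $I_e(\hat P) - 2\gamma c_e$, apply Sanov's theorem to each pairwise marginal on $\set{X}^4$ to get $J_{e,e'}$, and combine a method-of-types union bound (giving the $\tfrac{(d-1)^2(d-2)}{2}\binom{n-1+|\set{X}|^4}{|\set{X}|^4-1}$ prefactor) with the dominant single-crossover event to pin down the exponent $K(\gamma)$. The two delicate points you flag --- exactness of the error decomposition modulo empirical ties, and attainment of the KL infimum on the equality boundary (which the paper handles for the SYNC case by a convexity-of-KL argument along the segment toward $P_{e,e'}$) --- are precisely the steps the paper's cited reference carries out.
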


In Theorem~\ref{thm:eexponent-async}, we observe that the decaying
rate of error probability is specified by some topological information
of physical/data graphs and the trade-off parameter $\gamma$. In
particular, the crossover event and its rate $J_{e,e'}$ depend on how
difficult it is to differentiate two edge weights under the true data
distribution with a consideration of the trade-off parameter $\gamma$
as well as per-message cost on edges. As interpreted from
\eqref{eq:crate-async}, when $w_e(P) = I_e(P) - 2\gamma c_e$ and
$w_{e'}(P) = I_{e'}(P) - 2\gamma c_{e'}$ are close, the confusion
between $e$ and $e'$ from samples frequently occurs, leading to high
error probability, and we can show the existence of the infimum $Q$
satisfying $w_e(Q) = w_{e'}(Q)$ as by slightly adjusting the true
distribution $P$. Moreover, we remark that the decaying rate
$J_{e,e'}$ (and thus $K(\gamma)$) is characterized by a trade-off
parameter $\gamma$. The error rate becomes smaller (\ie, higher error
probability) when $\gamma$ nearly meets the condition
$w_e(P) = w_{e'}(P)$, and the weights becomes deterministic with
respect to the samples as $\gamma$ increases since the portion of the
cost in weights grows, resulting to $J_{e,e'} = \infty$ in
\eqref{eq:crate-async}. These interpretations are well-matched to our
numerical results in Section~\ref{sec:simul}.

\smallskip
\noindent{\bf \em Proof sketch.}
The proof of Theorem~\ref{thm:eexponent-async} is presented in
Appendix~\ref{sec:proof-async-alg}, and we describe the proof sketch
for readers' convenience. Our proof largely follows that of the
related work in \cite{tan2011tree} that analyzes an error exponent of
a standard tree structure learning (\ie, known as Chow-Liu algorithm
\cite{chow1968tree}), whose goal is to solely estimate the true data
distribution with no consideration of communication cost. Simply, the
proof idea follows LDP in the following way. The error event
$\set{A}_n({\bm x}^{1:n})$ is expressed as a union of small events
that {\bf ASYNC-ALGO} estimates only one wrong edge (see the
definition of the crossover event in \eqref{eq:cevent-async}), two
wrong edges, and three, etc.  Following LDP, the decaying rate of the
error probability equals to the decaying rate of the most probable
crossover event, which corresponds to the case of only one wrong
edge. In more detail, two minimums in \eqref{eq:eexponent-async}
specify the most-probably error, whose edge set differs from the
optimal data tree structure $\estar(\infty)$ exactly in one edge, ,
\ie, $\estar(\infty) \setminus \{e\} \cup \{e'\}$, where it contains
the non-neighbor node pair $e'$ (as selected in the first
minimization) instead of the most probable replacement edge $e$ in the
unique path along $\estar(\infty)$ (as in the second minimization). To
obtain the minimum crossover rate $J_{e,e'}$, we apply the Sanov's
theorem \cite{bucklew90ldp}, which provides an expression of the
probabilistic relationship between $\hat{P}$ and $P$ via their KL
divergence.  Finally, in addition to the asymptotic decaying rate of
the estimation error probability, we also establish its upper bound of
the error probability in terms of the number $n$ of data samples,
where the first term $(d-1)^2(d-2)/2$ of the bound in
\eqref{eq:sbound-async} implies the number of possible crossover
events, and the second term $\binom{n-1+|\set{X}|^4}{|\set{X}|^4-1}$
represents the number of possible empirical distributions
$\hat{P}_{e,e'}.$

\subsection{Estimation Error of SYNC-ALGO} \label{sec:sync-limit}

We conduct a similar analysis here for {\bf SYNC-ALGO} to what we did
for {\bf ASYNC-ALGO}, which has more complicated issues for the
following reasons: We first denote by $w_e(\hat{P},T)$ in
\eqref{eq:weight-sync} the assigned weight for edge $e$ to stress its
dependence on the corresponding resulting tree structure $T$ and its
associated empirical distribution $\hat{P}$. Then, we need to
investigate the most probable pattern in the rare event through a
certain tree $T$ at some iteration.  Simply, the crossover event for
two edges $e$ and $e'$ occurs if the order of edge weights from the
given finite number of samples becomes reversed to the order of
weights from the true data distribution. Among all possible crossover
events, we are interested in the crossover event under every tree
structure that is obtained on the way of constructing the ideal data
structure, denoted by $\hat{E}^\text{S}(\infty)$. Let $\etrue$ and
$\Ttrue$ be the selected edge and constructed tree at $t$-th iteration
obtained by running {\bf SYNC-ALGO} w.r.t. the true data distribution
$P$, which would finally find $\hat{E}^\text{S}(\infty)$. Then, it is
obvious that $\etrue$ has the unique highest edge weight for $P$,
% \ie, $w_{{\tt e}_t}(P;T) > w_{e'}(P;T)$ for $e' \notin T$,
and the crossover event of our interest is defined as:
\begin{align} \label{eq:cevent-sync} C_{n}(\etrue,e'; \Ttrue)
  \defeq \left\{ w_{\etrue}(\hat{P}; \Ttrue) \leq
    w_{e'}(\hat{P}; \Ttrue) \right\}.
\end{align}
We now state Theorem~\ref{thm:eexponent-sync} that establishes the
decaying rate of the estimation error probability as the number of
data samples grows.

\begin{theorem}[Decaying rate of {\bf
    SYNC-ALGO}] \label{thm:eexponent-sync} For any fixed parameter
  $\gamma \geq 0$,
  \begin{align} \label{eq:ldp-sync}
    \lim_{n \rightarrow \infty} - \frac{1}{n} \log \mathbb{P}(\set{A}_n({\bm x}^{1:n})) \geq K(\gamma),
  \end{align}
  where
  \begin{align} \label{eq:eexponent-sync} K(\gamma) := \min_{t \in
      \{1,\cdots,|V|-1\}} ~ \min_{e' \notin \Tttrue } ~ J_{\etrue,
      e'}(\Ttrue),
  \end{align}
  where $\etrue$ and $\Ttrue$ are the selected edge and
  constructed tree at $t$-th iteration by running {\bf SYNC-ALGO}
  w.r.t. the true data distribution $P$, \ie,
  $w_{\etrue}(P)$ has the maximum edge weight under the tree
  $\Ttrue$, and it is given by: 
  % $w_{e'}(P) < w_{{\tt e}_t}(P)$ under the tree ${\tt T}_t$ for
  % $e' \notin {\tt T}_{t+1}$, and
  under some tree $T$, for any $e,e'$,
  \begin{eqnarray} \label{eq:crate-sync} J_{e,e'}(T) = \left.
    \begin{cases}
      \underset{Q \in \set{P}(\set{X}^4)}{\inf} &
      \Big\{D_{\text{KL}}(Q \parallel P_{e,e'}) : w_e(Q) = w_{e'}(Q)
      \Big\}, \cr & \text{if} ~ \left\{Q \in \set{P}(\set{X}^4):
        w_e(Q) = w_{e'}(Q) \right\} \neq \emptyset, \cr \quad \infty,
      & \text{otherwise}.
    \end{cases}
    % \begin{cases}
    %   \underset{Q \in \set{P}(\set{X}^4)}{\inf} &
    %   \Big\{D_{\text{KL}}(Q \parallel P_{e,e'}) : w_e(Q) = w_{e'}(Q)
    %   \Big\}, \cr & \text{if} \quad w_e(Q) - w_{e'}(Q) \cr & \qquad
    %   \quad > I_e(Q) - I_{e'}(Q) - |\set{X}| \log |\set{X}|, \cr \quad
    %   \infty, & \text{otherwise}.
    % \end{cases}
                \right.
  \end{eqnarray}
  Moreover, we have the following (finite-sample) upper-bound on the
  error probability: for all $n = 1, 2, \ldots, $
  \begin{align} \label{eq:sbound-sync} \bprob{ \set{A}_n({\bm
        x}^{1:n}) } \leq \frac{(d-1)d(d+1)}{6}\binom{n-1+
      |\mathcal{X}|^{4}}{|\mathcal{X}|^{4}-1} \exp(-n \cdot
    K(\gamma)).
  \end{align}  
\end{theorem}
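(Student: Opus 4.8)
The plan is to mirror the large-deviations argument behind Theorem~\ref{thm:eexponent-async}, with the one essential change that SYNC-ALGO builds its tree greedily, so the weight \eqref{eq:weight-sync} of a candidate edge depends on the partial tree already constructed. Accordingly I would index everything by the iteration $t$. First I would decompose $\set{A}_n$ by the first iteration at which the empirical run of SYNC-ALGO departs from the run on the true distribution $P$. If the first departure is at iteration $t$, the empirical run must have reproduced the true partial tree $\Ttrue$ in all earlier iterations and then, at iteration $t$, selected some candidate edge $e'\neq\etrue$ with $e'\notin\Tttrue$ whose empirical weight is at least that of the true maximizer $\etrue$; this is exactly the crossover event $C_n(\etrue,e';\Ttrue)$ of \eqref{eq:cevent-sync}. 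Since each iteration adds one edge, two runs with different final edge sets must first differ at some iteration, giving the containment
\[
\set{A}_n \;\subseteq\; \bigcup_{t=1}^{|V|-1}\;\bigcup_{e'\notin\Tttrue} C_n(\etrue,e';\Ttrue).
\]

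Next I would compute the exponential decay rate of each crossover probability. The point that makes this tractable is that, once $\Ttrue$ is fixed, every term in \eqref{eq:weight-sync} other than the mutual information $I_e(\hat P)$ --- the diameter factor $\diam(\Ttrue\cup\{e\})$, the quantity $K_e(\Ttrue)$ in \eqref{eq:inc-old}, and $D(\Ttrue)$ in \eqref{eq:diam-exp} --- is a deterministic function of $\Ttrue$, the per-message costs, and $d$. Hence $w_{\etrue}(\hat P;\Ttrue)\le w_{e'}(\hat P;\Ttrue)$ reduces to a closed constraint on the empirical pair-marginal $\hat P_{e,e'}$ supported on the at-most-four nodes spanned by $e$ and $e'$. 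Applying Sanov's theorem to $\hat P_{e,e'}\in\set{P}(\set{X}^4)$ then yields the rate $J_{\etrue,e'}(\Ttrue)$ of \eqref{eq:crate-sync}, the least KL divergence from $P_{e,e'}$ to the boundary set $\{Q:w_{\etrue}(Q)=w_{e'}(Q)\}$ (with the convention $\infty$ when that set is empty); because $P$ places $\etrue$ strictly above $e'$, the infimum over the half-space is attained on this boundary. This step is verbatim the ASYNC computation with $\Ttrue$ frozen.

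Then I would assemble the bound. A union bound over the containment together with the method-of-types estimate --- there are at most $\binom{n-1+|\set{X}|^4}{|\set{X}|^4-1}$ empirical marginals $\hat P_{e,e'}$ on the $|\set{X}|^4=16$ joint configurations, and at most $(d-1)d(d+1)/6$ distinct (iteration, wrong-edge) pairs --- gives the finite-sample bound \eqref{eq:sbound-sync}. Taking $-\tfrac1n\log(\cdot)$ and letting $n\to\infty$ kills the polynomial and type-count prefactors, so the slowest-decaying crossover dominates and yields $K(\gamma)=\min_{t}\min_{e'\notin\Tttrue}J_{\etrue,e'}(\Ttrue)$, which is precisely the lower bound \eqref{eq:ldp-sync}.

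The main obstacle, and the reason the theorem asserts only ``$\ge$'' rather than the equality obtained for ASYNC-ALGO, is that the containment above runs in a single direction. For the static maximum-weight spanning tree of ASYNC-ALGO, the dominant crossover event can be pinned down and shown to force a wrong output while occurring with probability at least $\exp(-nK(\gamma)(1+o(1)))$, matching the upper bound. For the greedy SYNC-ALGO the iterations are coupled through the dynamically updated, diameter-dependent weights: a crossover relative to a true tree $\Ttrue$ does not obviously certify that the \emph{final} structure $\hat{E}^{\text{S}}(n)$ differs from $\hat{E}^{\text{S}}(\infty)$ (the greedy rule may reselect the missing edge at a later iteration, or wander through intermediate trees that never coincide with any $\Ttrue$). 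Isolating one dominant error trajectory and lower-bounding its probability in the presence of this coupling is exactly what breaks down, so I would not attempt the matching lower bound and would state the exponent as a lower bound only.
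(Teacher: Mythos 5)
Your proposal follows essentially the same route as the paper's proof: the containment $\set{A}_n \subseteq \bigcup_{t=1}^{|V|-1}\bigcup_{e'\notin\Tttrue} C_n(\etrue,e';\Ttrue)$ obtained by locating the first iteration of departure, Sanov's theorem applied to the empirical marginal $\hat{P}_{e,e'}\in\set{P}(\set{X}^4)$ with the non-random (diameter- and cost-dependent) terms frozen for fixed $\Ttrue$, the stars-and-bars count $\binom{n-1+|\set{X}|^4}{|\set{X}|^4-1}$ of types together with $\sum_{t=1}^{d-1}t(d-t)=\frac{(d-1)d(d+1)}{6}$ candidate pairs for the union bound, and the observation that the containment is one-directional (a crossover need not produce a wrong final tree), which is exactly why the paper, like you, only asserts the inequality \eqref{eq:ldp-sync}. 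The argument is correct and matches the paper's.
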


In Theorem~\ref{thm:eexponent-sync}, as seen in \eqref{eq:ldp-sync},
the error rate function $K(\gamma)$ in \eqref{eq:eexponent-sync}
indeed provides a lower-bound of the actual decaying rate of the error
event $\set{A}_n({\bm x}^{1:n})$, since the crossover event
$C_n(\etrue, e';\Ttrue)$ which estimates an edge
$e' \notin T_{\text{true}}^{t+1}$ rather than $\etrue$ at any $t$-th
iteration does not guarantee that $e'$ is a wrong edge. Intuitively,
the edge weights of {\bf SYNC-ALGO} dynamically change according to a
diameter of $\Ttrue$ as iteration $t$ proceeds, which makes the
characterization of the exact error rate of {\bf SYNC-ALGO} be
non-trivial.

\smallskip
\noindent{\bf \em Proof sketch.}
Due to space limitation, we present the complete proof in
Appendix~\ref{sec:proof-sync-alg}, and we provide a brief proof
sketch. The basic idea is similar to the proof of
Theorem~\ref{thm:eexponent-async}. As mentioned there, the crossover
event $C_n(\etrue, e'; \Ttrue)$ is not a subset of the error event
$\set{A}_n({\bm x}^{1:n})$, and as a result, we provide a lower-bound
of the decaying error rate in the proof, as established by two
minimizations in \eqref{eq:crate-sync}. In particular, the first
minimization is taken over all iterations ($1 \leq t \leq |V|-1$) so
that it selects the iteration where the error occurs in the most
probable way, and the second minimization specifies the non-neighbor
node pair $e'$, which can be estimated instead of $\etrue$, having the
minimum $J_{\etrue, e'}(\Ttrue)$, In other words, the most probable
pattern in the error event of {\bf SYNC-ALGO} is to estimate
$\Ttrue \setminus \{\etrue\} \cup \{e'\}$ attained in two
minimizations in \eqref{eq:eexponent-sync}. For the crossover rate
$J_{\etrue, e'}(\Ttrue)$ in \eqref{eq:crate-sync}, when two edges
$\etrue$ and $e'$ can be clearly differentiated via their edge
weights, since the difference of the cost between two edges dominantly
determines the order of the edge weights, \ie, the condition in
\eqref{eq:crate-sync} does not hold, the crossover event does not
happen, \ie, $J_{\etrue, e'}(\Ttrue) = \infty$. This mostly
corresponds to the situation of a large value of the trade-off
parameter $\gamma$, where the communication cost plays an important
role of the error event, which do not depend on the number of samples
$n$. Otherwise, the crossover rate is attained in a similar way to
\eqref{eq:crate-async}. Finally, we establish the upper bound of the
error probability in terms of the sample size $n$, where the first
term $(d-1)d(d+1)/6$ of the bound in \eqref{eq:sbound-sync}
corresponds to the number of possible crossover events throughout the
entire iterations, and the second term implies the number of possible
empirical distributions $\hat{P}_{\etrue,e'}$.

%\vspace{0.2cm}

%%% Local Variables: 
%%% mode: latex
%%% TeX-master: "main"
%%% End: 

%\vspace{-0.2cm}
\section{Numerical Results} \label{sec:simul}

In this section, we provide a set of numerical experiments to validate
our analytical results of {\bf ASYNC-ALGO} and {\bf SYNC-ALGO} under
various numbers of data samples, communication costs, and trade-off
parameters.

%\vspace{-0.2cm}

\subsection{Setup} \label{sec:setup}

\smallskip
\noindent {\bf \em Physical graph.}
We use a physical network $G=(V,E_P)$ consisting of $20$ nodes forming
a line topology, where node $i$ can directly communicate only with
nodes $i-1$ and $i+1$, see Figure~\ref{fig:phy_line}. We assign some
constant cost of single message-passing for each edge $e=(i,i+1)$:
$c_{i,i+1} = \kappa \times 1.1^i$, except for
$c_{1,2} = 4 \kappa, c_{3,4} = 2\kappa, c_{6,7}= 0.1 \kappa$, where we
appropriately choose $\kappa$ to adjust the scale of total
communication cost of two learning algorithms in the same range, for
clear comparison with the same values of $\gamma$. In the
message-passing between non-neighboring (w.r.t. the physical graph)
node pairs $(i,j)$, we simply assume that it expenses the sum of the
costs when it is passed along the unique shortest multi-hop path
$\Psi((i,j); G)$ on $G$, \ie,
$c_{i,j} = \sum_{e' \in \Psi((i,j); G)} c_{e'}$. For example,
$c_{1,4} = c_{1,2}+c_{2,3}+c_{3,4}$. We use this line topology for an
exemplar physical graph to clearly observe the difference between {\bf
  ASYNC-ALGO} and {\bf SYNC-ALGO}, where it leads to a significantly
huge amount of communication cost for {\bf SYNC-MAP}, due to large
diameter value $\diam(G) = 19$.

\begin{figure}[t!]
  \centering
%  \hspace{-0.25cm}
  \subfigure[Physical graph of $20$ nodes forming a line topology.]
  {
\includegraphics[width=0.33\columnwidth]{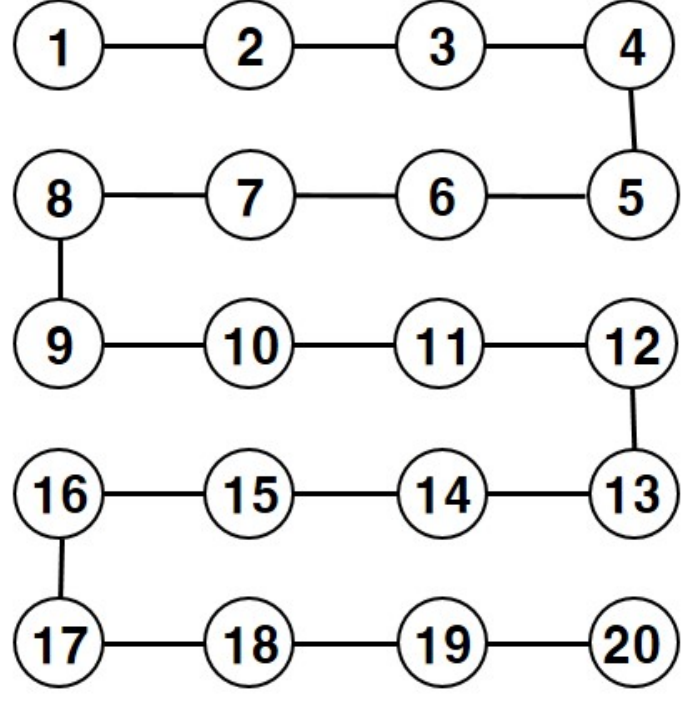}    
    \label{fig:phy_line}
  } \hspace{0.4cm}  
  \subfigure[Data graph of $20$ nodes forming a $3$-regular tree (except for the leaves).]
  {
    \includegraphics[width=0.53\columnwidth]{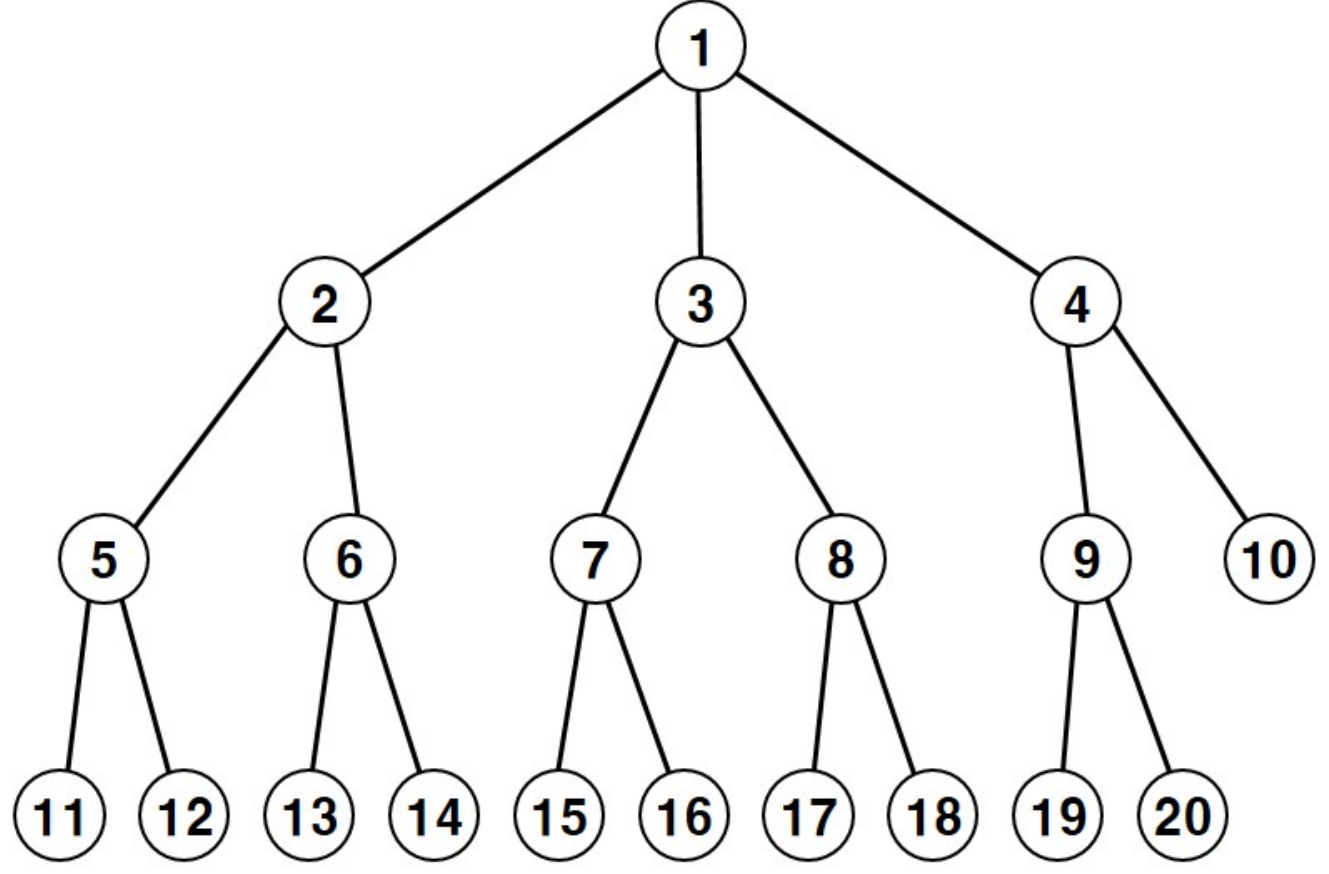}
    \label{fig:data_graph}
  }
%\vspace{-0.3cm}
\caption{Physical and data graphs used for our simulations.}
\label{fig:data_physical_graph}
%\vspace{-0.4cm}
\end{figure}

\smallskip
\noindent {\bf \em Data graph.}
As an underlying statistical dependencies among $20$ nodes in the data
graph, we consider a $3$-regular tree $T=(V,E_D)$, except for boundary
nodes, where the node $1$ is a root node and every node has a degree
of $3$ or less, as depicted in Figure~\ref{fig:data_graph}. Each
random variable $X_i$ associated to a node $i$ is set to follow a
Bernoulli distribution. For a root node $1$, it has $P(X_1=0) = 0.7$
and $P(X_1=1) = 0.3$, and for other neighboring node pairs $i$ and
$j$, we set the conditional distribution between $X_i$ and $X_j$ by
\begin{align} \label{eq:exp-data-dist}
  P(X_i = 0 | X_j = 0) = 0.7, \quad \text{and} \quad P(X_i = 0 | X_j = 1) = 0.3
\end{align}
whenever $i < j$. With this setting of per-node distribution, it turns
out that neighboring node pairs have high correlations, and thus have
distinct values of the mutual information.

\smallskip Under this choice of physical and data graphs, we obtain
numerical examples to show the performance of {\bf ASYNC-ALGO} and
{\bf SYNC-ALGO} for various values of trade-off parameter $\gamma,$
ranging from $0$ to $4$, and a fixed $\beta = 1$ in our results. For a
fixed $n \in \mathbb{N}$, we first generate $n$ i.i.d. samples
${\bm x}^{1:n}$ from $P({\bm x})$ in \eqref{eq:exp-data-dist}. Then,
we compute the empirical distribution $\hat{P}({\bm x}^{1:n})$ and the
empirical mutual information of all possible node pairs
$\{ I_e(\hat{P})\}_{e \in V \times V}$. Then, we learn the
cost-efficient data tree by running {\bf ASYNC-ALGO} or {\bf
  SYNC-ALGO}, and estimate how well the proposed algorithms recover
the ideal data graph by investigating the estimation error probability
as $n$ grows.

% In following section, we (i) provide the estimated data trees for
% various $\gamma$, (ii) present how the trade-off between inference
% accuracy of the MAP estimator in \eqref{eq:map} and total
% communication cost is captured for various $\gamma$, and (iii)
% demonstrate theoretical findings in Theorem~\ref{thm:eexponent-async}
% and Theorem~\ref{thm:eexponent-sync} about the decaying rate of the
% error probability w.r.t. the number of samples $n$, for various
% $\gamma$.

\begin{figure*}[t!]
  \centering
%  \hspace{-0.4cm}
  \subfigure[Estimated tree structure of $\gamma=0$]
  {
    \includegraphics[width=0.46\columnwidth]{reg_3_2.pdf}
    \label{fig:async_gamma_0}
  } \hspace{0.25cm}  
  \subfigure[Estimated tree structure of $\gamma=0.5$]
  {
    \includegraphics[width=0.4\columnwidth]{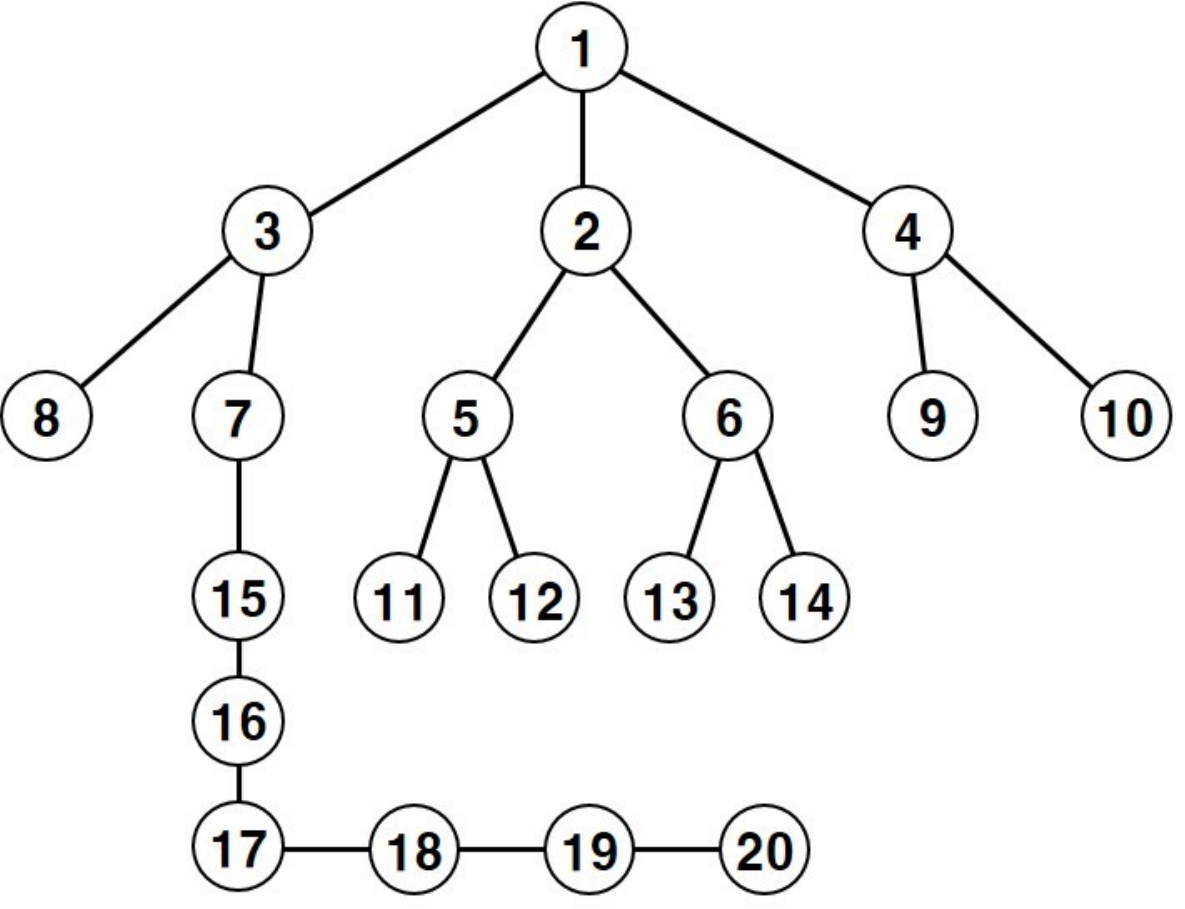}
\label{fig:async_gamma_1}
  }
  \hspace{0.35cm}
  \subfigure[Estimated tree structure of $\gamma=4$]
  {
    \includegraphics[width=0.3\columnwidth]{line2.pdf}
    \label{fig:async_gamma_4}
  } \hspace{0.7cm}
  \subfigure[Trade-off between MAP accuracy and cost] {
    \includegraphics[width=0.47\columnwidth]{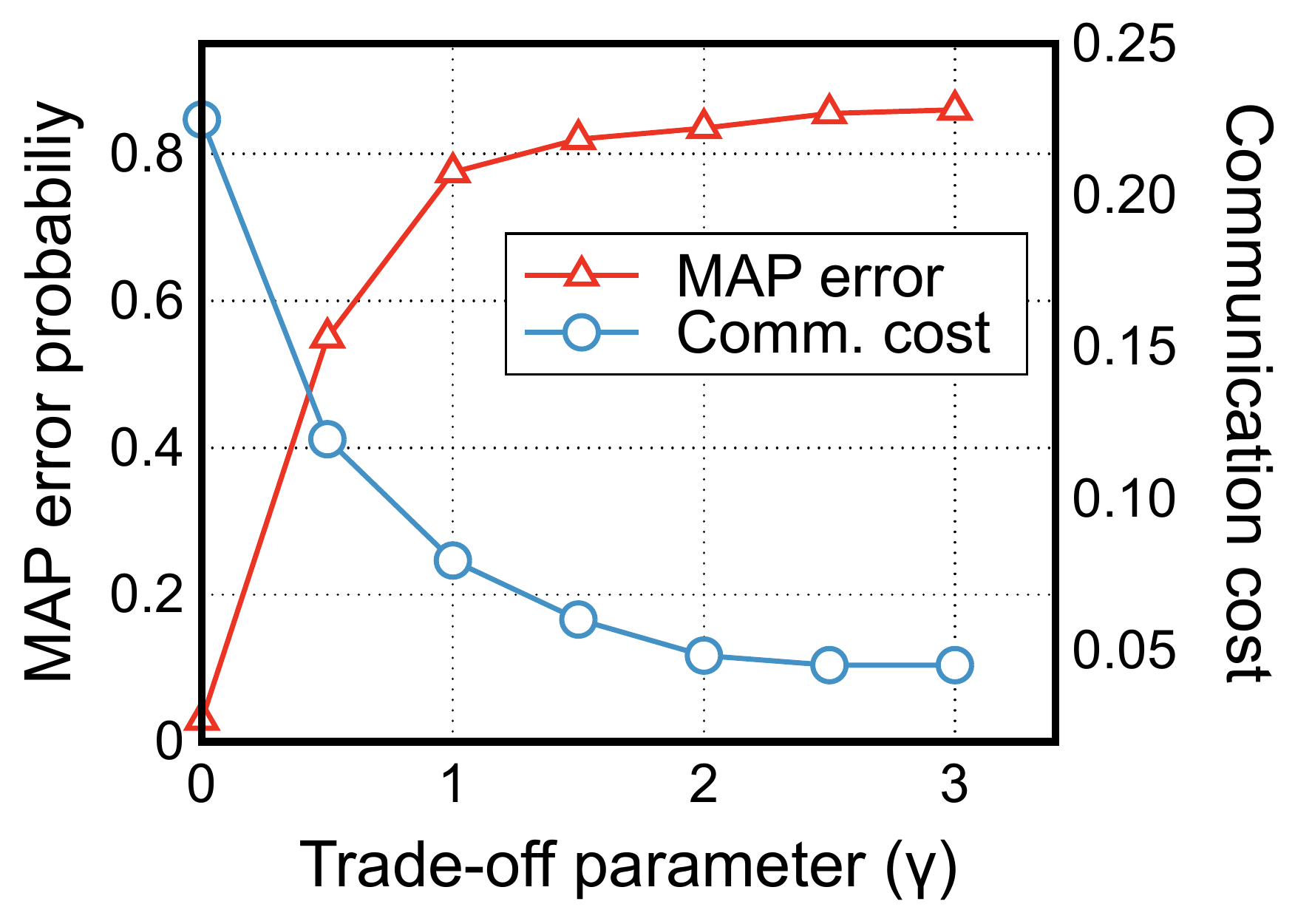}
    \label{fig:async-trade-off}
  } \hspace{-0.35cm}
%  \vspace{-0.35cm}
  \caption{An instance of estimated tree structure by {\bf ASYNC-ALGO}
    with distinct trade-off parameter $\gamma=0,0.5,4$, and the
    trade-off between MAP accuracy and communication cost.}
%  \vspace{-0.2cm}
  \label{fig:ASYNC_struct}
%  \vspace{-0.2cm}
\end{figure*}

\begin{figure*}[t!]
  \centering
  \hspace{-0.1cm}
  \subfigure[Estimated tree structure of $\gamma=0$]
  {
    \includegraphics[width=0.45\columnwidth]{reg_3_2.pdf}
    \label{fig:sync_gamma_0}
  } %\hspace{-0.15cm}  
  \subfigure[Estimated tree structure of $\gamma=1$]
  {
    \includegraphics[width=0.49\columnwidth]{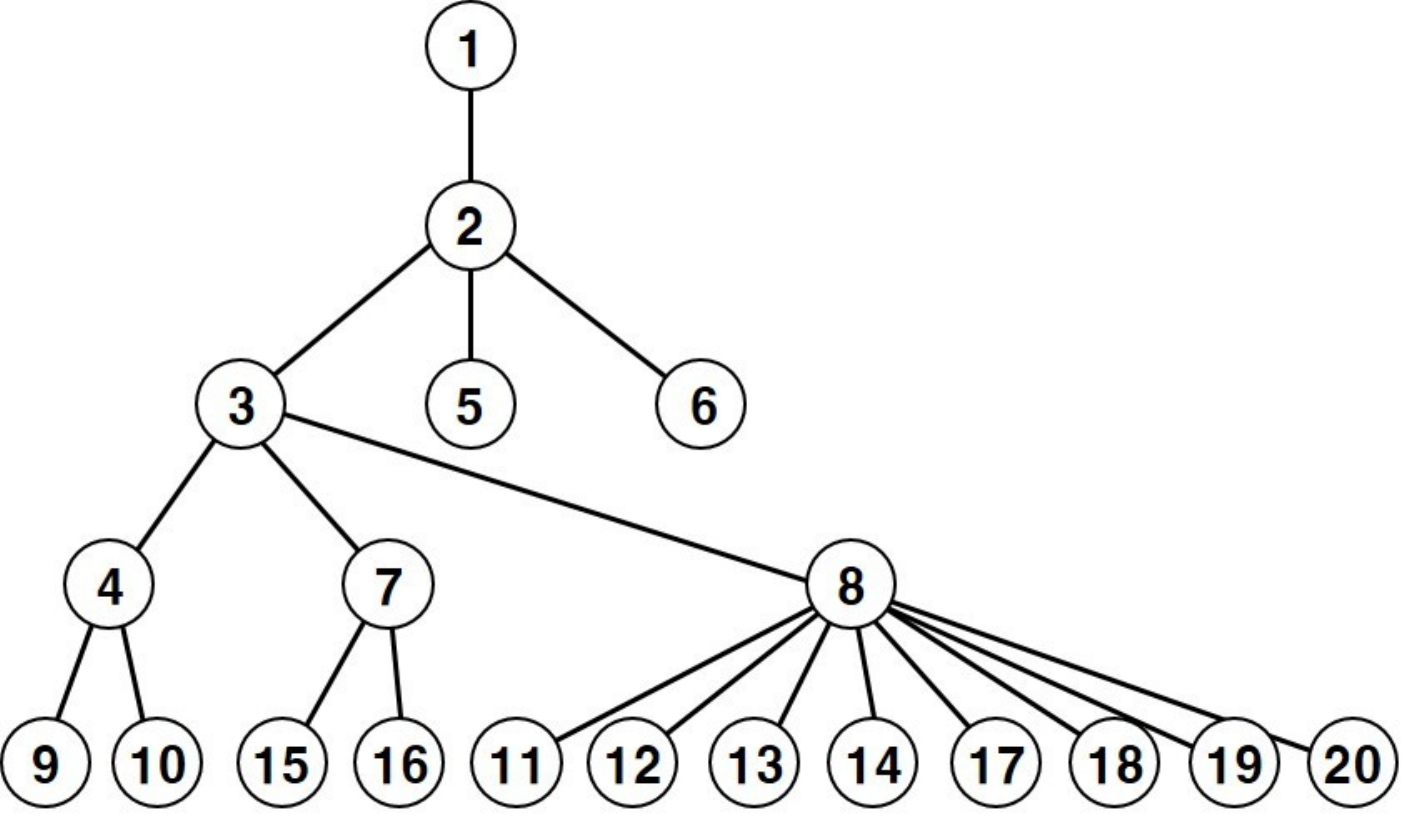}
    \label{fig:sync_gamma_1}
  }
  \hspace{-0.2cm}
  \subfigure[Estimated tree structure of $\gamma=4$]
  {
     \includegraphics[width=0.32\columnwidth]{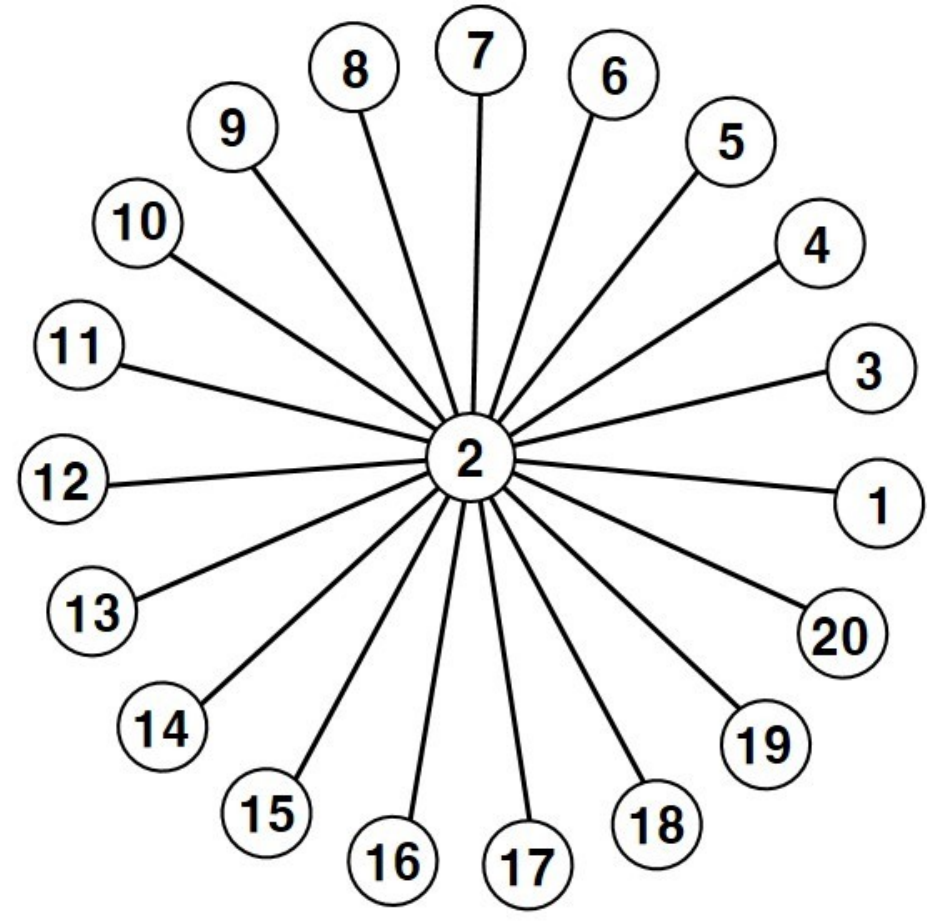}
     \label{fig:sync_gamma_4}
   }
  \hspace{0.3cm}
  \subfigure[Trade-off between MAP accuracy and cost]
  {
     \includegraphics[width=0.47\columnwidth]{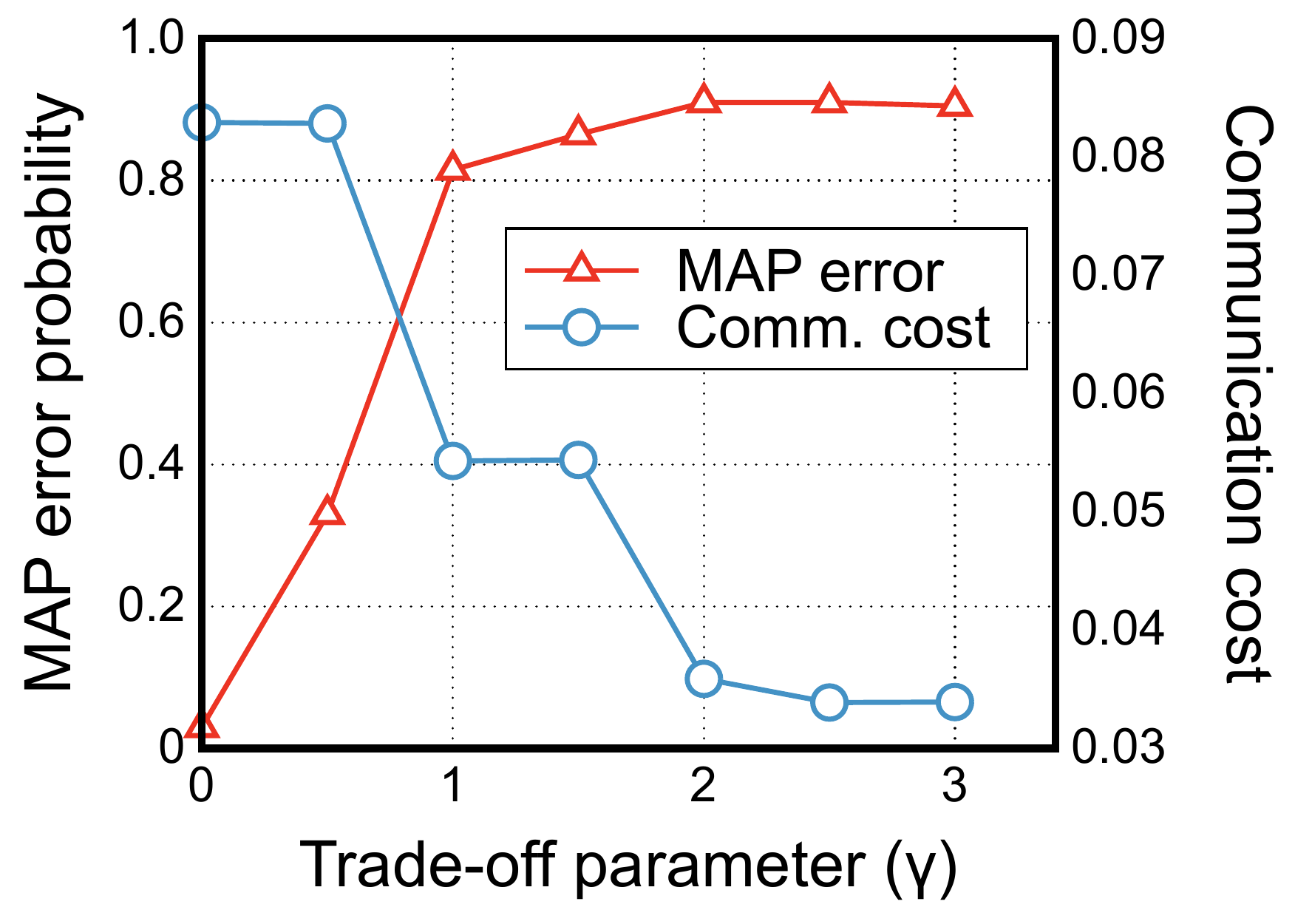}
     \label{fig:sync-trade-off}
   } \hspace{-0.2cm}
%   \vspace{-0.35cm}
  \caption{An instance of estimated tree structure by {\bf SYNC-ALGO}
    with distinct trade-off parameter $\gamma = 0, 1, 4$, and the
    trade-off between MAP accuracy and communication cost.}
%\vspace{-0.2cm}
  \label{fig:SYNC_struct}
%\vspace{-0.2cm}
\end{figure*}

\subsection{Results}

\smallskip
\noindent {\bf \em (i) Estimated trees with varying $\gamma$.}
Figures~\ref{fig:ASYNC_struct} and \ref{fig:SYNC_struct} show that the
estimated data trees by {\bf ASYNC-ALGO} and {\bf SYNC-ALGO} for
various $\gamma$. We recall that the value of $\gamma$ parameterizes
the amount of priority for communication cost compared to the
inference quality, see \eqref{eq:CDG-learn}, where smaller $\gamma$
leads to higher priority to the inference quality. In both algorithms,
we observe that they with $\gamma = 0$ estimate the exact data graph
in Figure~\ref{fig:data_graph}, since the goal is to achieve the
highest inference accuracy. However, as $\gamma$ grows, each of two
algorithms estimates a different structure for data tree, since {\bf
  ASYNC-MAP} and {\bf SYNC-MAP} have different forms of communication
costs. In particular, in {\bf ASYNC-ALGO}, as $\gamma$ grows, we
observe that the algorithm produces the estimated data tree with more
resemblance to the physical graph, and finally it estimates the data
tree that is the same as the physical graph with $\gamma = 4$, see
Figures~\ref{fig:async_gamma_1} and~\ref{fig:async_gamma_4}.  We note
that for a large value of $\gamma$, the goal of {\bf ASYNC-ALGO} is to
find a MWST of minimum total cost, which accords with the physical
graph of line-topology. However, the communication cost of {\bf
  SYCN-ALGO} increases in proportion to the {\em diameter} of the
estimated tree, thus it estimates a tree that is of a {\em star-like}
topology, \ie, a tree with a small diameter as seen in
Figures~\ref{fig:sync_gamma_1} and~\ref{fig:sync_gamma_4}, to
significantly reduce the cost, as $\gamma$ grows.

\begin{figure}[t!]
  \centering
  \hspace{-0.5cm}
  \subfigure[Error probability w.r.t. the sample size $n$]
  {
    \includegraphics[width=0.47\columnwidth]{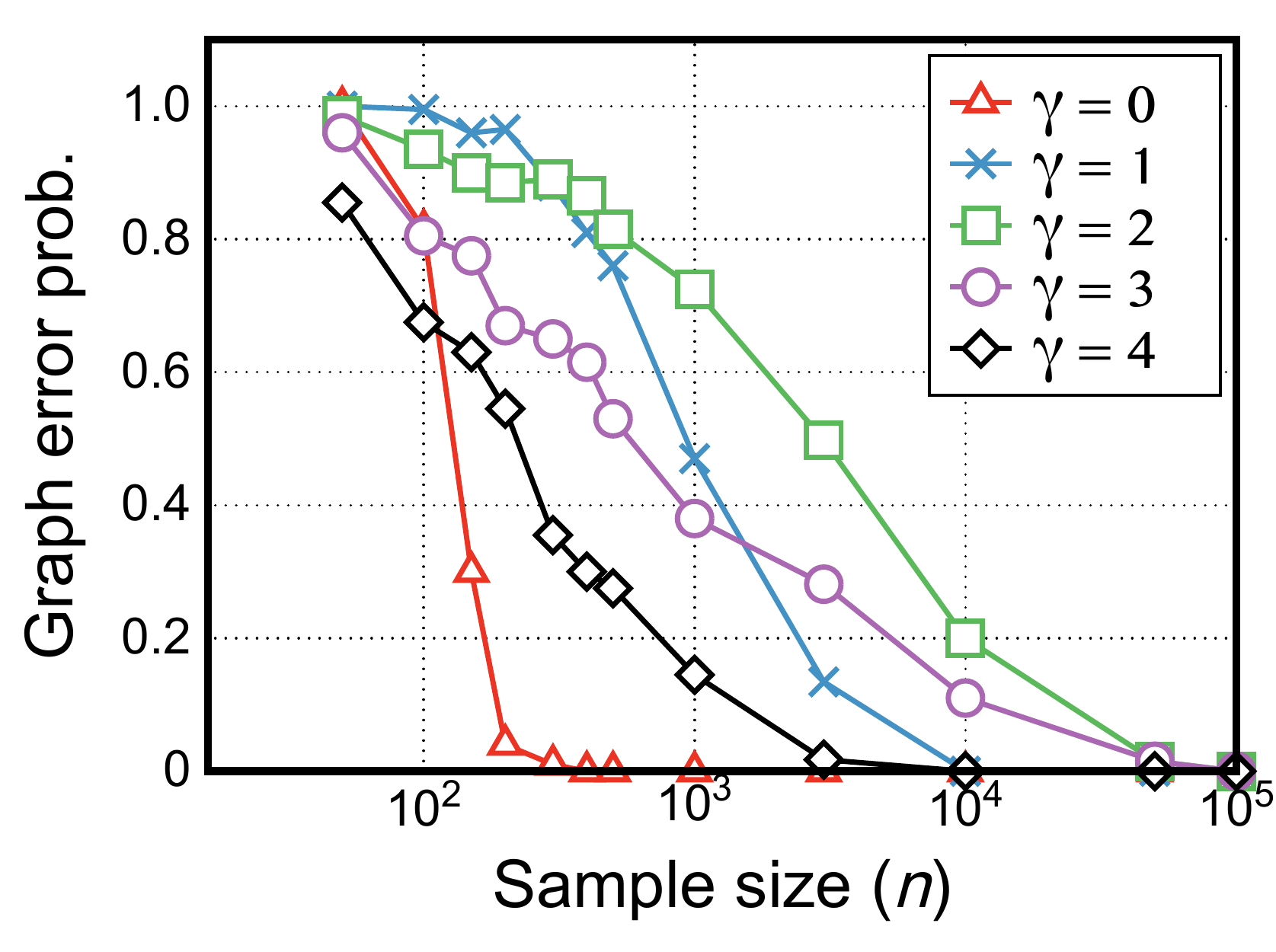}
    \label{fig:async-err-sample}
  } \hspace{-0.05cm}
  \subfigure[Error probability with varying trade-off parameter $\gamma$]
  {
    \includegraphics[width=0.47\columnwidth]{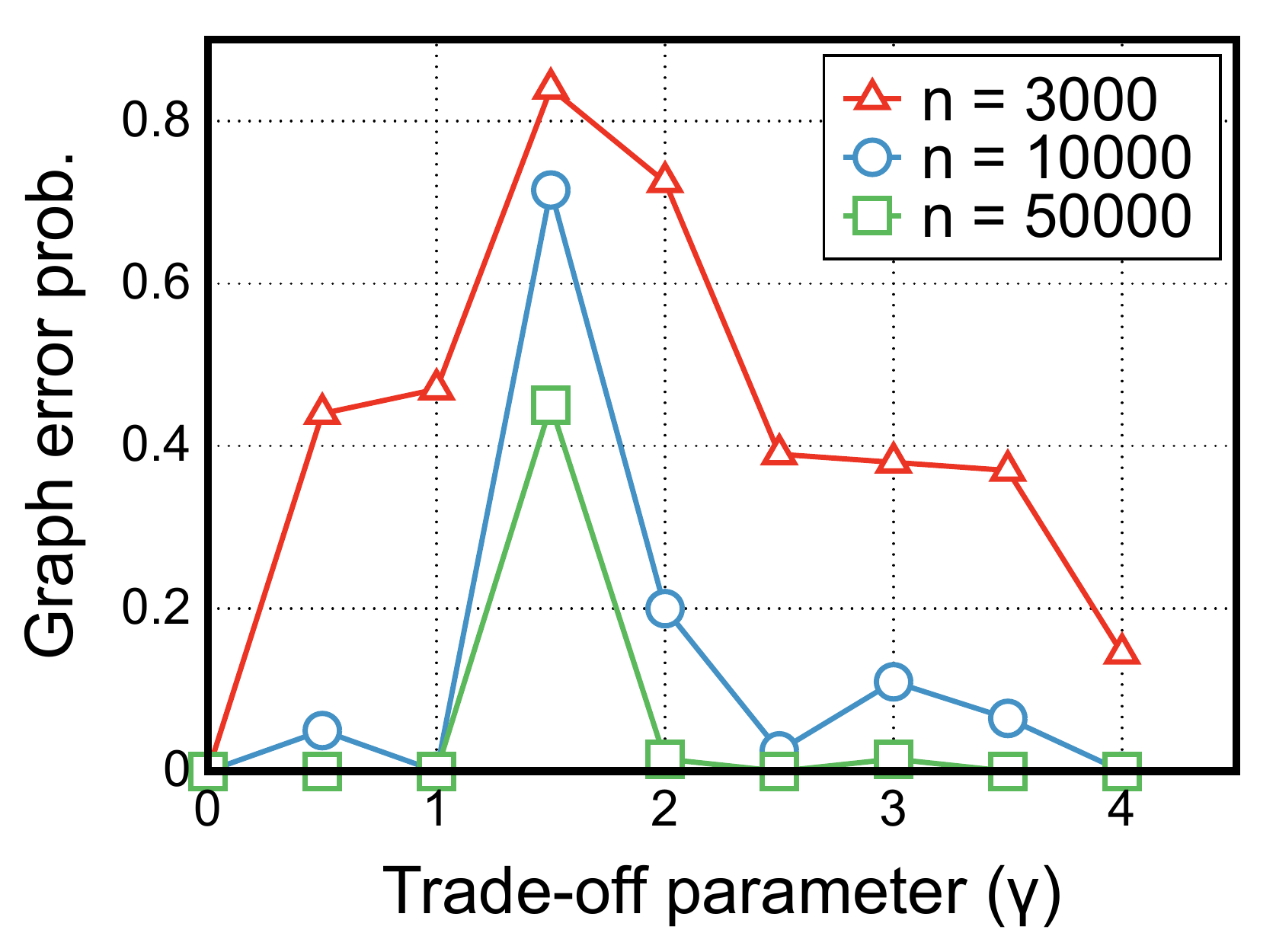}
    \label{fig:async-err-gamma}
  } \hspace{-0.5cm}
%  \vspace{-0.25cm}
  \caption{Error probability of {\bf ASYNC-ALGO}.}
  \label{fig:async}
%  \vspace{-0.43cm}
\end{figure}

\smallskip
\noindent {\bf \em (ii) Quantifying trade-off between
  inference accuracy and cost.}
We now quantify how the trade-off between inference accuracy of the
MAP estimator in \eqref{eq:map} behaves and the total communication
cost is captured for different values of $\gamma$.  To support the
trade-off parameterized by $\gamma$ in the optimization problem in
\eqref{eq:CDG-learn}, we vary $\gamma$ from $0$ to $4$ and plot the
accuracy of MAP estimator and the total cost on the learnt data
dependency graph as the red and blue lines, respectively, in
Figures~\ref{fig:async-trade-off} and ~\ref{fig:sync-trade-off}. In
particular, we run the {\bf ASYNC-ALGO} and {\bf SYNC-ALGO} with
$n = 200$ samples, respectively, and run the max-product algorithm on
the learnt data tree to obtain the MAP estimator. We repeatedly run
for $200$ times, and measure the error probability that the MAP
estimator on the learnt data tree differs from the MAP estimator on
the true data graph, as a metric of inference accuracy. The average
(over the $200$ results) of the communication cost on the learnt data
tree is measured by the form of \eqref{eq:ccost-async} and
\eqref{eq:ccost-sync} for each algorithm. In
Figure~\ref{fig:sync-trade-off}, we observe that the MAP estimation
error and cost with $\gamma = 0.5$ is $0.33$ and $0.083$,
respectively, while those with $\gamma = 2.5$ is $0.91$ and $0.034$,
respectively. The impact of $\gamma$ on the trade-off for two
algorithms seems similar, as seen in Figures~\ref{fig:async-trade-off}
and~\ref{fig:sync-trade-off}.
% cost 0.41 times improved, accuracy 3 times worse

\begin{figure}[t!]
  \centering
  \hspace{-0.5cm}
  \subfigure[Error probability w.r.t. the sample size $n$]
  {
    \includegraphics[width=0.47\columnwidth]{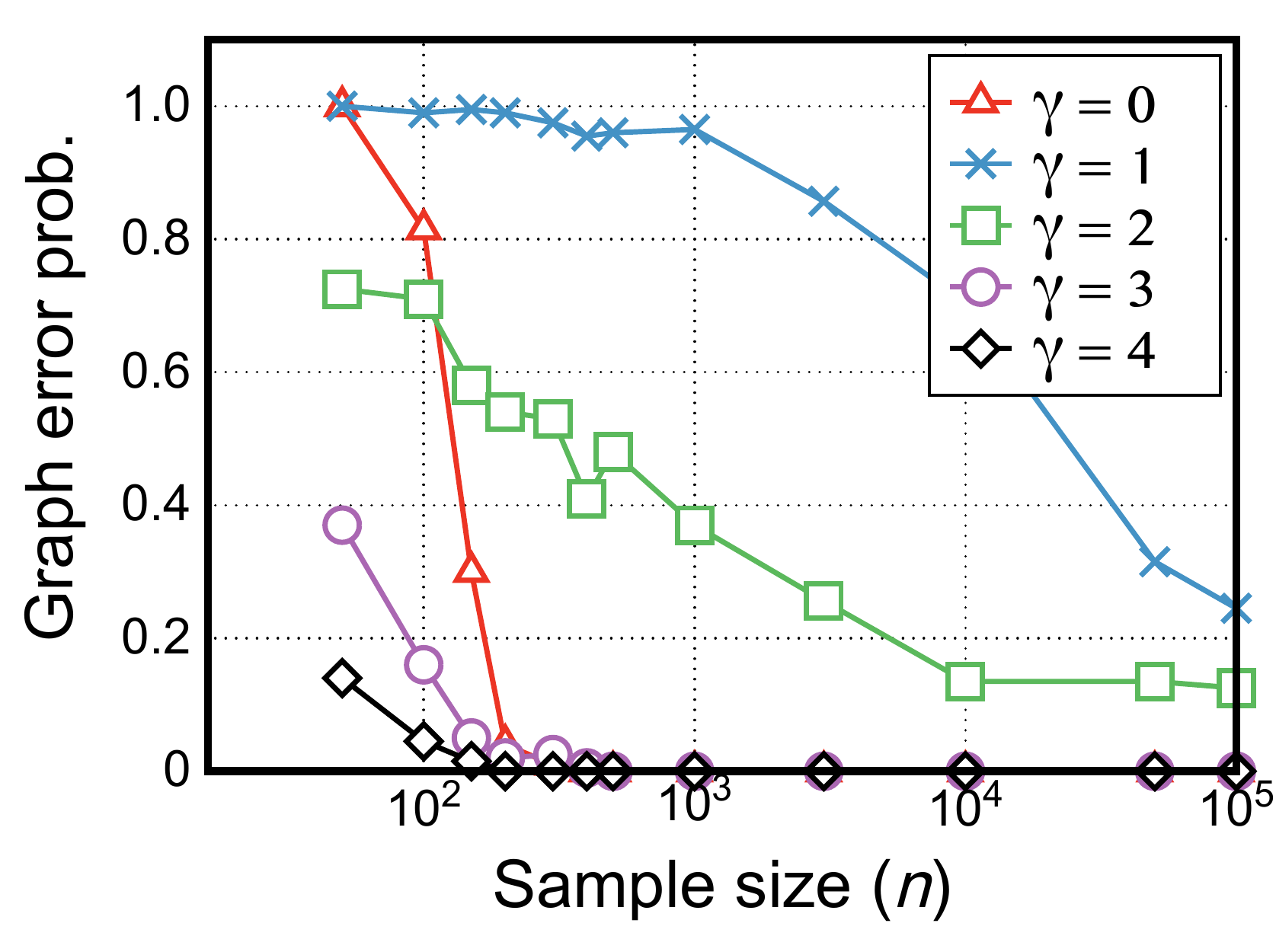}
    \label{fig:sync-err-sample}
  } \hspace{-0.05cm}
  \subfigure[Error probability with varying trade-off parameter $\gamma$]
  {
    \includegraphics[width=0.47\columnwidth]{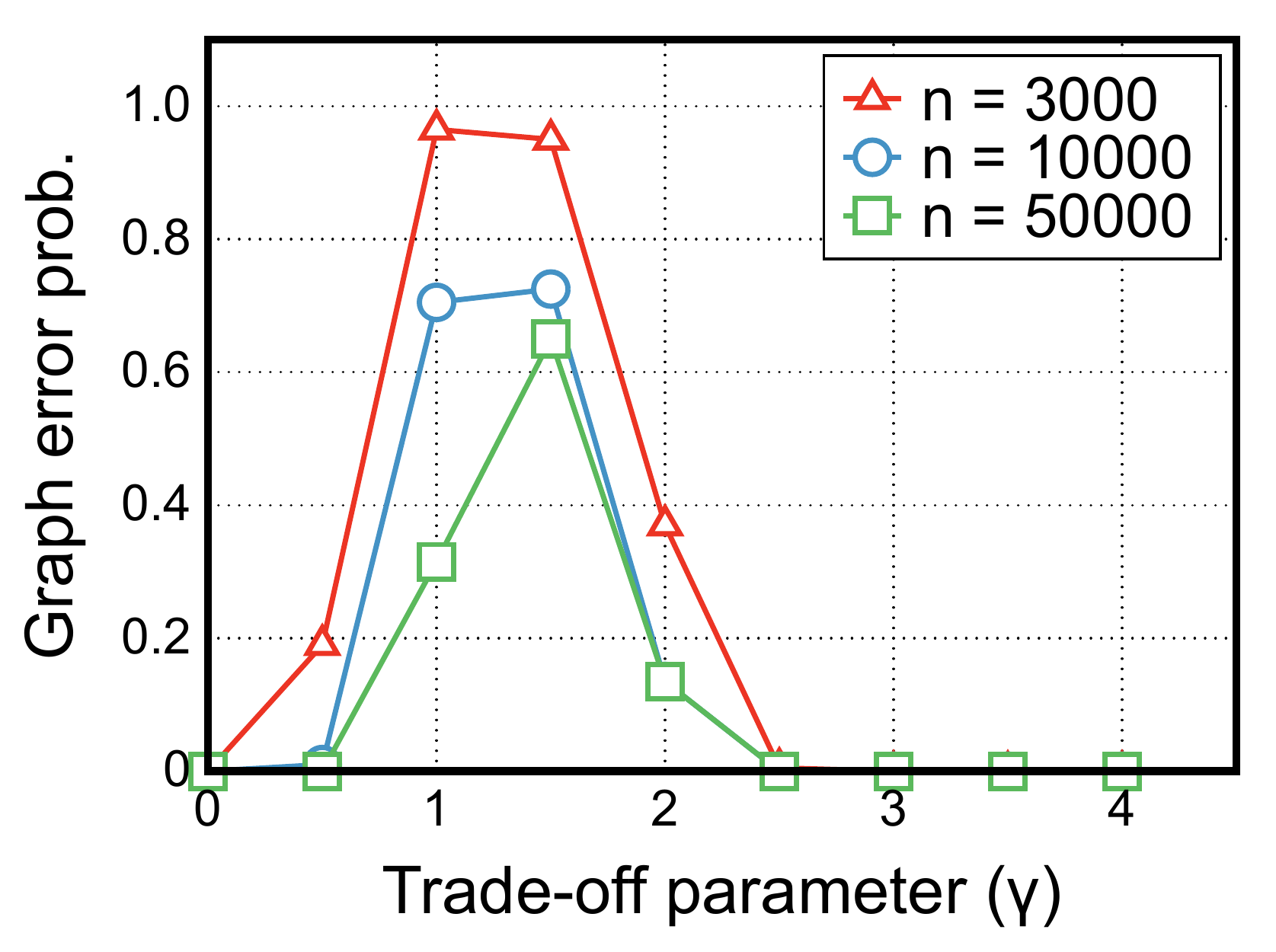}
    \label{fig:sync-err-gamma}
  } \hspace{-0.5cm}
%  \vspace{-0.25cm}
  \caption{Error probability of {\bf SYNC-ALGO}.}
  \label{fig:sync}
%  \vspace{-0.43cm}
\end{figure}

\smallskip
\noindent {\bf \em (iii) Impact of data sample size on graph
  estimation accuracy.}
Finally, we demonstrate the theoretical findings in
Theorems~\ref{thm:eexponent-async} and~\ref{thm:eexponent-sync} on the
decaying rate of the error probability w.r.t. the number of samples
$n$ for various values of $\gamma$. In both {\bf ASYNC-ALGO} and {\bf
  SYNC-ALGO}, for a fixed $\gamma$, we run both algorithms for $200$
times each, and measure their error probabilities. In
Figures~\ref{fig:async-err-sample} and~\ref{fig:sync-err-sample}, we
observe that the error probability $\mathbb{P}(\set{A}_n)$ for every
$\gamma$ decays exponentially as the sample size $n$ increases, as
established in \eqref{eq:sbound-async} and \eqref{eq:sbound-sync}. It
is interesting to see that a different choice of $\gamma$ leads to a
different decaying rate, which can be understood by our analytical
findings of the crossover rate in \eqref{eq:crate-async} and
\eqref{eq:crate-sync}, simply given by:
\begin{eqnarray*} J_{e,e'}(T) = \underset{Q \in
    \set{P}(\set{X}^4)}{\inf} & \Big\{D_{\text{KL}}(Q \parallel
  P_{e,e'}) : w_e(Q) = w_{e'}(Q) \Big\},
\end{eqnarray*}
where the edge weights for {\bf ASYNC-ALGO} and {\bf SYNC-ALGO} are
assigned in different forms, yet depending on the value of $\gamma$,
as seen in \eqref{eq:weight-async} and \eqref{eq:weight-sync}. Some
choice of $\gamma$ makes a difference of the corresponding edge
weights highly small, so that it becomes easier to estimate wrong
edges with an insufficient number of samples. In our simulation, {\bf
  ASYNC-ALGO} with $\gamma = 2$ shows higher error probability of
$0.2$ with $n = 10^{4}$ samples, while that with $\gamma = 0$ achieves
almost $0$ error probability with less than $3000$ samples, see
Figure~\ref{fig:async-err-sample}. This impact of $\gamma$ on the
error probability is presented in Figures~\ref{fig:async-err-gamma}
and~\ref{fig:sync-err-gamma} for both algorithms, where for large
$\gamma$, we observe the error probability decays at a higher rate in
general, since the priority to the inference accuracy is
insignificant, leading to less chance of experiencing the crossover
event.

\section{Conclusion} \label{sec:conclusion}

In many multi-agent networked systems, a variety of applications
involve distributed in-network statistical inference tasks, such as
MAP (maximum a posteriori), exploiting a given knowledge of
statistical dependencies among agents. When agents are
spatially-separated, running an inference algorithm leads to a
non-negligible amount of communication cost due to inevitable
message-passing, coming from the difference between data dependency
and physical connectivity. In this paper, we consider a structure
learning problem which recovers the statistical dependency from a set
of data samples, which also considers the communication cost incurred
by the applied distributed inference algorithms to the learnt data
graph. To this end, we first formulate an optimization problem
formalizing the trade-off between inference accuracy and cost, whose
solution chooses a tunable point in-between them. As an inference
task, we studied the distributed MAP and their two implementations
{\bf ASYNC-MAP} and {\bf SYNC-MAP} that have different cost generation
structures. In {\bf ASYNC-MAP}, we developed a polynomial time,
optimal algorithm, inspired by the problem of finding a maximum weight
spanning tree, while we proved that the optimal learning in {\bf
  SYNC-MAP} is NP-hard, thus proposed a greedy heuristic. For both
algorithms, we then established how the error probability that the
learnt data graph differs from the ideal one decays as the number of
samples grows, using the large deviation principle.

%%% Local Variables: 
%%% mode: latex
%%% TeX-master: "main"
%%% End:

\begin{small}
\bibliographystyle{unsrt}
\bibliography{ref}

\begin{thebibliography}{10}

\bibitem{zhao2015energy}
Wei Zhao and Yao Liang.
\newblock Energy-efficient and robust in-network inference in wireless sensor
  networks.
\newblock {\em IEEE transactions on cybernetics}, 45(10):2105--2118, 2015.

\bibitem{schiff2007robust}
Jeremy Schiff, Dominic Antonelli, Alexandros~G Dimakis, David Chu, and Martin~J
  Wainwright.
\newblock Robust message-passing for statistical inference in sensor networks.
\newblock In {\em International Symposium on Information Processing in Sensor
  Networks}, pages 109--118. IEEE, 2007.

\bibitem{paskin2005robust}
Mark Paskin, Carlos Guestrin, and Jim McFadden.
\newblock A robust architecture for distributed inference in sensor networks.
\newblock In {\em International symposium on Information processing in sensor
  networks}, page~8. IEEE Press, 2005.

\bibitem{veeravali2012distinf}
Venugopal~V Veeravalli and Pramod~K Varshney.
\newblock Distributed inference in wireless sensor networks.
\newblock {\em Philosophical Transactions of the Royal Sociaty A},
  34(3):100--117, 2012.

\bibitem{doost2014social}
Mohammadreza Doostmohammadian and Usman~A Khan.
\newblock Graph-theoretic distributed inference in social networks.
\newblock {\em IEEE Journal of Selected Topics in Signal Processing},
  8(4):613--623, 2014.

\bibitem{khan08kalman}
Usman~A Khan and Jose~MF Moura.
\newblock Distributing the kalman filter for large-scale systems.
\newblock {\em IEEE transactions on singal processing}, 56(10):4919--4935,
  2008.

\bibitem{ace11opinion}
Daron Acemoglu and Asuman Ozdaglar.
\newblock Opinion dynamics and learning social networks.
\newblock {\em Dynamic Games and Applications}, 1(1):3--49, 2011.

\bibitem{fu2017deanony}
Xinzhe Fu, Zhongzhao Hu, Zhiying Xu, Luoyi Fu, and Xinbing Wang.
\newblock De-anonymization of social networks with communities: When
  quantifications meet algorithms.
\newblock {\em arxiv preprint arXiv:1703.09028}, 2017.

\bibitem{pearl14prob}
Judea Pearl.
\newblock {\em Probabilistic reasoning in intelligent systems: networks of
  plausible inference}.
\newblock Morgan Kaufmann, 2014.

\bibitem{globerson08map}
Amir Globerson and Tommi~S Jaakkola.
\newblock Fixing max-product: Convergent message passing algorithms for map
  lp-relaxations.
\newblock In {\em Advances in neural information processing systems}, pages
  553--560, 2008.

\bibitem{wright05map}
Martin~J Wainwright, Tommi~S Jaakkola, and Alan~S Willsky.
\newblock Map estimation via greement on trees: message-passing and linear
  programming.
\newblock {\em IEEE transactions on information theory}, 51(11):3697--3717,
  2005.

\bibitem{weiss01max}
Yair Weiss and William~T Freeman.
\newblock On the optimality of solutions of the max-product belief-propagation
  algorithm in arbitrary graphs.
\newblock {\em IEEE transactions on information theory}, 47(2):736--744, 2001.

\bibitem{ravikumar2010l1}
Pradeep Ravikumar, Martin~J Wainwright, and John~D Lafferty.
\newblock High-dimensional ising model selection using ℓ1-regularized
  logistic regression.
\newblock {\em The Annals of Statistics}, 38(3):1287--1319, 2010.

\bibitem{friedman2008lasso}
Jerome Friedman, Trevor Hastie, and Robert Tibshirani.
\newblock Sparse inverse covariance estimation with the graphical lasso.
\newblock {\em Biostatistics}, 9(3):432--441, 2008.

\bibitem{abbeel06learning}
Pieter Abbeel, Daphne Koller, and Andrew~Y Ng.
\newblock Learning factor graphs in polynomial time and sample complexity.
\newblock {\em Journal of Machine Learning Research}, 7(Aug):1743--1788, 2006.

\bibitem{meila00learning}
Marina Meila and Michael~I Jordan.
\newblock Learning with mixtures of trees.
\newblock {\em Journal of Machine Learning Research}, 1(Oct):1--48, 2000.

\bibitem{chow1968tree}
C~Chow and C~Liu.
\newblock Approximating discrete probability distributions with dependence
  trees.
\newblock {\em IEEE transactions on Information Theory}, 14(3):462--467, 1968.

\bibitem{dasgupta1999poly}
Sanjoy Dasgupta.
\newblock Learning polytrees.
\newblock In {\em Conference on Uncertainty in artificial intelligence}, pages
  134--141. Morgan Kaufmann Publishers Inc., 1999.

\bibitem{anandkumar2012mixture}
Anima Anandkumar, Furong Huang, Daniel~J Hsu, and Sham~M Kakade.
\newblock Learning mixtures of tree graphical models.
\newblock In {\em Advances in Neural Information Processing Systems}, pages
  1052--1060, 2012.

\bibitem{chen05mtt}
Lei Chen, Mujdat Cetin, and Alan~S Willsky.
\newblock Distributed data association for multi-target tracking in sensor
  networks.
\newblock 2005.

\bibitem{cetin06fusion}
Cetin Mujdat, Lei Chen, John~W Fisher, Alexander~T Ihler, Randolph~L Moses,
  Martin~J Wainwright, and Alan~S Willsky.
\newblock Distributed fusion in sensor networks.
\newblock {\em IEEE Singal Processing Magazine}, 23(4):42--55, 2006.

\bibitem{xue08tracking}
Jianru Xue, Nanning Zheng, Jason Geng, and Xiaopin Zhong.
\newblock Tracking multiple visual targets via particle-based belief
  propagation.
\newblock {\em IEEE transactions on systems, man, and cybernetics, Part B
  (Cybernetics)}, 38(1):196--209, 2008.

\bibitem{chamber07detect}
Jean-Francois Chamberland and Venugopal~V Veeravalli.
\newblock Wireless sensors in distributed detection applications.
\newblock {\em IEEE signal processing magazine}, 24(3):16--25, 2007.

\bibitem{ihler05bp}
Alexander~T Ihler, John~W Fisher, Randolph~L Moses, and Alan~S Willsky.
\newblock Nonparametric belief propagation for self-localization of sensor
  networks.
\newblock {\em IEEE Journal on Selected Areas in Communications},
  23(4):809--819, 2005.

\bibitem{wright03sum}
Martin~J Wainwright, Tommi~S Jaakkola, and Alan~S Willsky.
\newblock Tree-based reparameterization framework for analysis of sum-product
  and related algorithms.
\newblock {\em IEEE transactions on information theory}, 49(5):1120--1146,
  2003.

\bibitem{ihler05lbp}
Alexander~T Ihler, W~Fisher John~III, and Alan~S Willsky.
\newblock Loopy belief propagation: Convergence and effects of message errors.
\newblock {\em Journal of Machine Learning Research}, 6(May):905--936, 2005.

\bibitem{tan2011tree}
Vincent~YF Tan, Animashree Anandkumar, Lang Tong, and Alan~S Willsky.
\newblock A large-deviation analysis of the maximum-likelihood learning of
  markov tree structures.
\newblock {\em IEEE transactions on Information Theory}, 57(3):1714--1735,
  2011.

\bibitem{bresler2015ising}
Guy Bresler.
\newblock Efficiently learning ising models on arbitrary graphs.
\newblock In {\em Annual ACM on Symposium on Theory of Computing}, pages
  771--782. ACM, 2015.

\bibitem{zuk2012bay}
Or~Zuk, Shiri Margel, and Eytan Domany.
\newblock On the number of samples needed to learn the correct structure of a
  bayesian network.
\newblock {\em arxiv preprint arXiv:1206.6862}, 2012.

\bibitem{kreidl06infcomm}
O~Patrick Kreidl and Alan~S Willsky.
\newblock Inference with minimal communication: A decision-theoretic
  variational approach.
\newblock In {\em Advances in neural information processing systems}, pages
  675--682, 2006.

\bibitem{dauwels07varmsg}
Justin Dauwels.
\newblock On variational message passing on factor graphs.
\newblock In {\em IEEE International Symposium on Information Theory}, pages
  2546--2550. IEEE, 2007.

\bibitem{zhou2016secon}
Chongyu Zhou, Chen-Khong Tham, and Mehul Montani.
\newblock Optimizing graphical model structure for distributed inference in
  wireless sensor networks.
\newblock In {\em IEEE International Conference on Sensing, Communication, and
  Networking (SECON)}, pages 1--9. IEEE, 2016.

\bibitem{elidan12residual}
Gal Elidan, Ian McGraw, and Daphne Koller.
\newblock Residual belief propagation: Informed scheduling for asynchronous
  message passing.
\newblock {\em arxiv preprint arXiv:1206.6837}, 2012.

\bibitem{garey02computers}
Michael~R Garey and David~S Johnson.
\newblock {\em Computers and intractability}, volume~29.
\newblock Springer-Verlag, Berlin, 2002.

\bibitem{renyi67height}
A~R{\'e}nyi and G~Szekeres.
\newblock On the hieght of trees.
\newblock {\em Journal of Autralian Mathematicl Society}, 7(4):497--507, 1967.

\bibitem{bucklew90ldp}
James~A Bucklew.
\newblock {\em Large deviation techniques in decision, simulation, and
  estimation}.
\newblock Wiley New York, 1990.

\bibitem{dembo2010ld}
Amir Dembo and Ofer Zeitouni.
\newblock {\em Large deviations techniques and applications, volume 38 of
  Stochastic Modelling and Applied Probability}.
\newblock Springer-Verlag, Berlin, 2010.

\end{thebibliography}
\end{small}

% \clearpage

 \appendix
 \section{Appendix} \label{sec:appendix}

%\section{Proofs.} \label{sec:proofs}

\subsection{Proof of Theorem~\ref{thm:sync-np}} \label{sec:proof-np}

To prove the NP-hardness of the problem {\bf SYNC}$({\bm n})$ in
\eqref{eq:CDG-learn-sync}, we need some assumptions. Per-message
communication cost satisfies following: {\em (a)} for all $i,j$, we
have $c_{i,j} > 0$ and {\em (b)} for all distinct $i,j,k$, we have
$c_{i,j} + c_{j,k} \geq c_{i,k}$. We assume that mutual information
and communication cost are defined on the complete graph
$\set{G}=(\set{V},\set{E})$. As we expressed in \eqref{eq:obj-diam},
for given ${\bm x}^{1:n}$, the objective function of a tree
$T_Q \in \set{T}$ for a fixed $\gamma \in \mathbb{R}_{\geq 0}$ is
\begin{align*} {\set{O}}^\gamma(T_Q, \hat{P}({\bm x}^{1:n})) = \sum_{e
    \in \set{E}_Q} I(\hat{P}_e) - 2\gamma D_{T_Q} \cdot c_e.
\end{align*}
For convenience, we denote by $c_e^\gamma = 2\gamma c_e$ in the
remaining of the paper. Then, the problem {\bf SYNC}$({\bm n})$ is
equivalent to find a tree
$T^\gamma({\bm x}^{1:n}) = \arg\max_{T_Q \in \set{T}}~
\set{O}^\gamma(T_Q).$

% We note that it suffices to consider a fixed $\gamma \in [0,1]$, since
% only $c_{i,j}^\gamma$ plays a crucial role in this analysis.

\begin{proof}
  We reduce the problem {\bf SYNC}$({\bm n})$ in
  \eqref{eq:CDG-learn-sync} to the known NP-complete problem {\em
    Exact Cover by 3-sets}, simply called {\bf X3C} problem. We first
  describe what the {\bf X3C} problem is.

  \noindent {\bf \em Exact Cover by 3-sets (X3C).}
  Given a set $S = \{1,2,\cdots,3s\}$ of nodes and a set
  $F = \{f_1,f_2,\cdots,f_q\}$ of $3$-element subsets of $S$, {\bf
    X3C} problem is the {\em decision problem} which determines that
  whether there exists $F' \subset F$ such that (i) the union of the
  elements of $F'$ is $S$ and (ii) the intersection of any two
  elements of $F'$ is an empty set. It is known to be NP-complete.

  To prove the NP-hardness of {\bf SYNC}$({\bm n})$ in
  \eqref{eq:CDG-learn-sync}, we build a specific (tree) data
  distribution $\bar{P}(x)$ with
  $T_{\bar{P}} = (\set{V}, \set{E}_{\bar{P}})$ for the $d = 3s+q+3$
  node variables in $\set{V}$ and a specific cost functions
  $\bar{c}^\gamma \defeq \{\bar{c}_{i,j}^\gamma\}_{(i,j) \in \set{V}
    \times \set{V}}$, so that the corresponding optimization problem
  can be converted to the {\bf X3C} problem. We prove that under such
  a specific situation, the diameter of the optimal tree in
  \eqref{eq:sol-diam} should be $4$ and we can solve the {\bf X3C}
  problem if and only if we have optimal tree solution.

  \noindent {\bf \em Construction of the joint distribution
    $\mathbf{\bar{P}(x})$.} There are $3s+q+3$ node variables in
  $\set{V}$, where $X = \{X_1, \cdots, X_{3s}\}$ denote the element
  nodes and $Y = \{Y_1, \cdots, Y_q\}$ denote the subset nodes. The
  remaining three nodes are denoted by $Z_0,Z_1$ and $Z_2$, \ie,
  $\set{V} = X \cup Y \cup \{Z_0, Z_1, Z_2\}$. First, we construct a
  tree $T_{\bar{P}}$ (whose distribution would be defined below),
  where the node variable $Z_0$ is a root node and the subset nodes
  $Y_1, \cdots, Y_q$ are connected to $Z_0$. Each subset node of $Y$
  is connected to the element nodes of $X$ up to $3$ nodes, and the
  element nodes are leaf nodes in the tree. Finally, $Z_1$ is
  connected to $Z_0$ and $Z_2$ and $Z_2$ has a single connection to
  $Z_1$. It is obvious that $T_{\bar{P}}$ is a spanning tree with set
  of nodes $\set{V}$, and there exist four kinds of edges in
  $\set{E}_{\bar{P}}$: (i) $(X_i,Y_j)$ for some $X_i \in X$ and
  $Y_j \in Y$, (ii) $(Y_j, Z_0)$ for all $Y_j \in Y$, (iii)
  $(Z_0,Z_1)$ and (iv) $(Z_1,Z_2)$, and finally its diameter is $4$,
  \ie, $D_{T_{\bar{P}}} = 4$.
  
  We now construct the tree distribution $\bar{P}(x)$ of the tree
  $T_{\bar{P}}$ as follows. For all node variables $v \in \set{V}$, we
  set the marginal distribution as Bernoulli distribution with
  probability $\frac{1}{2}$. For marginal distribution of the edges,
  we set as follows: for $(X_i,Y_j)$ and
  $(Y_j,Z_0) \in \set{E}_{\bar{P}}$,
  \begin{align} \label{eq:edgejoint} 
    \bar{P}(X_i=0, Y_j=0) &= \bar{P}(X_i=1, Y_j=1) = \frac{1}{4} + \frac{1}{2}\delta_d, \cr
    \bar{P}(X_i=0, Y_j=1) &= \bar{P}(X_i=1, Y_j=0) = \frac{1}{4} - \frac{1}{2}\delta_d, \cr
    \bar{P}(Y_j=0, Z_0=0) &= \bar{P}(Y_j=1, Z_0=1) = \frac{1}{4} + \frac{1}{2}\delta_d, \cr
    \bar{P}(Y_j=0, Z_0=1) &= \bar{P}(Y_j=1, Z_0=0) = \frac{1}{4} - \frac{1}{2}\delta_d,
  \end{align}
  where $\delta_d$ is a positive constant determined by the number of
  nodes $d$. For remaining edges $(Z_0,Z_1)$ and $(Z_1,Z_2)$, we set
  \begin{align*}
    \bar{P}(Z_0=0, Z_1=0) &= \bar{P}(Z_0=1, Z_1=1) = \frac{9}{10}, \cr
    \bar{P}(Z_0=0, Z_1=1) &= \bar{P}(Z_0=1, Z_1=0) = \frac{1}{10}, \cr
    \bar{P}(Z_1=0, Z_2=0) &= \bar{P}(Z_1=1, Z_2=1) = \frac{9}{10}, \cr
    \bar{P}(Z_1=0, Z_2=1) &= \bar{P}(Z_1=1, Z_2=0) = \frac{1}{10}.
  \end{align*}
  The value of $\frac{9}{10}$ can be any arbitrary constant close to
  $1$ and $\delta_d$ should decrease to $0$ as $d \rightarrow \infty$.

  We remark the following Corollary, which is an obvious result from
  the property of the tree distribution in \eqref{eq:tree-dist}.
  \begin{corollary} \label{corr:tree-path} Given a tree distribution
    $P(x)$ in \eqref{eq:tree-dist} with tree structure
    $T_P = (\set{V},\set{E}_P)$, for arbitrary node pair $(a,b)$,
    there is an unique path
    ${\rm W}((a,b); \set{E}_P) \defeq \{ W_0(=a), \cdots, W_l(=b) \}$
    between $X_a$ and $X_b$, such that $(W_k,W_{k+1}) \in \set{E}_P$
    for $1 \leq k \leq l-1$. Moreover, the joint distribution for the
    path ${\rm W}((a,b); \set{E}_P)$ is given by
    \begin{align} \label{eq:tree-path}
      &P(W_{0}
      =w_{0},...,W_{k}=w_{k},...,W_{l}=w_{l}) \cr & \quad = P(W_{0}=w_{0}) \prod
      _{k=0}^{l-1} P(W_{k+1}=w_{k+1} | W_{k}=w_{k}).
    \end{align}
  \end{corollary}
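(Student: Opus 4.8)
The plan is to split the claim into its two parts---uniqueness of the path, and the Markov-chain form of the marginal---and to handle the second part by integrating out the off-path variables one leaf at a time using the factorization \eqref{eq:tree-dist}. The first part is immediate: $T_P$ is a tree, hence connected and acyclic, so between any two nodes $a$ and $b$ there is exactly one simple path, which we write as $W_0(=a), W_1, \dots, W_l(=b)$ with $(W_k,W_{k+1})\in\set{E}_P$. It then remains to show that the marginal $P(W_0=w_0,\dots,W_l=w_l)$, obtained by summing the full joint \eqref{eq:tree-dist} over all variables not on the path, equals the right-hand side of \eqref{eq:tree-path}.

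The core is a leaf-marginalization lemma: if $\ell$ is a leaf of a tree-distributed model with unique neighbor $p$, then $\sum_{x_\ell} P({\bm x})$ is again a tree distribution of the form \eqref{eq:tree-dist}, now over $\set{V}\setminus\{\ell\}$ with edge set $\set{E}_P\setminus\{(\ell,p)\}$ and with the same node and edge marginals. Indeed, the only factors in \eqref{eq:tree-dist} that involve $x_\ell$ are $P_\ell(x_\ell)$ and $\frac{P_{\ell,p}(x_\ell,x_p)}{P_\ell(x_\ell)P_p(x_p)}$, whose product is $\frac{P_{\ell,p}(x_\ell,x_p)}{P_p(x_p)}$; since $\sum_{x_\ell}P_{\ell,p}(x_\ell,x_p)=P_p(x_p)$, these factors sum to $1$ and disappear, leaving precisely the tree distribution on the remaining nodes. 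Because this reduced distribution is literally the $x_\ell$-marginal of $P$, its marginals agree with those of $P$, so the same $P_i$ and $P_{i,j}$ may be reused at every step.

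I would then apply this lemma repeatedly to strip away all off-path variables. The key combinatorial point is that, as long as any off-path node remains in the current tree $T'$, there is always an off-path \emph{leaf} available to remove: an interior path node $W_k$ ($1\le k\le l-1$) stays adjacent to both $W_{k-1}$ and $W_{k+1}$, which are never deleted, so it always has degree at least $2$ and is never a leaf; hence the only path nodes that could be leaves of $T'$ are the endpoints $W_0,W_l$, and if every leaf of $T'$ lay on the path then $T'$ would have at most two leaves and would itself be the path $W_0$--$\cdots$--$W_l$, contradicting the presence of an off-path node. Deleting off-path leaves one by one---each step preserving the form \eqref{eq:tree-dist} by the lemma---terminates with the tree distribution supported exactly on the line graph $W_0$--$W_1$--$\cdots$--$W_l$.

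Finally, specializing \eqref{eq:tree-dist} to this line graph and collapsing the node factors against the edge denominators gives
\[
P(w_0,\dots,w_l)=\prod_{k=0}^{l}P_{W_k}(w_k)\prod_{k=0}^{l-1}\frac{P_{W_k,W_{k+1}}(w_k,w_{k+1})}{P_{W_k}(w_k)P_{W_{k+1}}(w_{k+1})}=P_{W_0}(w_0)\prod_{k=0}^{l-1}\frac{P_{W_k,W_{k+1}}(w_k,w_{k+1})}{P_{W_k}(w_k)},
\]
and since $\frac{P_{W_k,W_{k+1}}(w_k,w_{k+1})}{P_{W_k}(w_k)}=P(W_{k+1}=w_{k+1}\mid W_k=w_k)$, this is exactly \eqref{eq:tree-path}. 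I expect the main obstacle to be bookkeeping rather than anything analytic: one must argue carefully that an off-path leaf exists at every stage and that each deletion keeps the remaining graph a tree, after which the algebraic telescoping is routine. (An alternative to the inductive marginalization would be to invoke the global Markov property implied by the pairwise property \eqref{eq:pair-markov}, since deleting $W_k$ separates $\{W_0,\dots,W_{k-1}\}$ from $\{W_{k+1},\dots,W_l\}$ and yields the conditional independence defining the chain; but the leaf-stripping argument keeps everything inside the explicit factorization \eqref{eq:tree-dist}.)
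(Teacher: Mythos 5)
Your proof is correct, and in fact it supplies more than the paper does: the paper simply asserts Corollary~\ref{corr:tree-path} as ``an obvious result from the property of the tree distribution in \eqref{eq:tree-dist}'' and gives no argument at all. Your leaf-stripping route is a clean way to make that assertion rigorous. The leaf-marginalization lemma is right (the two factors containing $x_\ell$ collapse to $P_{\ell,p}(x_\ell,x_p)/P_p(x_p)$, which sums to $1$ over $x_\ell$, and the surviving factors are exactly the tree factorization on $\set{V}\setminus\{\ell\}$ with the same marginals), the combinatorial claim that an off-path leaf always exists while off-path nodes remain is correctly justified by the degree argument on interior path nodes, and the final telescoping of the line-graph factorization into $P_{W_0}(w_0)\prod_{k=0}^{l-1}P(W_{k+1}=w_{k+1}\mid W_k=w_k)$ checks out. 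The only implicit assumption worth flagging is positivity of the marginals $P_{W_k}(w_k)$ so that the conditional probabilities are well defined, which the paper also takes for granted. Your parenthetical alternative via the global Markov property would work too, but the explicit marginalization argument is preferable here precisely because it stays within the factorization \eqref{eq:tree-dist} that the corollary cites.
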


  %   \begin{corollary} \label{corr:tree-path} Given a tree distribution
  %   $P(x)$ in \eqref{eq:tree-dist} of random vector $X$ with tree
  %   structure $T_P = (\set{V},\set{E}_P)$, for arbitrary node
  %   variables $X_a$ and $X_b$, there is an unique path
  %   $W_{a,b} \defeq \{ W_0(=X_a), \cdots, W_l(=X_b) \}$ between $X_a$
  %   and $X_b$, such that $(W_k,W_{k+1}) \in \set{E}_P$ for
  %   $1 \leq k \leq l-1$. Moreover, the joint distribution for the path
  %   $W_{a,b}$ is given by
  %   \begin{align} \label{eq:tree-path}
  %     &P(W_{0}
  %     =w_{0},...,W_{k}=w_{k},...,W_{l}=w_{l}) \cr & \quad = P(W_{0}=w_{0}) \prod
  %     _{k=0}^{l-1} P(W_{k+1}=w_{k+1} | W_{k}=w_{k}).
  %   \end{align}
  % \end{corollary}
  
  % \smallskip
  % \noindent {\bf \em Constructed mutual information between the node
  %   variables.} 
  From the Corollary~\ref{corr:tree-path}, the mutual information
  among the node variables $X \cup Y \cup \{Z_0\}$ is determined by
  the length of the path, \ie, the number of hops, between them. Since
  the tree has a dimaeter of $4$, a length of the path between
  arbitrary nodes in $X \cup Y \cup \{Z_0\}$ is either $1,2,3,$ or $4$
  with corresponding mutual information of $I_1, I_2, I_3,$ or $I_4$,
  respectively. For example, between two nodes $X_i$ and $X_k$, there
  is a path $(X_i,X_j,X_k)$ in tree $T_{\bar{P}}$. From the simple
  calculation, the joint distribution is
  \begin{align*}
    \bar{P}(X_i=1,X_k=1) & = \bar{P}(X_i=1|X_j=0)\bar{P}(X_j=0|X_k=1)\bar{P}(X_k=1) \cr
    & \quad + \bar{P}(X_i=1|X_j=1)\bar{P}(X_j=1|X_k=1)\bar{P}(X_k=1) \cr
    & = \frac{1}{2} \left( \left(\frac{1}{2} + \delta_d\right)^2 + \left(\frac{1}{2} - \delta_d\right)^2 \right)
    = \frac{1}{4} + \frac{1}{2} \cdot 2 \delta_d^2.
  \end{align*}
  Consequently, we can easily check that
  \begin{align*} %\label{eq:I-mi}
    I_k = \left( \frac{1}{2} + m_k\right) \ln (1+2m_i) + \left( \frac{1}{2} - m_k\right) \ln (1-2m_k),
  \end{align*}
  where $m_k = 2^{k-1} \delta_d^k$ for $k=1,2,3,4$. By using the
  Taylor approximation for $\ln$ function, we have as
  $\delta_d \rightarrow 0$,
  \begin{align*}
    I_k \cong 4m_k^2 \cong (2 \delta_d)^{2k}.
  \end{align*}
  Moreover, mutual information among the variables $Z_i$s are given by
  \begin{align*}
    I(Z_0,Z_1) = I(Z_1,Z_2) = \alpha_1 = 0.368, \quad I(Z_0,Z_2) = \alpha_2 = 0.237.
  \end{align*}
  We also can get the exact mutual information value between the node
  variables in $X \cup Y \cup \{Z_0\}$ and $Z_1$ or $Z_2$, however, it
  is obvious that it is less than $I_1$, which is enough for our
  remaining proof.

  \smallskip
  \noindent {\bf \em Construction of the communication cost functions
    $\mathbf{\bar{c}^\gamma}$.} Before the construction of the cost
  functions $\{ \bar{c}_{i,j}^\gamma \}_{(i,j) \in \set{E}}$, we
  consider a supergraph $\set{S} = (\set{V}, \set{E}_{\set{S}})$ of
  the constructed tree $T_{\bar{P}}$. The supergraph $\set{S}$ has the
  same node set $\set{V}$ but super edge set $\set{E}_{\set{S}}$ where
  all subset nodes in $Y$ are connected to the {\em exactly} $3$
  element nodes in $X$. % As we mentioned, $Y_{j}$s can be
  % connected up to 3 element nodes. So if a subset node has already
  % been connected 3 element nodes, it would not have a chance to
  % connected with more nodes. On the other hand, if a subset node has
  % been connected to only 1 element node, it must get 2 more connection
  % with the element nodes in the supergraph $\mathcal{S}$.

  Now, we can classify the edges in the complete graph
  $\set{G} = (\set{V}, \set{E})$ into the following $9$ types
  according to its mutual information and communication cost values.
  \begin{align*}
    &\textsf{T1}: (X_i,Y_j) \in \set{E}_{\bar{P}}, ~ \textsf{T2}: (X_i,Y_j) \in \set{E}_{\set{S}} \setminus \set{E}_{\bar{P}}, ~ \textsf{T3}: (X_i,Y_j) \in \set{E} \setminus \set{E}_{\set{S}}, \cr
    &\textsf{T4}: (Y_i,Y_j) \in \set{E}, \quad \textsf{T5}: (Y_j,Z_0) \in \set{E}, \quad \qquad \textsf{T6}: (X_i,Z_0) \in \set{E}, \cr
    &\textsf{T7}: (X_i,X_j) \in \set{E}, ~\mbox{}~\mbox{}~ \textsf{T8}: (Z_0,Z_1), (Z_1,Z_2), \quad \textsf{T9}: \text{others}.
  \end{align*}
  The corresponding mutual information and cost function of each type
  of edges are presented in Table~\ref{tbl:mi-cost}. The cost of edges
  in \textsf{T8} and $(Z_1,X_i)$ and $(Z_1,Y_j)$ in \textsf{T9} are
  set to be $\kappa$, and the cost of $(Z_2,X_i)$ and $(Z_2,Y_j)$ in
  \textsf{T9} are set to be $2\kappa$, where $I_1 << \kappa$ as
  $s \rightarrow \infty$. One can easily check that the constructed
  cost functions $\{\bar{c}_{i,j}^\gamma\}_{(i,j) \in \set{E}}$
  satisfy the triangle inequality.
  
  \begin{table}[t!]
    \centering
    \caption{Mutual information $I(\bar{P})$ and the communication
      cost $\bar{c}_{i,j}^\gamma$ for each type of eadges}
    \label{tbl:mi-cost}
    \begin{tabular}{|c|c|}
      \hline 
      Edge type  & (Mutual information, Communication cost) \\  
      \hline 
      \textsf{T1}	& ($I_{1},I_{1}$) \\
      \hline 
      \textsf{T2}	& ($I_{3}, \frac{3}{4}I_{1}+ \frac{1}{4}I_{3}$) \\
      \hline 
      \textsf{T3}	& ($I_{3}, \frac{5}{4}I_{1}+ \frac{1}{4}I_{2}+ \frac{1}{4}I_{3}$) \\
      \hline 
      \textsf{T4}	& ($I_{2}, \frac{3}{4}I_{1}+ \frac{1}{4}I_{2}$)  \\
      \hline 
      \textsf{T5}	& ($I_{1}, \frac{3}{2}I_{1}$)  \\
      \hline 
      \textsf{T6}	& ($I_{2}, \frac{11}{8}I_{1}+ \frac{1}{4}I_{2}$)  \\
      \hline 
      \textsf{T7}	& ($I_{2}, \frac{11}{8}I_{1}+ \frac{1}{4}I_{2}$)	\\
      \hline 
      \textsf{T8}	& ($\alpha_{1}=0.368, \kappa$) 	\\
      \hline 
      \textsf{T9}	& (Less than $I_{1}$ except $\alpha_{2}=I(Z_{0},Z_{2})$, $\kappa$ or $2\kappa$)	\\
      \hline 
    \end{tabular}
  \end{table}
  
  \smallskip
  \noindent {\bf \em Optimal spanning tree $T^\star$.} From the
  objective function in \eqref{eq:obj-diam}, our optimal spanning tree
  is a maximum weight spanning tree (MWST) where the weight of edge
  $(i,j)$ is defined as
  $w_{i,j} = I(P_{i,j}) - D_{T^\star}\cdot c_{i,j}^\gamma$. There are
  two main reasons that difficulty of this problem arise. First, the
  MWST with the weight $I(P_{i,j}) - 2 \cdot c_{i,j}^\gamma$ could be
  different from the MWST with the weight
  $I(P_{i,j}) - 6 \cdot c_{i,j}^\gamma$. Second, MWST with the weight
  from a specific diameter value has no guarantee that it has the
  certain diameter. For example, even though we get a MWST with the
  weight $I(P_{i,j}) - 2 \cdot c_{i,j}^\gamma$, we cannot assure its
  diameter is $2$. To handle these issues, we will show that under the
  constructed scenario of $\bar{P}$ and $\bar{c}^\gamma$ in
  Table~\ref{tbl:mi-cost}, the diameter of the optimal spanning tree
  is always 4. For simplicity, we denote by $T^\star$ by the optimal
  spanning tree under the $\bar{P}$ and $\bar{c}^\gamma$, \ie,
  $T^\star$ is a maximum weight spanning tree with the weight
  $I(\bar{P}_{i,j}) - D_{T^\star} \cdot \bar{c}_{i,j}^\gamma$, and
  moreover, it is attained when $D_{T^\star} = 4$.

  \smallskip
  \noindent {\bf \em Reduction to the X3C Problem.} With the
  supergraph $\set{S}$, we can make an instance for {\bf X3C}
  problem. The nodes $X_1, \cdots, X_{3s}$ and $Y_1, \cdots, Y_q$
  correspond to the element nodes and subset nodes in {\bf X3C}
  problem, respectively. The $3$ element nodes connected to each
  subset node $Y_j$ stand for the $3$-element subsets of $Y_j$. It is
  clear that we have total $d = 3s+q+3$ number of nodes. For
  simplicity, we denote by $T_k^* = (\set{V},\set{E}_k^*)$ the optimal
  spanning tree with a fixed diameter $k$. That is, we re-express the
  objective function for a fixed $k = 2, \cdots, d-1$,
  \begin{align*}
    \bar{\set{O}}^\gamma_k(T) = \sum_{(i,j) \in \set{E}_T} I(\bar{P}_{i,j}) - k \cdot \sum_{(i,j) \in \set{E}_T} \bar{c}_{i,j}^\gamma,
  \end{align*}
  and thus
  $T_k^* = \underset{T \in \set{T}}{\arg\max}
  ~\bar{\set{O}}^\gamma_k(T).$
%  \begin{eqnarray*}
%    T_k^* = \underset{T \in \set{T}}{\arg\max} ~\bar{\set{O}}^\gamma_k(T).
%  \end{eqnarray*}
  It is obvious that the optimal solution tree $T^\star$ is the best
  spanning tree among $\{T_k^*\}_{k = 2, \cdots, d-1}$. We now state
  our main result in following Lemma that if we find $T^\star$, we can
  decide whether there is a solution for the {\bf X3C} problem.
  
  \begin{lemma} \label{lem:red-x3c} Suppose that the mutual
    information $I(\bar{P})$ and cost functions $\bar{c}^\gamma$ are
    defined as in Table~\ref{tbl:mi-cost}. Let
    $\delta_d = \frac{\alpha_1 - \alpha_2}{\sqrt{4 (3s + q)}}$ and
    $\kappa = \frac{9}{8}s I_1.$ Then, there is a solution of the
    corresponding {\bf X3C} problem if and only if the diameter of
    $T^\star$ is $4$, \ie, $D_{T^\star} = 4$, and
    \begin{align*}
      \set{O}^\gamma(T^\star) &= \sum_{(i,j) \in \set{E}_{T^\star}} I(\bar{P}_{i,j}) - D_{T^\star} \sum_{(i,j) \in \set{E}_{T^\star}} \bar{c}_{i,j}^\gamma \cr
      &= 2 \alpha_1 - (11s + 3q)I_1 - 8 \kappa.
    \end{align*}
  \end{lemma}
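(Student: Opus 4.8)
The plan is to reduce the optimization over all spanning trees to a purely combinatorial covering problem. The first move is to compute, for a tree of fixed diameter $D_T=4$, the net contribution $I(\bar P_e)-4\bar c_e^\gamma$ that each edge type of Table~\ref{tbl:mi-cost} makes to $\bar{\set O}^\gamma_4$. The key point is that the costs have been reverse-engineered so that these contributions collapse to clean integer multiples of $I_1$, with the $I_2,I_3$ terms cancelling \emph{exactly}: an element attached to a subset \emph{containing} it contributes $-3I_1$ whether the edge is of type \textsf{T1} ($I_1-4I_1$) or \textsf{T2} ($I_3-4(\tfrac34I_1+\tfrac14I_3)$); a subset--subset edge \textsf{T4} contributes $I_2-4(\tfrac34I_1+\tfrac14I_2)=-3I_1$; a subset--$Z_0$ edge \textsf{T5} contributes $I_1-6I_1=-5I_1$; every ``wrong'' attachment \textsf{T3}, \textsf{T6}, \textsf{T7} contributes at most $-5I_1$; and each $\alpha_1$-edge of the chain $Z_0$--$Z_1$--$Z_2$ contributes $\alpha_1-4\kappa$. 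Because of these exact cancellations the target value can be derived with no Taylor approximation.

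Next I would determine the best diameter-$4$ tree. The diameter bound together with the chain $Z_0$--$Z_1$--$Z_2$ forces every element and subset node to sit within distance $2$ of $Z_0$, so any subset carrying element children must be \emph{active}, i.e.\ joined to $Z_0$ by a \textsf{T5} edge, and each element must reach a covering subset (the dominated attachments \textsf{T3}, \textsf{T6}, \textsf{T7} are never used). The two chain edges are forced to be the pair of $\alpha_1$-edges, since the alternative through the $\alpha_2$-edge is worse by $\alpha_1-\alpha_2+4\kappa>0$, and the $q-m$ subsets that are not active are cheapest hung as leaves off an active subset via \textsf{T4}. Letting $m$ denote the number of active subsets and summing the contributions,
\begin{align*}
\bar{\set O}^\gamma_4(T)=2\alpha_1-8\kappa-9sI_1-3qI_1-2mI_1,
\end{align*}
which is maximized by taking $m$ as small as possible. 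Since each subset node hosts at most the $3$ elements it contains, covering all $3s$ elements needs $m\ge s$, with equality exactly when the $s$ active subsets form an exact $3$-cover; plugging in $m=s$ gives the stated value $2\alpha_1-(11s+3q)I_1-8\kappa$. Hence the optimal diameter-$4$ value equals this number iff \textbf{X3C} is solvable, and is strictly smaller otherwise.

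The remaining---and hardest---step is to prove that $D_{T^\star}=4$ is the genuinely optimal diameter, which is where the diameter-\emph{dependent} weights make the analysis harder than the fixed-weight \textbf{BDMST} reduction of \cite{garey02computers}. I would rule out $D_T\in\{2,3\}$ structurally: retaining $Z_2$ at distance $2$ from $Z_0$ while still attaching elements to their covering subsets already forces diameter $\ge 4$, so lowering the diameter costs either the $\alpha_1$--$\alpha_2$ gap on the chain or the much weaker \textsf{T6} attachment of every element to $Z_0$; the calibration $\delta_d=\tfrac{\alpha_1-\alpha_2}{\sqrt{4(3s+q)}}$ (equivalently $(3s+q)I_1=(\alpha_1-\alpha_2)^2$) makes the incurred $\Theta(\alpha_1-\alpha_2)$ loss dominate the $O(sI_1)=O((\alpha_1-\alpha_2)^2)$ gain. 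Diameters $D_T\ge 5$ are rejected because the multiplier amplifies the unavoidable $\kappa$-cost of linking $Z_1,Z_2$ with no compensating information, and the choice $\kappa=\tfrac98 sI_1$ leaves a strictly positive margin that survives as $s\to\infty$. The main obstacle is exactly this calibration argument: one must check that the two constants simultaneously force $D_{T^\star}=4$ for \emph{every} instance and make the diameter-$4$ arithmetic land on $2\alpha_1-(11s+3q)I_1-8\kappa$. Granting it, combining with the covering characterization above yields the claimed equivalence, and the NP-hardness of \textbf{CDG-S}$(\bm n)$ follows from that of \textbf{X3C}.
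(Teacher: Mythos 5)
Your diameter-$4$ analysis is essentially the paper's, but organized more cleanly: where the paper exhibits two explicit candidate trees $T_4^{*,1}$ and $T_4^{*,2}$ and compares their objective values, you parameterize the family by the number $m$ of subsets attached to $Z_0$ and obtain $\bar{\set{O}}^\gamma_4(T)=2\alpha_1-8\kappa-(9s+3q+2m)I_1$, so that maximization forces $m\ge s$ with equality iff {\bf X3C} is solvable. The per-type weight computations (each \textsf{T1}/\textsf{T2}/\textsf{T4} edge contributing $-3I_1$, \textsf{T5} contributing $-5I_1$, \textsf{T3}/\textsf{T6}/\textsf{T7} dominated) agree exactly with Table~\ref{tbl:weight-diam}, and the arithmetic lands on $2\alpha_1-(11s+3q)I_1-8\kappa$ as claimed.

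However, there are two genuine gaps. First, your structural claim that the diameter bound plus the chain $Z_0$--$Z_1$--$Z_2$ forces every node of $X\cup Y$ within distance $2$ of $Z_0$ is false: one can hang the entire $X\cup Y$ cluster off $Z_1$ through a single \textsf{T9} edge (a center subset adjacent to $Z_1$, the rest attached by \textsf{T2}/\textsf{T4}) and still respect $D_T=4$. This is precisely the paper's second candidate $T_4^{*,2}$, which must be, and is, explicitly bounded away using the extra $\kappa$-scale penalty of the \textsf{T9} attachment together with $\kappa=\tfrac98 sI_1$; your covering argument silently excludes this family, so the ``only if'' direction of the equivalence is not established. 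Second, you explicitly grant (``Granting it\ldots'') the verification that diameters $2$, $3$, $5$ and $\ge 6$ are strictly worse. That verification is the bulk of the paper's proof: it identifies $\set{E}_3^*=\textsf{T5}\cup\textsf{T6}\cup\textsf{T8}$ and computes its value, lower-bounds $\set{O}^\gamma(T_5^*)$, and dismisses all $k\ge 6$ at once via the chain $\set{O}^\gamma(T_5^*)\ge\bar{\set{O}}^\gamma_5(T_6^*)\ge\set{O}^\gamma(T_6^*)$. Your qualitative reasons (the $\alpha_1-\alpha_2$ gap penalizing small diameters, the amplified $\kappa$ cost penalizing large ones, and the calibration $(3s+q)I_1\approx(\alpha_1-\alpha_2)^2$) point in the right direction and match the paper's choice of $\delta_d$ and $\kappa$, but the actual inequalities are never carried out, and they are not automatic --- the comparison between $T_3^*$ and $T_4^{*,1}$ in particular is a delicate balance of coefficients of $sI_1$ and $qI_1$. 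As written, the proposal establishes the ``if'' direction for the restricted diameter-$4$ family only.
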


  \begin{proof}
    Suppose there is a solution of the {\bf X3C} problem.

    \iffalse
      Briefly, the proof follows
      \begin{compactenum}[S1.]
      \item Find the tree $T_3^*, T_4^*$ and $T_5^*$, under the
        constructed joint distribution and cost functions.
      \item Show that $T_4^*$ is the best candidate among
        $T_3^*,T_4^*$ and $T_5^*$.
      \item Show other candidate trees $T_k^*$ for $k=2$ and
        $k \geq 6$ are worse than $T_3^*$ or $T_5^*$.
      \item Show that there is no solution for the {\bf X3C} problem,
        \begin{eqnarray*}
          \sum_{(i,j) \in \set{E}_{T^\star}} I(\bar{P}_{i,j}) - D_{T^\star} \sum_{(i,j) \in \set{E}_{T^\star}} \bar{c}_{i,j}^\gamma > 2\alpha_1 -(11s + 3q)I_1 - 8 \kappa.
        \end{eqnarray*}
      \end{compactenum}
      \fi
  
    \begin{table}[t!]
      \centering
      \caption{Edge weight
        $I(\bar{P}_{i,j}) - D_T \cdot \bar{c}_{i,j}^\gamma$ with
        different $D_T$}
      \label{tbl:weight-diam}
      \begin{tabular}{|c|c|c|c|}
        \hline 
        Edge type & Weight for $D_T = 3$ & $D_T = 4$ & $D_T = 5$ \\ 
        \hline 
        \textsf{T1} & $-2I_{1}$ & $-3I_{1}$ & $-4I_{1}$ \\
        \hline 
        \textsf{T2} & $-\frac{9}{4}I_{1} + \frac{1}{4}I_{3}$ & $-3I_{1}$ & $-\frac{15}{4}I_{1} - \frac{1}{4}I_{3}$ \\
        \hline 
        \textsf{T3} & $-\frac{15}{4}I_{1} - \frac{3}{4}I_{2} + \frac{1}{4}I_{3}$ & $-5I_{1}-I_{2}$ & $-\frac{25}{4}I_{1} - \frac{5}{4}I_{2} - \frac{1}{4}I_{3}$ \\
        \hline 
        \textsf{T4} & $-\frac{9}{4}I_{1} + \frac{1}{4}I_{2}$ & $-3I_{1}$ & $-\frac{15}{4}I_{1} - \frac{1}{4}I_{2}$ \\
        \hline 
        \textsf{T5} & $-\frac{7}{2}I_{1}$ & $-5I_{1}$ & $-\frac{13}{2}I_{1}$ \\
        \hline 
        \textsf{T6} & $-\frac{33}{8}I_{1} + \frac{1}{4}I_{2}$ & $-\frac{11}{2}I_{1}$ & $-\frac{55}{8}I_{1} - \frac{1}{4}I_{2}$ \\
        \hline 
        \textsf{T7} & $-\frac{33}{8}I_{1} + \frac{1}{4}I_{2}$ & $-\frac{11}{2}I_{1}$ & $-\frac{55}{8}I_{1} - \frac{1}{4}I_{2}$ \\
        \hline 
        \textsf{T8} & $\alpha_{1} - 3\kappa$ & $\alpha_{1} - 4\kappa$ & $\alpha_{1} - 5\kappa$ \\
        \hline 
        \textsf{T9} & Less than  $I_{1} - 3\kappa$ & $I_{1} - 4\kappa$ & $I_{1} - 5\kappa$ \\
        \hline 
      \end{tabular}
    \end{table}

    \smallskip
    \noindent {\bf \em Find $T_3^*, T_4^*$ and $T_5^*$.} First, we
    decide $T_3^*, T_4^*$ and $T_5^*$ under the constructed joint
    distribution $\bar{P}(x)$ and communication costs $\bar{c}$. Under
    our scenario, from the fact that
    $\delta_d = \frac{\alpha_1 - \alpha_2}{\sqrt{4(3s+q)}}, \kappa =
    \frac{9}{8}sI_1$ and $I_k \cong (2 \delta_d)^{2k}$, we have
    following order of values, (for large $d$)
    \begin{align*}
      \alpha_1 > \alpha_2 > \kappa > I_1 > I_2 > I_3.
    \end{align*}
    Then, it is obvious that these trees must include two specific
    edges in \textsf{T8}, \ie, $(Z_0,Z_1)$ and $(Z_1,Z_2)$, since
    edges in \textsf{T8} have the largest weight compared to all other
    edges.

    % \note{For $D_T = 3$,
    %   $\textsf{T8} > \textsf{T1} > \textsf{T4} > \textsf{T2} >
    %   \textsf{T5} > \textsf{T6} = \textsf{T7} > \textsf{T3},
    %   \textsf{T9}$ }

    We now decide the structure of $T_3^*=(\set{V},\set{E}_3^*)$. From
    the weight values of each type of edges in
    Table~\ref{tbl:weight-diam}, it is clear that the edge set
    $\set{E}_3^*$ woulde be given by:
    \begin{align*}
      \set{E}_3^* = \textsf{T5} \cup \textsf{T6} \cup \textsf{T8},
    \end{align*}
    then its objective value is
    \begin{align*}
      \set{O}^\gamma(T_3^*) &= \textsf{T5} \times q + \textsf{T6} \times 3s + \textsf{T8} \times 2 \cr
      &= (-\frac{7}{2}I_1) \times q + (-\frac{33}{8}I_1 + \frac{1}{4}I_2) \times 3s + (\alpha_1 - 3 \kappa) \times 2 \cr
      &= -(\frac{99}{8}s + \frac{7}{2}q)I_1 + \frac{3}{4}sI_2 + 2\alpha_1 - 6\kappa.
    \end{align*}

    % \note{For $D_T = 5$,
    %   $\textsf{T8} > \textsf{T2} > \textsf{T4} > \textsf{T1} >
    %   \textsf{T5} > \textsf{T6} = \textsf{T7}, \textsf{T3},
    %   \textsf{T9}$ }
    
    For the structure of $T_5^*=(\set{V},\set{E}_5^*)$, it should be
    constructed as follows. The element nodes prefer to be connected
    to a subset node as \textsf{T2} as possible, then to be connected
    as \textsf{T1}. Among the subset nodes, there is a unique center
    subset node and all other subset nodes are connected to the center
    node by \textsf{T4}, where the center subset node is connected to
    $Z_0$, \ie, \textsf{T5}. Finally, two edges in \textsf{T8} are
    included the tree. Now, the lower bound of the objective value is
    given by
    \begin{align*}
      \set{O}^\gamma(T_5^*) &= \textsf{T2} \times 3s + \textsf{T4} \times (q-1) + \textsf{T5} + \textsf{T8} \times 2 \cr
      &\geq -(\frac{45}{4}s + 3q + \frac{9}{4})I_1 - \frac{9}{4}I_2 + 2\alpha_1 - 10\kappa.
    \end{align*}

    % \note{check..}
    We also highlight that the MWST for the weights
    $I(\bar{P}_{i,j}) - 5 \bar{c}_{i,j}^\gamma$ (\ie, without
    constraint on the diameter of the constructed tree) is constructed
    to have a diameter $5$ reveals that
    $\set{O}^\gamma(T_5^*) \geq \bar{\set{O}}^\gamma_5(T_6^*) \geq
    \set{O}^\gamma(T_6^*)$. In consequence, it does not suffice to
    consider $T_k^*, \forall k \geq 6$.

    % \note{For $D_T = 4$,
    %   $\textsf{T8} > \textsf{T1} = \textsf{T2} = \textsf{T4} >
    %   \textsf{T5} > \textsf{T3} > \textsf{T6} = \textsf{T7},
    %   \textsf{T9}$} %if $I_1 < 0.5$

    Finally, we obtain the structure of
    $T_4^*=(\set{V},\set{E}_4^*)$. As mentioned, two edges of
    \textsf{T8}, $(Z_0,Z_1)$ and $(Z_1,Z_2)$, are included in
    $\set{E}_4^*$. As seen in Table~\ref{tbl:weight-diam}, edges of
    \textsf{T1}, \textsf{T2} and \textsf{T4} have larger weight than
    others, we need to contain as many edges of \textsf{T1},
    \textsf{T2} and \textsf{T4} as possible. Therefore, there are two
    candidates for $T_4^*$. The first candidate tree, denoted by
    $T_4^{*,1}=(\set{V}, \set{E}_4^{*,1})$, is the tree where exactly
    $s$ number of subset nodes, denoted by $Y'$, are connected to
    $Z_0$ by \textsf{T5} and each subset node in $Y'$ is connected to
    exactly $3$ element nodes in $X$ by \textsf{T1} or \textsf{T2}. To
    retain a diameter of $4$, remaining $q-s$ number of subset nodes
    in $Y \setminus Y'$ are connected with each other by constructing
    edges in \textsf{T4}. The corresponding objective value is given
    by
    \begin{eqnarray} \label{eq:obj1-diam4}
      \set{O}^\gamma(T_4^{*,1}) &=& (\textsf{T1} ~\text{or}~ \textsf{T2}) \times 3s + \textsf{T4} \times (q-s) + \textsf{T5} \times s + \textsf{T8} \times 2 \cr
      &=& -(11s + 3q)I_1 + 2 \alpha_1 - 8 \kappa.
    \end{eqnarray}

    It is now clear that this optimal tree $T_4^{*,1}$ exists if and
    only if there is a solution of {\bf X3C} problem. In particular,
    finding optimal tree $T_4^{*,1}$ with objective value of
    $-(11s + 3q)I_1 + 2\alpha_1 - 8\kappa$, sub-collection
    $Y' \subset Y$ forms an exact cover of all element nodes in $X$.

    Second candidate tree, denoted by
    $T_4^{*,2}=(\set{V},\set{E}_4^{*,2})$, is the tree where only one
    center subset node is connected to $Z_1$ by \textsf{T9}. Then,
    there exist $3s$ number of edges in \textsf{T2} and $q-1$ number
    of edges in \textsf{T4}. The lower bound of the objective value of
    this tree is given by
    \begin{align} \label{eq:obj2-diam4}
      \set{O}^\gamma(T_4^{*,2}) &\geq \textsf{T2} \times 3s + \textsf{T4} \times (q-1) + \textsf{T9} + \textsf{T8} \times 2 \cr
      &\geq -(9s + 3q -2)I_1 + 2\alpha_1 - 12 \kappa.
    \end{align}
    
    From \eqref{eq:obj1-diam4} and \eqref{eq:obj2-diam4}, we can
    observe that for $\kappa = \frac{9}{8}sI_1$, the optimal solution
    is attained at $T_4^{*,1}$, \ie,
    $\set{O}^\gamma(T_4^{*,1}) >
    \set{O}^\gamma(T_4^{*,2})$. Therefore, when solving the problem
    {\bf SYNC}$({\bm n})$ in \eqref{eq:CDG-learn-sync} to prove
    optimality on an instance constructed scenario of
    $\bar{P}, \bar{c}$, the question of the {\bf X3C} problem can now
    simply be answered, \ie, the problem {\bf SYNC}$({\bm n})$ is
    polynomially reducible to {\bf X3C} problem, which completes the
    proof of Lemma.
  \end{proof}
  The proof of NP-hardness of {\bf SYNC}$({\bm n})$ in
  \eqref{eq:CDG-learn-sync} is a direct consequence of
  Lemma~\ref{lem:red-x3c}.

\end{proof}

%%% Local Variables: 
%%% mode: latex
%%% TeX-master: "main"
%%% End: 

 \subsection{Proof of Theorem~\ref{thm:eexponent-async}}
\label{sec:proof-async-alg}

The proof of Theorem~\ref{thm:eexponent-async} is quite similar and
straightforward to that in \cite{tan2011tree}.

\vspace{1cm}

%%% Local Variables: 
%%% mode: latex
%%% TeX-master: "main"
%%% End: 

 % \subsection{Proof of Theorem~\ref{thm:eexponent-sync}}
% \label{sec:proof-sync-alg}

\subsection{Proof of
  Theorem~\ref{thm:eexponent-sync}} \label{sec:proof-sync-alg}

% For a fixed sub-tree $T_S = (\set{V}_S, \set{E}_S)$, define a subset
% of node pairs with exactly one node in $\set{V}_S$, \ie,
% \begin{align} \label{eq:possedge} \set{E}_c(T_S) =
%   \set{E}_c(\set{E}_S) \defeq \{ (i,j): i \in \set{V}_S, j \in \set{V}
%   \setminus \set{V}_S \}.
% \end{align}

\begin{proof}[Proof.]

  \noindent {\bf \em (i) Crossover rate.}
  We present our proof into following $4$ steps. In {\bf Step 1}, we
  prove the existence of the crossover rate $J_{e,e'}(T)$, and in {\bf
    Step 2}, we show the expression of $J_{e,e'}(T)$ in
  \eqref{eq:crate-sync}. We then prove the existence of the optimizer
  $Q^*$ and show that $J_{e,e'}(T) > 0$ in {\bf Step 3}.

  \smallskip
  \noindent \underline{{\bf Step 1.}} First, we recall the definition
  of the crossover rate:
  \begin{align*}
    J_{e,e'}(T) &= \lim_{n \rightarrow \infty} - \frac{1}{n} \log \mathbb{P}(C_{e,e'}(T)) \cr
    &= \lim_{n \rightarrow \infty} - \frac{1}{n} \log \mathbb{P}\left( \left\{ w^{\tt Sync}_e(T,\hat{P}({\bm x}^{1:n})) \leq w^{\tt Sync}_{e'}(T,\hat{P}({\bm x}^{1:n}))  \right\}  \right).
  \end{align*}
  First, if $|r(T,e) - r(T,e')| \geq |\set{X}| \log |\set{X}|$, the
  crossover event $C_{e,e'}(T)$ is obviously empty set, since
  $|\set{X}| \log |\set{X}|$ is the maximum of the mutual information
  $I_e(\hat{P})$, which results the crossover rate
  $J_{e,e'}(T) = \infty$.

  Now, assuming that $|r(T,e) - r(T,e')| < |\set{X}| \log |\set{X}|$,
  to prove the existence of the limit, we define the set
  $\set{R} \subset \set{P}(\set{X}^4)$ by
  \begin{align*}
    \set{R} \defeq \left\{ Q \in \set{P}(\set{X}^4): w^{\tt Sync}_e(T,Q) \leq w^{\tt Sync}_{e'}(T,Q)  \right\},
  \end{align*}
  then, we can re-express the crossover rate as
  \begin{align*}
    J_{e,e'}(T) = \lim_{n \rightarrow \infty} - \frac{1}{n} \log \mathbb{P}(\hat{P}_{e,e'} \in \set{R}).
  \end{align*}
  If $\set{R}$ is closed and
  $\set{R} = \text{cl}(\text{int}(\set{R}))$, then following is a
  direct consequence of the Sanov's theorem \cite{bucklew90ldp},
  \begin{align*}
    J_{e,e'}(T) &= \lim_{n \rightarrow \infty} - \frac{1}{n} \log \mathbb{P}(\hat{P}_{e,e'} \in \set{R}) = \inf_Q \{ D_{\text{KL}}(Q \parallel P_{e,e'}): Q \in \set{R} \}.
    % &= \inf_Q \{ D_{\text{KL}}(Q \parallel P_{e,e'}) : w^{\tt Sync}_e(T,Q) \leq w^{\tt Sync}_{e'}(T,Q) \}.
  \end{align*}
  % which is equivalently
  % \begin{align*}
  %   J_{e,e'}(T) = \inf_Q \{ D_{\text{KL}}(Q \parallel P_{e,e'}) : w^{\tt Sync}_e(T,Q) \leq w^{\tt Sync}_{e'}(T,Q) \}.
  % \end{align*}
  It suffices to shat that $\set{R}$ is closed and
  $\set{R} = \text{cl}(\text{int}(\set{R}))$, to complete the proof of
  the existence of the limit. First, for a fixed tree $T$, we define a
  function
  % $h(Q): \set{P}(\set{X}^4) \mapsto \mathbb{R}$ as:
  $h(Q) \defeq w^{\tt Sync}_{e'}(T,Q) - w^{\tt Sync}_e(T,Q)$, which is
  a continuous function because the mapping $Q \mapsto Q_e$ and
  $I(Q_e)$ are continuous for any edge $e$, and the term $r(T,e)$ is
  constant on a fixed $e$. Therefore, $\set{R}$ is a closed set
  because it is an inverse image of a closed set by continuous
  function $h(Q)$. Second, it is trivial that
  $\set{R}' \defeq \{ Q \in \set{P}(\set{X}^4): w^{\tt Sync}_e(T,Q) <
  w^{\tt Sync}_{e'}(T,Q) \}$ is a subset of $\text{int}(\set{R})$ from
  the fact that $h(Q)$ is continuous. Now, we choose an arbitrary
  distribution $M \in \set{R} \setminus \set{R}'$, where we can
  regulate either $I_e(M)$ or $I_{e'}(M)$ very slightly while
  maintaining the mutual information of the other edge, \ie,
  $I_e(M') = I_e(M) - \delta$ and $I_{e'}(M') = I_{e'}(M) + \delta$
  for sufficiently small positive number $\delta$. This distribution
  $M'$ is included in $\set{R}'$, since it still satisfies
  $w^{\tt Sync}_e(T,M') < w^{\tt Sync}_{e'}(T,M')$. We can always
  specify $M' \in \set{R}'$ which converges to $M$ as
  $\delta \rightarrow 0$, which concludes that $\set{R}$ is the
  closure of $\set{R}'$. Finally, from the Sanov's theorem, we have
  the following expression of the crossover rate:
  \begin{eqnarray} \label{eq:crate-sync-1}
    J_{e,e'}(T) = \inf_{Q \in \set{P}(\set{X}^4)} \{ D_{\text{KL}}(Q \parallel P_{e,e'}) : w^{\tt Sync}_e(T,Q) \leq w^{\tt Sync}_{e'}(T,Q) \}.
  \end{eqnarray}

  \smallskip
  \noindent \underline {\bf Step 2.} In {\bf Step 2}, we show that if
  the optimal solution of \eqref{eq:crate-sync-1}, denoted by $Q^*$,
  exists, then it is attained when
  $w^{\tt Sync}_e(T,Q^*) = w^{\tt Sync}_{e'}(T,Q^*)$. We prove this by
  contradiction. Suppose there is an optimal distribution $M^*$ which
  minimizes the $D_{\text{KL}}(M^{*}_{e,e'} \parallel P_{e,e'})$ and
  satisfies $W(M^{*}_{e'})>W(M^{*}_{e})$. For $\lambda \in [0,1],$
  consider
  $M^\lambda_{e,e'} := (1-\lambda) M_{e,e'}^* + \lambda P_{e,e'}.$ As
  $\lambda$ increases from $0$ to $1$, $h(M_{e,e'}^\lambda)$ moves
  from $h(M_{e,e'}^*)$ to $h(P_{e,e'})$ where $h(P_{e,e'})$ must be
  smaller than $0$. $h(M_{e,e'}^\lambda)$ is a continuous function
  with respect to $\lambda$. Thus, there must be $\lambda \in (0,1)$
  such that $h(M_{e,e'}^\lambda)$ equals to $0$ and
  $M_{e,e'}^{\lambda} \in R \setminus R'$.
  % Thus there is a sufficient small positive $\delta$ such that
  % $$ N_{\delta}(M_{e,e^{'}}^{*}):= \{ M_{e,e^{'}}: |M_{e,e^{'}}-
  % M_{e,e^{'}}^{*}|_{ \infty } < \delta\}$$
  %% which satisfies $N_{\delta}(Q_{e,e^{'}}^{*}) \subset R^{'}$.
  % Let
  % $M_{e,e^{'}}^{**}:=
  % (1-\frac{1}{2}\delta)M_{e,e^{'}}^{*}+\frac{1}{2}\delta
  % P_{e,e^{'}}$ and it is obvious that $M_{e,e^{'}}^{**}\in R^{'}$.
  We can use the convexity of the KL-divergence to prove the
  contradiction.
  \begin{equation}
    \begin{split}
      & D_{\text{KL}}(M^{\lambda}_{e,e'} \parallel P_{e,e'}) = D_{\text{KL}}\big( (1-\lambda)M_{e,e'}^{*}+ \lambda \cdot P_{e,e'} \parallel P_{e,e'} \big) \\
      & \quad \leq (1-\lambda)D_{\text{KL}}(M_{e,e'}^{*} \parallel P_{e,e'})+ \lambda \cdot D_{\text{KL}}(P_{e,e'} \parallel P_{e,e'}) \\
      & \quad \quad (\because \text{Convexity of the KL-divergence}) \\
      & \quad = (1-\lambda)D_{\text{KL}}(M_{e,e'}^{*} \parallel P_{e,e'})
       < D_{\text{KL}}(M_{e,e'}^{*} \parallel P_{e,e'}).
    \end{split}
  \end{equation}
  Therefore, we can derive the contradiction of the assumption about
  the existence of $M^{\lambda}_{e,e'}$ which is an element of
  $R \setminus R'$ and has smaller KL-divergence than
  $M^{*}_{e,e'}$. The conclusion in \textbf{Step 2}
  is \begin{equation} \label{eq:infRinfR'} \inf _{M \in R}
    D_{\text{KL}}(M_{e,e'} \parallel P_{e,e'})= \inf _{M \in R
      \setminus R'} D_{\text{KL}}(M_{e,e'} \parallel P_{e,e'}).
  \end{equation}

  \smallskip
  \noindent \underline{\bf Step 3.} Continuing from \textbf{Step 2},
  we should show the existence of the minimizer $Q^{*}_{e,e'}$. If we
  can prove the compactness of $R \setminus R'$, we can get the
  existence of the minimizer $Q^{*}_{e,e'}$ in $R \setminus R'$ from
  Weierstrass' extreme value theorem. Therefore, by combining the
  result of \textbf{Step 2} or the equation (\ref{eq:infRinfR'}), we
  can derive the existence of the minimizer $M^{*}_{e,e'}.$ To prove
  the compactness of the $R \setminus R'$, we exploit Heine-Borel
  theorem and show the $R \setminus R'$ is bounded and closed. The
  boundedness is obvious because
  $\mathcal{P}(\mathcal{X}^{4}) \subset
  [0,1]^{|\mathcal{X}|^{4}}$. The closedness is also obvious from the
  fact that $R \setminus R'=h^{-1}(\{0\})$.

  Finally, we need to prove $J_{e,e'}(T)>0$. Use
  contradiction. Suppose $J_{e,e'}(T)=0$. That means
  $$\inf_{Q \in \set{P}(X^{4})} \{D_{\text{KL}}(Q \parallel P_{e,e'}): W(Q_{e})=W(Q_{e'})\}=0.$$ Also in step 3, we find the existence of the minimizer $Q^{*}_{e,e'}$. That means
  $D_{\text{KL}}(Q^{*}_{e,e'} \parallel P_{e,e'})=0$, so
  $Q^{*}_{e,e'} \equiv P_{e,e'}$ and $W(P_{e})=W(P_{e'}).$
  $W(P_{e})=W(P_{e'})$ is a contradiction from the assumption
  $W(P_{e})>W(P_{e'}).$

  Consequently, from {\bf Step 1}, {\bf Step 2} and {\bf Step 3}, we
  complete the proof of Theorem~\ref{thm:eexponent-sync} {\em (i)}.
  	
%\end{proof}

% \subsection{Proof of Theorem~\ref{thm:eexponent-sync} (ii),
%   (iii)} \label{sec:proof-sync-eexponent}

%\begin{proof}

  \smallskip
  \noindent {\bf \em (ii) Error exponent.}
  We first get
  \begin{eqnarray} \label{eq:aunion-sync} \mathcal{A}_n(\gamma)
    \subset \bigcup_{t=1}^{d-1} \bigcup _{e' \in
      \set{E}_c(\textsf{T}_t) \setminus \textsf{e}_t}
    C_{\textsf{e}_t,e'}(\textsf{T}_{t}).
  \end{eqnarray}
  Then, we will prove that for each step $t$ of algorithm
  ${\tt LearnSync}$, the below is correct,
  \begin{align} \label{eq:ceventprob-sync}
    \mathbb{P}(C_{\textsf{e}_t,e'}(\textsf{T}_t)) \leq \binom{n-1+
      |\mathcal{X}|^{4}}{|\mathcal{X}|^{4}-1}\exp(-n \cdot
    J_{\textsf{e}_t,e'}(\textsf{T}_{t})).
  \end{align}
  We can get the result \eqref{eq:sbound-sync} from the equations
  \eqref{eq:aunion-sync} and \eqref{eq:ceventprob-sync}.

  % \noindent \underline{\bf Step 1.}

  We can regard $\mathcal{A}_{n}(\gamma)$ be a union of the events
  that the SYNC algorithm finds the wrong tree-structure rather than
  the true tree-structure $T$ at the final step $d$ under the $n$
  samples. To find the wrong tree by the SYNC algorithm, we must find
  wrong edge $e'$ among $\set{E}_c(T_t) \setminus e_{t}$ which is the
  set of the possible edges at some step $t$ for
  $t=1,...,d-1$. Therefore we can easily get the eq
  (\ref{eq:aunion-sync}). Unfortunately, we can not tell this union of
  the events exactly equals to $\mathcal{A}_{n}(\gamma)$. This is
  because we can choose the edges in kinds of reverse order. For
  example, we may choose $e_{t+1}$ at the step $t$ and choose $e_{t}$
  at the step $t+1$. Although this translation does not affect after
  the step $t+2$ or the final step, it is regarded as an error and
  included in $C_{e_{t},e_{t+1}}(T_{t})$ and
  $C_{e_{t+1},e_{t}}(T_{t})$ which are calculated as an error event.

  \smallskip
  \noindent {\bf \em (iii) Error probability.}
  % \noindent \underline{\bf Step 2.}
  We derive the \eqref{eq:ceventprob-sync} from the result of (finite
  domain) Sanov's theorem in \cite{dembo2010ld}
  \begin{equation} \label{eq:sanovLn} \mathbb{P}(C_{e_{t},e'}(T_{t}))
    \leq |\mathcal{L}_{n}|\exp(-n \cdot J_{e_{t},e'}(T_{t})),
  \end{equation}
  where $|\mathcal{L}_n|$ represents the number of the possible
  empirical distributions
  $\hat{P}_{e,e'} \in \mathcal{P}(\mathcal{X}^4)$ by $n$ samples. In
  \cite{dembo2010ld}, they use $(n+1)^{|\set{X}^4|}$ as upper bound of
  $|\set{L}_n|$. Although this bound is enough to prove the aysmptotic
  property of Sanov's theorem, it is still very loose bound. We find
  exact value as
  $|\mathcal{L}_{n}|= \binom{n-1+
    |\mathcal{X}|^{4}}{|\mathcal{X}|^{4}-1}. $ Suppose that you order
  $|\mathcal{X}|^{4}-1$ number of black balls and $n$ number of white
  balls in a row. The white balls would be divided into
  $|\mathcal{X}|^{4}$ partitions by the black balls. Let
  $a_{1},a_{2},...,a_{|\mathcal{X}|^{4}}$ be the number of the white
  balls in the each partitions by the black balls. It is obvious that
  $\sum_{i=1}^{|\mathcal{X}|^{4}} a_{i}=n$ and we can regard
  $(\frac{1}{n}a_{1},\frac{1}{n}a_{2},...,\frac{1}{n}a_{|\mathcal{X}|^{4}})$
  as an empirical distribution by $n$ samples on $\mathcal{X}^{4}.$
  Therefore, the number of the combinations for the black balls and
  the white balls
  $\binom{n-1+ |\mathcal{X}|^{4}}{|\mathcal{X}|^{4}-1}$ equals to
  $|\mathcal{L}_{n}|.$ By substituting $|\mathcal{L}_{n}|$ into the eq
  (\ref{eq:sanovLn}), we get
  \begin{align}
    \mathbb{P}(C_{e_{t},e'}(T_{t})) &\leq  \binom{n-1+ |\mathcal{X}|^{4}}{|\mathcal{X}|^{4}-1} \exp(-n \cdot J_{e_{t},e'}(T_{t})) \cr &\leq \binom{n-1+ |\mathcal{X}|^{4}}{|\mathcal{X}|^{4}-1} \exp(-n \cdot K(\gamma)).
  \end{align} 
  From the fact that, we need to choose an edge among the edges
  between $t$ nodes in $T_{t}$ and $d-t$ nodes.  At the step
  $t$ under $T_{t}$, we have $t(d-t)$ candidates of edges in
  $\set{E}_c(T_t)$. We finally get
  \begin{equation}
    \begin{split}
      \mathbb{P}(\mathcal{A}_{n}(\gamma)) & \leq \sum_{t=1}^{d-1} \sum_{e' \in \set{E}_c(T_t) \setminus e_{t}} \mathbb{P}(C_{e_{t},e'}(T_{t})) \cr  & \leq  \left(\sum_{t=1}^{d-1} t(d-t) \right)\binom{n-1+ |\mathcal{X}|^{4}}{|\mathcal{X}|^{4}-1} \exp(-n K(\gamma))  \\
      & = \frac{(d-1)d(d+1)}{6}\binom{n-1+
        |\mathcal{X}|^{4}}{|\mathcal{X}|^{4}-1} \exp(-n K(\gamma)).
    \end{split}
  \end{equation}
  The lower bound for the error exponent rate for
  $\mathbb{P}(\mathcal{A}_{n}(\gamma))$ can be derived as
  \begin{align*}
    \liminf_{n \rightarrow \infty} -\frac{1}{n} \log \mathbb{P}(\mathcal{A}_{n}(\gamma)) \geq K(\gamma).
  \end{align*}

  We highlight that as $n \rightarrow \infty$, the resulting tree
  structure learned by SYNC algorithm equals to the {\em sub-optimal}
  cost efficient tree $\tilde{P}^{\gamma}(x)$ with respect to
  $\tilde{T}(\gamma) = (\set{V}, \tilde{\set{E}}(\gamma))$, with
  probability $1$.

\end{proof}

\begin{comment}
\subsection{Corollary: Sanov's Theorem}

\begin{theorem}
  $A$ is a set of the probability distribuitions over a finite set $X$
  (It seems that there was separable metric space $\mathcal{X}$
  version sanov theorem), the samples $(x_{1},x_{2},...,x_{n})$ are
  drawed from $q$, and we can make the empirical distribution
  $\hat{p}$ by using the statistics of
  $(x_{1},x_{2},...,x_{n})$. Suppose that the $\hat{p} \in A$, then
  $$ q(x_{1},x_{2},...,x_{n}) \leq (n+1)^{|X|}
  2^{-nD_{\text{KL}}(p^{*} \parallel q)}$$ where the $p^{*}$ is the
  information projection of the $q$ from $A$.
\end{theorem}

\begin{corollary}
  Furthermore, if $A$ is the closure of its interior,
  $$ \lim_{n \rightarrow \infty} \dfrac{1}{n} \log
  q^{n}(x^{n})=-D_{KL}(p^{*} \parallel q).$$
\end{corollary}

\end{comment}

%%% Local Variables: 
%%% mode: latex
%%% TeX-master: "main"
%%% End: 

%\newpage

\end{document}